\title{Stability of Deep Neural Networks via discrete rough paths}
\author[C. Bayer]{Christian Bayer$^1$, Peter Friz$^{1,2}$ and Nikolas Tapia$^{1,2}$}
\address{$^1$Weierstrass Institute}
\address{$^2$TU Berlin}
\email{christian.bayer@wias-berlin.de, friz@math.tu-berlin.de, tapia@wias-berlin.de}
\let\letter\mathsf
\def\b{\mathrm{b}}
\def\bbW{\mathbb{W}}
\def\bW{\mathbf{W}}
\def\bw{\mathbf{w}}
\def\bx{\mathbf{x}}
\def\by{\mathbf{y}}
\def\bz{\mathbf{z}}
\def\W[#1]{[\letter{#1}]}
\def\t|#1|{%
  \mathopen{\lvert\mkern-3mu\lvert\mkern-3mu\lvert}%
    #1%
  \mathclose{\rvert\mkern-3mu\rvert\mkern-3mu\rvert}%
}
\newcommand{\christian}[1]{#1}
\begin{document}
\maketitle
\begin{abstract}
Using rough path techniques, we provide \emph{a priori} estimates for the output of Deep Residual Neural Networks in terms of both the input data and the (trained) network weights. As trained network weights are typically very rough when seen as functions of the layer, we propose to derive stability bounds in terms of the total \(p\)-variation of trained weights for any \(p\in[1,3]\). Unlike the $C^1$-theory underlying the neural ODE literature, our estimates remain bounded even in the limiting case of weights behaving like Brownian motions, as suggested in [Cohen, Cont, Rossier, Xu: ``Scaling properties of deep residual networks'', arXiv, 2021]. Mathematically, we interpret residual neural network as solutions to (rough) difference equations, and analyze them based on recent results of discrete time signatures and rough path theory.
\end{abstract}

\section{Introduction}
\begin{wrapfigure}{r}{0.4\textwidth}
	\begin{center}
	\includegraphics[width=0.2\textwidth]{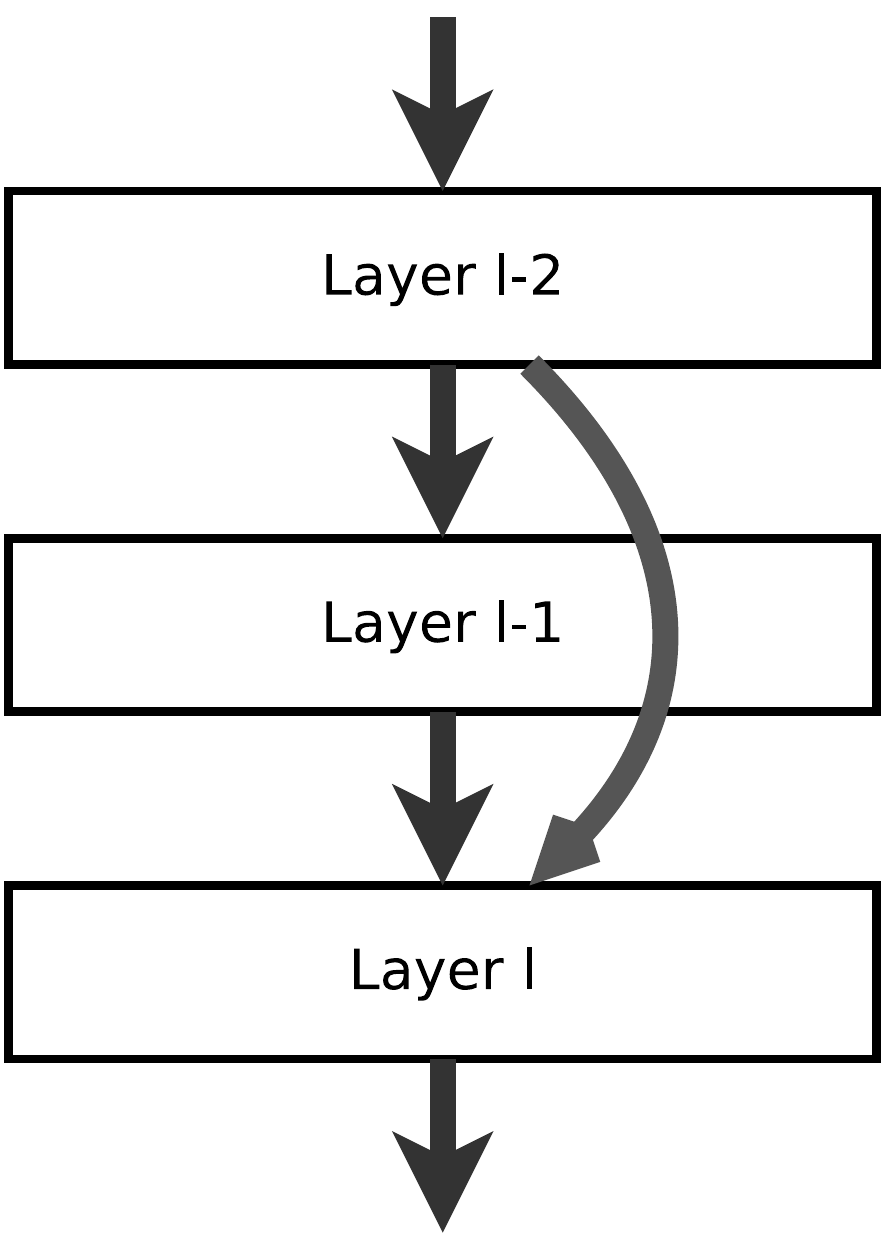}
\end{center}
\caption{Single block of the ResNet architecture}
\label{fig:rnn}
\end{wrapfigure}

Since their introduction in 2016 \cite{HZRS16}, Residual Neural Networks (ResNets) have gained a vast amount of popularity as a preferred network
architecture for Machine Learning applications.  The general principle is that this architecture allows for deeper networks since it models only the
residual change of the features at the output of each layer.  This is achieved by introducing ``skip connections'' which -- at some steps -- adjust
the output of a layer by adding an earlier layer's output (see \Cref{fig:rnn}).
These ``blocks'', formed by a sequence of layers connected by an identity mapping, are then stacked on top of each other in order to build the
network.

The authors of the previously cited paper argue that this helps precondition the optimization solvers so that increasing the network depth does
not result in severe numerical instabilities and performance degradation, as is observed in plain Neural Networks.  In particular, this approach
allows them to successfully train a Deep Neural Network with hundreds and even thousands \cite{HZR+2016} of layers.

In a plain Neural Network, the input vector \(\by_{i+1}\) of the \((i+1)\)-th hidden layer is given by an application of the weights and the
activation function to the input of the previous hidden layer. In symbols
\[
\by_{i+1}=\sigma(\theta_i\by_i)
\]
where
\(\sigma\colon\R^{d_{i+1}}\to\R^{d_{i+1}}\) and \(\theta_i\) is a \(d_{i+1}\times d_i\) matrix.
In the ResNet approach, this is modified so that the output to the next hidden layer is given as the sum of the \emph{input} to the previous layer,
plus the previous operations; that is, \begin{equation} \by_{i+1}=\by_i+\sigma(\theta_i\by_i).  \label{eqn:rnn.upd} \end{equation} Here, it is assumed
that the width of all layers is constant, but the approach can easily be adapted to the more familiar setting of varying widths by applying an
appropriate projection to right-hand side of the last equation.

\begin{remark}
We simplify notation by leaving out the bias term in the update rule
\eqref{eqn:rnn.upd}. The usual update rule \begin{equation*} \by_{i+1} = \sigma(\theta_i
\by_i + b_i) \end{equation*} can be reproduced in the form \eqref{eqn:rnn.upd} above
by adding a column of consisting of ones to $\by_i$ and an appropriate restriction on
$\theta_i$ to map that column to another column of ones -- in the appropriate dimension.
\end{remark}

\begin{remark}
In this work, we assume that the architecture follows the update
\eqref{eqn:rnn.upd} at each layer. In the engineering practice, usually a few layers are
skipped over. i.e. the true update may look as follows:
\begin{equation*}
  \widetilde{\by}_i = \sigma(\widetilde{\theta}_i \by_i), \quad \by_{i+1} =
  \by_i + \sigma(\theta_i \widetilde{\by}_i),
\end{equation*}
skipping over one layer in the process.
\end{remark}

It has been argued by several authors \cite{W2017,HR18,HRH+2018} that the update in \cref{eqn:rnn.upd} can
be seen as a step of the Euler scheme for a \emph{controlled ODE} of the form
\begin{equation}
  \dot{\by}(t)=\sigma(\theta(t)\by(t)),\quad \by(0)=\by_0.
  \label{eqn:rnn.ode}
\end{equation}
Then, knowledge of stability and convergence of numerical schemes for such systems
can be used to derive corresponding results for ResNets, especially since one expects that the
behavior of the output layer of the network under consideration will follow closely that
of the continuous-time solution of \cref{eqn:rnn.ode} (that is, in the limit of infinite depth) for very deep architectures.

In this work, we go back one step and consider the situation of ResNets with many, but finitely many layers.
Specifically, we consider finite difference equations of the form
\begin{equation}
  \bx_{k+1}=\bx_k+\sum_{\mu=1}^df_\mu(\bx_k)(\bw^\mu_{k+1}-\bw^\mu_k),\quad\bx_0=\xi\in\R^m.
  \label{eq:creintro}
\end{equation}
Here, in the simplest case of constant dimension $d$, $\bx_k$ denotes the vector of nodes at layer $k$ (corresponding to $\by_k$ above),
	and the increment matrix $(\bw^\mu_{k+1} - \bw^\mu_k)_{\mu = 1}^d$ corresponds to the matrix $\theta_k \in \R^{d \times d}$. Finally, the vector
	fields $f_\mu:\R^d \to \R^d$ take care of the matrix-vector-multiplication as well as of the non-linear activation function $\sigma$. We assume that
	the number of nodes $d$ is constant over all the layers. For a more general and detailed view of the setting, we refer to
	Appendix~\ref{sec:comp-arch}.

	Already from this very cursory look, the reader may notice an apparent difference between~\eqref{eqn:rnn.upd} and~\eqref{eq:creintro}: in
	the former formulation the nonlinearity is applied after the matrix multiplication, whereas the order of operations is reversed in our finite
	difference equation. However, when we consider a deep network, this difference essentially only effects the very first layer of the network, which is
	hit by the nonlinearity in~\eqref{eq:creintro} before any affine transform is applied. All other layers are treated exactly the same way by both
	architectures -- assuming that the non-linearity is not applied to the output layer, as is customarily the case.

Hence, it does not come as a surprise that both formulations are essentially equivalent, as also pointed out in \cite{MSK+2021}. We refer to Appendix~\ref{sec:comp-arch} for a detailed analysis in our setting.

As seen in \eqref{eq:creintro}, instead of using continuous-time techniques, our approach consists of analyzing the evolution of the sequence
\((\bx_0,\dotsc,\bx_N)\), where \(N\) is the depth of the network, obtained by iteration of \cref{eqn:rnn.upd} directly at the discrete level. Seeing~\eqref{eq:creintro} as discretization of an ODE amounts to assuming that the weight sequence comes from a $C^1$-path of finite variation. There are conceptual and numerical reasons, discussed below, that suggest a less restrictive view, formulated in the so-called \emph{$p$-variation scale}. Recall that the $p$-variation seminorm of a sequence $(\bw_0, \ldots, \bw_N)$ is given by
\[
  \|\bw\|_{p;[0,N]} \coloneqq \left( \max_{s\in\mathcal S_{0,N}} \sum_{j=0}^{\#s}|\bw_{s_{j+1}}-\bw_{s_j}|^p \right)^{1/p} \]
    where the maximum is taken over the set \(\mathcal S_{0,N}\) of all increasing subsequences \[ s=(s_0=0,s_1,\dotsc,s_m,s_{m+1}=N) \]
    of \(\{0,\dotsc,N\}\) and we have set \(\#s=m\) for such a sequence. We use analytic techniques borrowed from rough paths theory and the algebraic framework developed in \cite{DET2020} to contributes to our understanding of stability properties of deep neural networks. We have

\begin{theorem}
	Suppose \(\bx,\tilde{\bx}\) are two solutions to \cref{eq:cre} with initial conditions \(\xi,\tilde{\xi}\) and driven by \(\bw,\tilde{\bw}\)
	respectively.
\begin{itemize}  
    \item Let \(1\le p<2\) and \(f_1,\dotsc,f_d\in\cC^2_\b\). Then 
    \[
			\sup_{k=0,\dotsc,N}|\bx_k-\tilde{\bx}_k|\le 2c^{1/p}_{p,N}e^{c_{p,N}\|f\|_{\cC^2_\b}^p\|\tilde\bw\|_{p;[0,N]}^p}(|\xi-\tilde{\xi}|+\|f\|_{\cC^2_\b}\|\bw-\tilde\bw\|_{p;[0,N]})
    \]
		holds, where $c_{p,N}$ is explicitly given in \Cref{thm:main.young} below.
     \item 
    Let \(2\le p<3\) and 
    \(f_1,\dotsc,f_d\in\cC^3_\b\). Then
    \[
			\sup_{k=0,\dotsc,N}|\bx_k-\tilde{\bx}_k|\le
			2(c_{p,N}')^{1/p}e^{c_{p,N}\|f\|_{\cC^3_\b}^p\left(\t|\bW|^p_{p;[0,N]}+\t|\tilde\bW|_{p;[0,N]}^p\right)}(|\xi-\tilde{\xi}|+\|f\|_{\cC^3_\b}\rho_p(\bW,\tilde\bW))
    \]
		holds, where $c_{p,N}$ is again explicitly given in \Cref{thm:main.resnet} below.
       \end{itemize}
	\label{thm:main.resnetintro}
\end{theorem}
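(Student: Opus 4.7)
The plan is to deduce both bullets from a local stability estimate combined with a discrete Gronwall-type inequality in the $p$-variation scale. Writing $\bz_k \coloneqq \bx_k - \tilde{\bx}_k$, the difference satisfies an affine controlled difference equation: its ``drift'' is linear in $\bz$ via a mean-value-type expansion of $f_\mu(\bx) - f_\mu(\tilde{\bx})$, while its ``forcing'' is governed by $\bw - \tilde{\bw}$ in the Young regime and by the rough distance $\rho_p(\bW,\tilde{\bW})$ in the rough regime. The a priori bounds on $\|\bx\|_{p;[0,N]}$ in terms of $\|\bw\|_{p;[0,N]}$ (respectively $\t|\bW|_{p;[0,N]}$) that are presumably established earlier in the paper as part of the existence theory for \eqref{eq:creintro} feed directly into this reduction.

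For the first bullet ($1\le p<2$), I would apply the discrete Young integration theory from \cite{DET2020} to the affine equation for $\bz$. A first-order Taylor expansion of $f_\mu$ around $\tilde\bx_k$, combined with the Young estimate for the discrete sum $\sum_k (f_\mu(\bx_k)-f_\mu(\tilde\bx_k))(\bw^\mu_{k+1}-\bw^\mu_k)$, yields on any sub-interval an inequality of the schematic form $\|\bz\|_{p;[0,k]} \le C\bigl(|\xi-\tilde\xi| + \|f\|_{\cC^2_\b}\|\bw-\tilde\bw\|_{p;[0,N]}\bigr) + C\|f\|_{\cC^2_\b}\|\bz\|_{p;[0,k]}\|\tilde\bw\|_{p;[0,k]}$. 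The standard trick is a greedy partition of $\{0,\dotsc,N\}$ into sub-intervals on which $\|\tilde\bw\|_p$ is small enough that the last term can be absorbed into the left-hand side; iterating across these sub-intervals by telescoping then produces the exponential factor $\exp(c_{p,N}\|f\|_{\cC^2_\b}^p\|\tilde\bw\|_{p;[0,N]}^p)$. The constant $c_{p,N}$ simply counts the sub-intervals required by the greedy procedure, which is exactly the content of \Cref{thm:main.young}.

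For the second bullet ($2\le p<3$) one additionally needs the level-two iterated-sum picture. Expanding $\bx_{k+1}-\bx_k$ one order further produces, beyond the linear Young term $f_\mu(\bx_k)(\bw^\mu_{k+1}-\bw^\mu_k)$, a second-order term involving $Df_\nu(\bx_k)f_\mu(\bx_k)$ paired with the discrete iterated increment $\bbW^{\mu\nu}_{k,k+1}$, plus an Euler-type remainder of order $p/3>1$. The difference $\bz$ inherits an analogous level-two expansion in which both $\bw-\tilde\bw$ and $\bW-\tilde\bW$ enter linearly, modulated by $\bx$, $\tilde\bx$ and up to two derivatives of the $f_\mu$ — this is where the extra regularity $\cC^3_\b$ becomes necessary. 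The hard part, which is the real technical content of \Cref{thm:main.resnet}, is to show that this candidate expansion for $\bz$ is a \emph{bona fide} rough expansion: one must control the associated remainder uniformly in terms of $\rho_p(\bW,\tilde\bW)$ and of the individual norms $\t|\bW|_p, \t|\tilde\bW|_p$, which requires the discrete sewing/Davie-type lemma from \cite{DET2020} applied to the joint process $(\bx,\tilde\bx)$. Once this rough remainder is under control, the same greedy-partition Gronwall argument as in the Young case closes the estimate and yields the symmetric exponential $\exp\bigl(c_{p,N}\|f\|_{\cC^3_\b}^p(\t|\bW|_{p;[0,N]}^p + \t|\tilde\bW|_{p;[0,N]}^p)\bigr)$ with forcing $|\xi-\tilde\xi| + \|f\|_{\cC^3_\b}\rho_p(\bW,\tilde\bW)$, as claimed.
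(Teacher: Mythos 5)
Your proposal tracks the paper's own proof closely. You have correctly identified the three essential ingredients: a priori $p$-variation bounds on the solutions themselves (\Cref{thm:young.regime,thm:rough.apriori}), a discrete sewing estimate for the difference of the remainders (\Cref{proposition:sew,proposition:gensew}), and a greedy-partition absorption argument to close the self-referential estimate on $\bz = \bx - \tilde\bx$. The paper packages your ``greedy partition and telescope'' step as the standalone rough Gr\"onwall lemma, \Cref{prop:gronwall}, but the mechanism --- subdividing $\{0,\dotsc,N\}$ into maximal intervals on which the control $\omega_{k,l}$ is below a threshold $\alpha L$, bounding the number of such intervals by $\omega_{0,N}/(\alpha L)$, and accumulating the exponential factor across them --- is exactly what you sketch. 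One small imprecision: the constants $c_{p,N}$ do not merely ``count the sub-intervals''; they aggregate the sewing-lemma constants $C_{p,N} = 2^{2/p}\zeta_N(2/p)$ (resp.\ the analogous constants for $p\in[2,3)$) and the absolute constants from the absorption step, whereas the sub-interval count enters through the argument of the exponential. The conceptual structure of your argument is nonetheless the one the paper carries out.
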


The symbol \(\bW\) denotes the discrete signature lift of the weight sequence \(\bw\) appearing in \cref{eq:creintro} (see \Cref{sse:lift}) and
\(\t|\cdot|_{p}\) is an appropriately defined norm on the spaces of lifts (\Cref{thm:main.resnet}).  This inequality holds \emph{uniformly} over input
data.  In practice, the weight matrices are randomly initialized with random i.i.d. values so typically the trained weights are also random.  Our
estimates hold pathwise, in the sense that the depend only on a single initialization of the weight matrices.
Typically the size of the constants \(c_{p,N},c_{p,N}'\) appearing in \Cref{thm:main.resnetintro} can be very large, but they remain uniformly bounded
as \(N\to\infty\) for all fixed \(p\in[1,3)\).
The fact that these constant can take on large values is also a consequence of the pathwise nature of our estimates, in the sense that they control
the worst-case behavior of the network. We expect that under some assumptions on the distribution on the weights, some tighter control can be obtained
for the average-case behavior. In the continuous-time setting, the corresponding analysis has been performed by e.g. Cass, Litterer and Lyons \cite{CLL2013}.

To see how our a priori estimate compares to what the $C^1$ theory would imply, we ran a
simple numerical experiment\footnote{Code available on
GitHub, at \url{https://github.com/ntapiam/resnets}.}, by first training a ResNet128 using the MNIST dataset and then computing 
the \(p\)-variation of the weights and their lift (\Cref{fig:pvars}).
The jump observed at \(p=2\) is produced by switching from the standard \(p\)-variation norm \(\|\cdot\|_p\) to the augmented \(p\)-variation norm
\(\t|\cdot|_p\). \christian{To put \Cref{fig:pvars} into context, note that the classical $C^1$ analysis estimate corresponds to the case $p = 1$ in our theory. (\Cref{fig:resnet-weights} shows one entry of the matrices $\bw_k$ plotted against the time index $k$ as well as the same entry of the differences $\bw_{k+1} - \bw_k$. Specifically, we plot the entry with indices $(0,0)$. Similarly, \Cref{fig:resnet.bound} shows the value of two entries of the vector of nodes $\bx_k$ plotted against the layer $k$. In this case, we chose the entries with indices $0$ and $32$, respectively. The choices of particular entries are arbitrary.}

\begin{figure}[ht]
  \centering
  \begin{subfigure}[b]{0.48\textwidth}
    \centering
    \includegraphics[width=\textwidth]{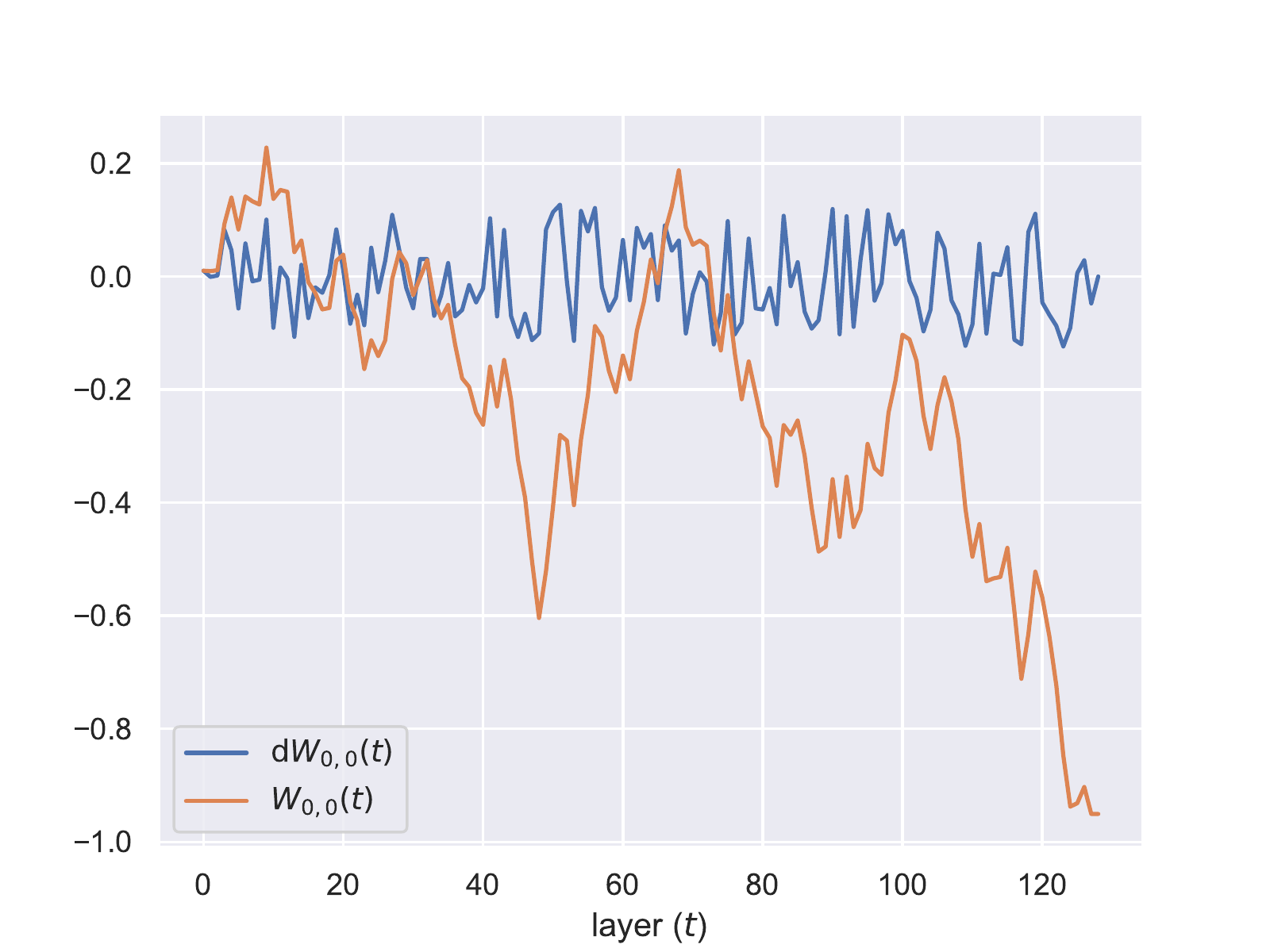}
    \caption{Evolution of selected weights.}
    \label{fig:resnet-weights}
  \end{subfigure}
  \hfill
  \begin{subfigure}[b]{0.48\textwidth}
    \centering
    \includegraphics[width=\textwidth]{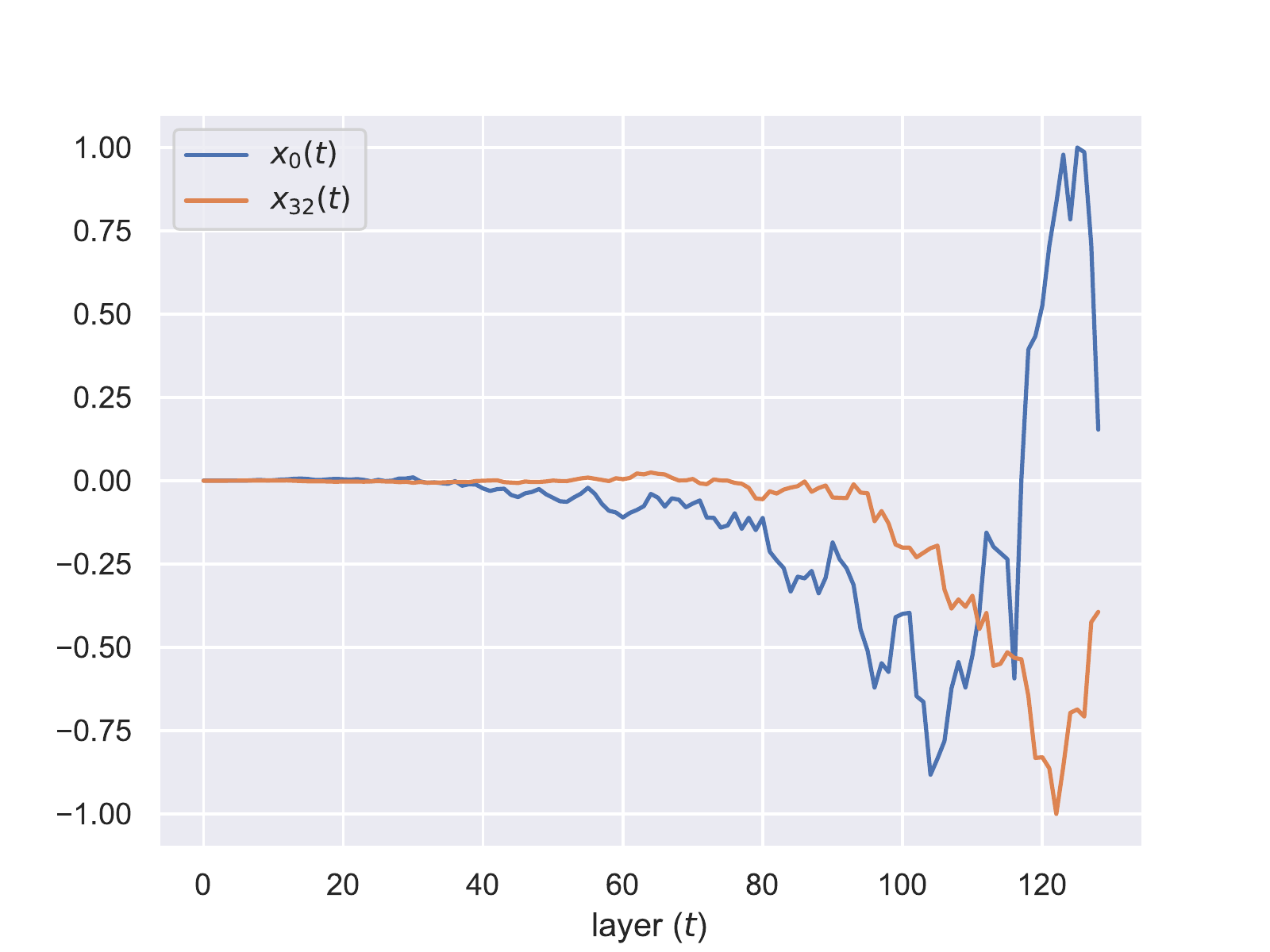}
    \caption{Evolution of selected features, rescaled to lie in the interval \([-1,1]\).}
    \label{fig:resnet.bound}
  \end{subfigure}
  \begin{subfigure}[b]{0.5\textwidth}
    \centering
    \includegraphics[width=\textwidth]{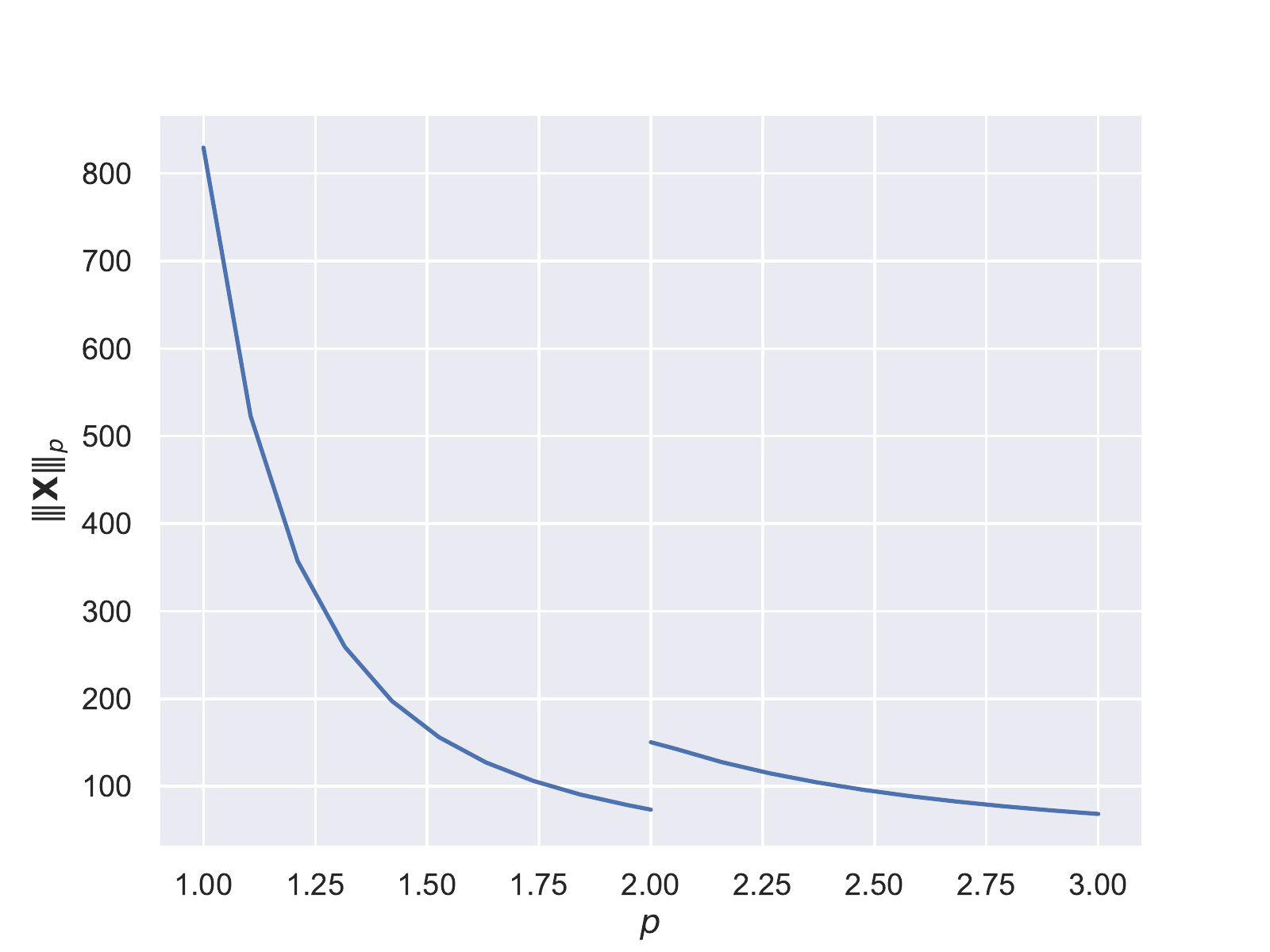}
    \caption{\(p\)-variation norm of the weights for \(p\in[1,3]\).}
    \label{fig:pvars}
  \end{subfigure}
  \caption{ResNet128 trained to MNIST data.}
  \label{fig:resnet-trained}
\end{figure}

The roughness of the driving weight matrices depicted in \Cref{fig:resnet-weights} might seem surprising at first sight. But recall the usual (random)
initialization practice of the weights before the start of training: weights are typically initialized to be independent across layers and nodes and,
in the case of constant dimension $d$, also identically distributed. There are several popular choices for the distribution itself, including normal
and uniform distributions. Hence, (possibly after a proper re-scaling reflecting a choice of ``time'', and possibly in some asymptotic sense) the path
of initialized weight matrices correspond to a matrix-valued Brownian motion, sampled in discrete time. As indicated by \Cref{fig:resnet-weights}, the
training does not seem to fundamentally change the picture: While trained weights are certainly no longer i.i.d., they still seem to exhibit the
roughness of sample paths of a Brownian motion. We refer to \cite{cohen2021scaling} for an in-depth study of scaling properties of deep residual
neural networks.

Accepting that the weights of deep residual neural networks behave like Brownian motions even after training, and considering the case of many layers
(e.g., $128$ layers as used in \Cref{fig:resnet-trained}), \Cref{fig:pvars} becomes clear. Indeed, paths of Brownian motion have finite $p$-variation
only for $p>2$ in the continuous time limit, hence we expect explosion of the $p$-variation for $p < 2$ even in the discrete case when the number of
steps becomes large. In particular, the $C^1$ analysis $p=1$ is expected to yield very poor results in the case of deep residual neural networks, if
no regularization techniques are used to enforce smoothness.

The article is organized as follows. In \Cref{s:1var} we review classical stability results from ODE theory and their application to the design of
stable residual architectures, and their counterparts in the discrete setting.
In \Cref{s:roughintro} we introduce the basic tools of discrete rough analysis needed in order to extend the previously mentioned results to the
\(p\)-variation topology.
Next, in \Cref{sse:lift} we review the algebraic theory of the so-called iterated-sums signature of a time series.
Finally in \Cref{s:cres} we prove stability bounds for residual architectures in the \(p\)-variation norms, for \(p\in[1,3)\).

\subsection*{Acknowledgments}

The authors gratefully acknowledge the support by the German research foundation DFG through the cluster of excellence MATH+, projects EF1-5 and EF1-13. We are also grateful for related discussions with Terry Lyons and Gitta Kutyniok.

\section{Stability in finite-variation norm}
\label{s:1var}
Classical analytical tools can be exploited to understand the behavior of deep ResNets by comparing their behavior to a limiting ODE system of the
form \cref{eqn:rnn.ode} \cite{W2017,HRH+2018,HR18}.
The main tool for this kind of analysis is Grönwall's inequality, which we now recall.
\begin{theorem}
	Let \(u,\alpha,\beta\colon[0,T]\to\R\) be continuous functions, with \(\alpha\) non-decreasing and \(\min(\alpha,0)\in L^1\), such that
	\[
		u(t)\le \alpha(t)+\int_0^t\beta(s)u(s)\,\mathrm ds
	\]
	for all \(t\in[0,T]\).
	Then
	\[
		u(t)\le\alpha(t)\exp\left( \int_0^t\beta(s)\,\mathrm ds \right)
	\]
	for all \(t\in[0,T]\).
\end{theorem}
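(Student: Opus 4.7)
The plan is to use the classical integrating-factor trick, pushing the inequality through the auxiliary quantity obtained by isolating the integral term. First, I define
\[
  v(t) \coloneqq \int_0^t \beta(s) u(s)\,\mathrm ds,
\]
so the hypothesis becomes $u(t) \le \alpha(t) + v(t)$ pointwise, and $v$ is absolutely continuous with $v'(t) = \beta(t) u(t)$ almost everywhere. Substituting the bound on $u$ into this identity yields the differential inequality
\[
  v'(t) - \beta(t) v(t) \le \beta(t) \alpha(t),
\]
valid a.e.\ on $[0,T]$ (tacitly assuming $\beta\ge 0$, which is the standard convention in this statement — otherwise the pointwise substitution is not sign-correct).

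Next, I multiply by the integrating factor $E(t) \coloneqq \exp\bigl(-\int_0^t \beta(s)\,\mathrm ds\bigr)$, so that the left-hand side becomes a perfect derivative:
\[
  \frac{\mathrm d}{\mathrm dt}\bigl(E(t) v(t)\bigr) \le E(t)\beta(t)\alpha(t).
\]
Since $v(0) = 0$, integrating from $0$ to $t$ gives
\[
  E(t) v(t) \le \int_0^t E(s)\beta(s)\alpha(s)\,\mathrm ds.
\]

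At this point the monotonicity of $\alpha$ enters: because $\alpha(s) \le \alpha(t)$ for $s \le t$, I can pull $\alpha(t)$ out of the integral and evaluate the remaining $\int_0^t E(s)\beta(s)\,\mathrm ds$ as $-\int_0^t E'(s)\,\mathrm ds = 1 - E(t)$. This produces $v(t) \le \alpha(t)\bigl(E(t)^{-1} - 1\bigr)$, and combining with $u(t) \le \alpha(t) + v(t)$ collapses to the desired estimate $u(t) \le \alpha(t) \exp\!\bigl(\int_0^t \beta(s)\,\mathrm ds\bigr)$.

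The only genuinely delicate point is the sign of $\beta$: the step where the bound on $u$ is inserted into $v' = \beta u$ preserves the inequality only when $\beta \ge 0$, and the final telescoping of $\int E \beta$ likewise relies on this sign. If the statement is meant to allow sign-changing $\beta$, one would instead mollify $u$ by $\alpha_\varepsilon(t) \coloneqq \alpha(t) + \varepsilon$ (to make everything strictly positive, permitting division and a logarithmic derivative argument) and then pass to the limit $\varepsilon \downarrow 0$. Otherwise the argument above, which is essentially a one-line application of Duhamel's formula for the linear ODE $\dot v = \beta v + \beta \alpha$, is routine and requires no further machinery.
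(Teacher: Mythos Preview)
The paper does not prove this theorem; it is merely recalled as the classical Gr\"onwall inequality, and the subsequent discrete version (\Cref{thm:discr.gronwall}) is likewise stated without proof and attributed to \cite{K2014}. So there is no proof in the paper to compare against.

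Your argument is the standard integrating-factor proof and is correct under the tacit assumption \(\beta\ge 0\), which you rightly flag as essential: without it the step \(v'=\beta u\le\beta(\alpha+v)\) is not sign-correct, and in fact the conclusion fails in general (take \(u\equiv 1\), \(\alpha\equiv 2\), \(\beta\equiv -1\) on \([0,1]\): the hypothesis \(1\le 2-t\) holds, but the claimed bound \(1\le 2e^{-t}\) fails at \(t=1\)). The hypothesis \(\min(\alpha,0)\in L^1\) in the stated theorem is vacuous for continuous \(\alpha\) on a compact interval; a nonnegativity assumption on \(\beta\) is almost certainly what was intended. Your closing remark about mollifying via \(\alpha_\varepsilon=\alpha+\varepsilon\) to handle sign-changing \(\beta\) does not salvage the situation, since the counterexample above shows the inequality is simply false in that regime---no approximation argument can recover it.
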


It is a standard result that, together with a priori bounds for solutions to \cref{eqn:rnn.ode}, this result implies the following stability bound
\cite[Theorem 3.15]{FV10}.
\begin{theorem}
	Let \(\bx,\tilde\bx\) be solutions to the ordinary differential equation
	\[
		\frac{\mathrm d}{\mathrm dt}\bx(t)=f(\bx(t))\frac{\mathrm d}{\mathrm dt}\bw(t)
	\]
	started respectively from \(\xi,\tilde\xi\in\R^n\) and driven by \(\bw,\tilde\bw\in C^1([0,T],\R^d)\).
	If \(f\in\mathrm{Lip}(\R^n,\mathcal L(\R^d,\R^n))\), the bound
	\[
		\|\bx-\tilde\bx\|_{\infty;[0,T]}\le e^{2\|f\|_{\mathrm{Lip}}\|\tilde\bw\|_{1;[0,T]}}(|\xi-\tilde\xi|+\|f\|_{\mathrm{Lip}}\|\bw-\tilde\bw\|_{1;[0,T]})
	\]
\end{theorem}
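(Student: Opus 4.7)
The plan is the textbook Grönwall argument. Writing both solutions in integral form
\[
\bx(t) = \xi + \int_0^t f(\bx(s))\dot{\bw}(s)\,ds, \qquad \tilde\bx(t) = \tilde\xi + \int_0^t f(\tilde\bx(s))\dot{\tilde\bw}(s)\,ds,
\]
subtracting, and performing the telescoping
\[
f(\bx)\dot{\bw} - f(\tilde\bx)\dot{\tilde\bw} = (f(\bx) - f(\tilde\bx))\dot{\tilde\bw} + f(\bx)(\dot{\bw} - \dot{\tilde\bw})
\]
are the first two steps. Pairing $\dot{\tilde\bw}$ with the Lipschitz-generated factor $f(\bx)-f(\tilde\bx)$, rather than pairing it with $\dot{\bw}$, is dictated by the target bound, in which precisely $\|\tilde\bw\|_{1;[0,T]}$ appears in the exponential.

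Taking absolute values and using that $\|f\|_{\mathrm{Lip}}$ controls both $f$ and its Lipschitz constant, one arrives at
\[
|\bx(t) - \tilde\bx(t)| \le |\xi - \tilde\xi| + \|f\|_{\mathrm{Lip}}\|\bw - \tilde\bw\|_{1;[0,T]} + \|f\|_{\mathrm{Lip}}\int_0^t |\bx(s) - \tilde\bx(s)||\dot{\tilde\bw}(s)|\,ds.
\]
This places us exactly in the hypothesis of Grönwall's inequality with non-decreasing (in fact constant) majorant $\alpha = |\xi - \tilde\xi| + \|f\|_{\mathrm{Lip}}\|\bw - \tilde\bw\|_{1;[0,T]}$ and multiplier $\beta(s) = \|f\|_{\mathrm{Lip}}|\dot{\tilde\bw}(s)|$. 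Since $\int_0^T \beta(s)\,ds = \|f\|_{\mathrm{Lip}}\|\tilde\bw\|_{1;[0,T]}$, Grönwall produces the stated bound up to the numerical factor $2$ in the exponent.

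There is no real obstacle in this argument; everything is a one-line consequence of Grönwall applied to the telescoped identity. The only mildly delicate point is the factor $2$: depending on the precise convention adopted for $\|f\|_{\mathrm{Lip}}$ (FV10 uses a Lipschitz-type norm encoding both a sup-bound and a slope-bound on $f$), controlling $f(\bx)$ in the second telescoped term cleanly may require a preliminary a priori estimate on $|\bx|$, which contributes an extra copy of $\|f\|_{\mathrm{Lip}}\|\tilde\bw\|_{1;[0,T]}$ to the exponent through an auxiliary Grönwall step. This is pure bookkeeping and does not change the structure of the proof. The conceptual takeaway, to be contrasted with the rough-paths bounds of \Cref{thm:main.resnetintro}, is that the exponent grows linearly in $\|\tilde\bw\|_{1;[0,T]}$, which is exactly the quantity that blows up along the sampled-Brownian regime of deep trained weights.
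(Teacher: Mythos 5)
The paper itself offers no proof of this statement: it is quoted verbatim from \cite[Theorem~3.15]{FV10} as a ``standard result''. What the paper \emph{does} contain, immediately after, is a discrete-time analogue of exactly your argument, culminating in \eqref{eq:1.gronw}: the same telescoping of $f(\bx)\dot\bw - f(\tilde\bx)\dot{\tilde\bw}$ into a Lipschitz term paired with $\dot{\tilde\bw}$ plus a boundedness term paired with $\dot\bw-\dot{\tilde\bw}$, followed by (discrete) Grönwall. Your continuous-time proof is correct and mirrors that derivation step for step, so you have in effect supplied the proof the authors chose to delegate to the reference.

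One remark on the factor $2$. Under the Stein-type convention used in \cite{FV10}, $\|f\|_{\mathrm{Lip}}$ dominates both $\|f\|_\infty$ and the Lipschitz constant of $f$; with that reading your estimate already closes, and in fact yields the \emph{sharper} exponent $\|f\|_{\mathrm{Lip}}\|\tilde\bw\|_{1;[0,T]}$ --- which is also what the paper's discrete bound \eqref{eq:1.gronw} exhibits, with $L(f)$ and $\|f\|_\infty$ kept separate. So the factor $2$ in the statement is slack, not something your argument is missing; there is no need for the auxiliary a priori Grönwall step you float as a possible source of it. The only thing worth tightening in your write-up is to replace the speculation about where the $2$ comes from with the observation that your argument in fact beats it.
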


The usefulness of these results in the previously mentioned references comes from the fact that for smooth enough driving signals, the behavior of
the discrete control system defined in \cref{eq:creintro} will be well approximated by the continuous-time limiting system.  However, this relies on
the assumption that the driving path \(\bw\) is indeed smooth, and that we are considering enough time steps, i.e., the network is deep enough.  It
turns out that in practice neither of these assumptions might be satisfied (see \Cref{fig:resnet-weights}).  The main goal of this paper is to show
that both these assumptions can be removed while retaining the stability results.

In this section we show how to obtain such a bound in the finite time-horizon regime, i.e., working directly at the discrete level. In the current
literature the smoothness assumption is sometimes circumvented by penalizing the \(L^1\) norm (or \(C^1\) in continuous-time models) of the weights
during training in order to enforce the necessary smoothness.  As before, the main tool is a discrete version of Grönwall's inequality (see e.g.
\cite[Lemma A.3]{K2014}).

\begin{theorem}
	\label{thm:discr.gronwall}
	Let \(c\ge 0\) and \(\varphi_j\) and \(v_j\) be non-negative sequences. If
	\[
		\varphi_j\le c+\sum_{i=1}^{j-1}v_i\varphi_i
	\]
	for all \(j\ge 1\), then
	\[
		\varphi_j\le c\prod_{i=1}^{j-1}(1+v_i)\le c\exp\left( \sum_{i=1}^{j-1}v_i \right)
	\]
	for all \(j\ge 1\).
\end{theorem}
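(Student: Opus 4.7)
The plan is to prove the statement by induction on $j$, using the standard telescoping identity for products.

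For the base case $j = 1$, both the hypothesis and the conclusion involve empty sums/products, so the inequality reduces to $\varphi_1 \le c$, which is immediate (with the convention that $\prod_{i=1}^{0}(1+v_i) = 1$).

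For the inductive step, I would assume that $\varphi_i \le c \prod_{k=1}^{i-1}(1+v_k)$ holds for every $i \le j$ and then insert this bound into the recursion:
\[
  \varphi_{j+1} \le c + \sum_{i=1}^{j} v_i \varphi_i \le c + c\sum_{i=1}^{j} v_i \prod_{k=1}^{i-1}(1+v_k).
\]
The key algebraic observation is the telescoping identity: writing $P_j \coloneqq \prod_{k=1}^{j}(1+v_k)$ with $P_0 = 1$, one has $P_i - P_{i-1} = v_i P_{i-1}$, so
\[
  \sum_{i=1}^{j} v_i \prod_{k=1}^{i-1}(1+v_k) = \sum_{i=1}^{j}(P_i - P_{i-1}) = P_j - 1.
\]
Substituting back yields $\varphi_{j+1} \le c + c(P_j - 1) = c P_j$, which is exactly the desired inductive bound at step $j+1$.

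The second inequality in the conclusion is then obtained from the elementary estimate $1 + x \le e^x$ applied factor by factor (valid since the $v_i$ are non-negative). There is really no obstacle here — the only point requiring a moment of care is that the given recursion sums only up to $j-1$ (not $j$), so after the induction step the bound on $\varphi_{j+1}$ involves the product up to $k = j$, matching the claim.
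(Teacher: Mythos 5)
Your induction is correct and complete: the base case $\varphi_1 \le c$ follows from the empty-sum convention, the telescoping identity $\sum_{i=1}^{j} v_i P_{i-1} = P_j - 1$ closes the inductive step cleanly, and $1+x \le e^x$ gives the final estimate. Note that the paper does not prove this theorem itself but simply cites it (to Kloeden's lecture notes, Lemma A.3); your argument is the standard proof of the discrete Gr\"onwall inequality and there is nothing to reconcile.
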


Let us consider solutions \(\bx,\tilde\bx\) to \cref{eq:creintro}, driven resp. by
\(\bw,\tilde\bw\) and started resp. from two different initial conditions \(\xi,\tilde\xi\in\R^m\).  Suppose furthermore that the vector fields
\(f_\mu\) are Lipschitz and bounded. We denote by \(L(f)\) the Lipschitz constant of \(f\colon\R^m\to\R^m\).

Considering the difference \(\bz_k\coloneq\bx_k-\tilde\bx_k\) and letting \(\Delta_k=\bw_k-\tilde\bw_k\), we can immediately observe that
\begin{align*}
	\bz_{k+1}-\bz_k&= \bx_{k+1}-\bx_k-(\tilde\bx_{k+1}-\tilde\bx_k)\\
	&= \sum_{\mu=1}^df_\mu(\bx_k)(\bw_{k+1}^\mu-\bw_k^\mu)-\sum_{\mu=1}^df_\mu(\tilde\bx_k)(\tilde\bw^\mu_{k+1}-\tilde\bw^\mu_k).
\end{align*}
Therefore
\begin{align*}
	|\bz_{k+1}-\bz_k|\le\sum_{\mu=1}^d|f_\mu(\bx_k)|\left|\Delta^\mu_{k+1}-\Delta^\mu_k\right|+\sum_{\mu=1}^d|f_\mu(\bx_k)-f_\mu(\tilde\bx_k)||\tilde\bw^\mu_{k+1}-\tilde\bw^\mu_k|.
\end{align*}

Hence, we see that
\[
	|\bz_{k+1}-\bz_k|\le\|f\|_{\infty}|\Delta_{k+1}-\Delta_k|+L(f)|\bz_k||\tilde\bw_{k+1}-\tilde\bw_k|.
\]
Performing a telescopic sum we obtain that
\[
	|\bz_k|\le L(f)\sum_{j=0}^{k-1}|\bz_j||\tilde\bw_{j+1}-\tilde\bw_j|+|\bz_0|+\|f\|_{\infty}\sum_{j=0}^{k-1}|\Delta_{j+1}-\Delta_j|.
\]
The second term in the right-hand side is bounded by
\[
	|\bz_0|+\|f\|_{\infty}\|\Delta\|_{1;[0,N]}.
\]

Therefore, we obtain from \Cref{thm:discr.gronwall} that
\begin{equation*}
	|\bz_k|\le (|\bx_0-\tilde\bx_0|+\|f\|_{\infty}\|\Delta\|_{1;[0,N]})\prod_{j=0}^{k-1}(1+L(f)|\tilde\bw_{j+1}-\tilde\bw_j|).
\end{equation*}
Using the elementary estimate \(1+x\le e^x\) we may finally obtain
\begin{equation}
\label{eq:1.gronw}
	\sup_{k=0,\dotsc,N}|\bx_k-\tilde\bx_k|\le e^{L(f)\|\tilde\bw\|_{1;[0,N]}}(|\bx_0-\tilde\bx_0|+\|f\|_\infty\|\bw-\tilde\bw\|_{1;[0,N]}).
\end{equation}

Therefore, it is possible to obtain Lipschitz stability results at the discrete-time level.
In the continuous-time case, it is possible to prove such theorems with respect to the whole range of \(p\)-variation topologies, for any
\(p\in[1,\infty)\).
In the rest of the article we introduce the analogous techniques for treating the discrete-time case and we show how to obtain the desired bounds for
\(p\in[1,3)\).
The main difficulty in this case is that \Cref{thm:discr.gronwall} is not well adapted to the weaker topologies, so a new generalization is needed (see \Cref{prop:gronwall}).
Indeed, directly applying	\Cref{thm:discr.gronwall} in the \(p\)-variation norm would lead to a bound like \cref{eq:1.gronw} constant depending on \(N\), which is unbounded as \(N\to\infty\).

\section{Elements of rough analysis}
\label{s:roughintro}
We begin with a brief overview of classical results present in the rough analysis literature.
We remark that many of these results are usually stated in terms of continuous-time variables which
introduces certain additional difficulties.
In our case, no such difficulties arise so the statements and proofs of analogous results become
simpler.

\subsection{Discrete controls}
We recall that in the setting of \cite{Lyo98} a \emph{control function} (or simply a
\emph{control}) is a function \(\omega\colon [0,\infty)\times[0,\infty)\to[0,\infty)\) which is
super-additive, in the sense that \(\omega(s,u)+\omega(u,t)\le \omega(s,t)\) for all \(s<u<t\).
In the continuous-time setting, the main motivation for introducing control functions is
to measure the size of the increments of a function in a more flexible way than what the
natural control \(\omega(s,t)=|t-s|\) allows.

\begin{definition}[\cite{Dav08}]
  A (discrete) control is a triangular array of non-negative numbers \((\omega_{k,l}:k<l)\) such that
  \(\omega_{k,k}=0\) and
  \[ \omega_{k,l}+\omega_{l,m}\le \omega_{k,m} \]
  for all \(k<l<m\)
\end{definition}

\begin{remark}
  Observe that for a control $\omega$ the maps \(l\mapsto \omega_{k,l}\) and \(k\mapsto \omega_{k,l}\) are
  non-decreasing and non-increasing, respectively.
  Indeed, if \(0\le k<l<m\le N\) then
  \[ \omega_{k,l}\le \omega_{k,l}+\omega_{l,m}\le \omega_{k,m} \]
  and
  \[ \omega_{k,m}\ge \omega_{k,l}+\omega_{l,m}\ge \omega_{l,m}. \]
  \label{rem:control.monotone}
\end{remark}

Now we collect some results on how to produce new controls out of any given control.

\begin{lemma}
  Let \(\omega\) be a control and \(\varphi\colon[0,\infty)\to[0,\infty)\) an increasing convex
  function such that \(\varphi(0)=0\). Then \(\tilde{\omega}_{k,l}\coloneq\varphi(\omega_{k,l})\) is
  also a control.
  \label{lem:control.convex}
\end{lemma}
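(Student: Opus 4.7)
The plan is to reduce the claim to a standard superadditivity property of convex functions vanishing at zero. Specifically, I will show that any increasing convex $\varphi \colon [0,\infty) \to [0,\infty)$ with $\varphi(0) = 0$ is superadditive, meaning $\varphi(a) + \varphi(b) \le \varphi(a+b)$ for all $a, b \ge 0$. Combined with monotonicity of $\varphi$ and the assumed superadditivity of $\omega$, this will immediately give the conclusion.

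For the superadditivity of $\varphi$, I would first observe that convexity together with $\varphi(0) = 0$ implies $\varphi(\lambda t) \le \lambda \varphi(t)$ for any $\lambda \in [0,1]$ and $t \ge 0$, since $\varphi(\lambda t) = \varphi(\lambda t + (1-\lambda)\cdot 0) \le \lambda \varphi(t) + (1-\lambda)\varphi(0) = \lambda \varphi(t)$. Applied with $t = a+b$ and $\lambda = a/(a+b)$ (and analogously for $b$), this yields $\varphi(a) \le \tfrac{a}{a+b}\varphi(a+b)$ and $\varphi(b) \le \tfrac{b}{a+b}\varphi(a+b)$; summing gives $\varphi(a) + \varphi(b) \le \varphi(a+b)$. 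The degenerate case $a+b=0$ is trivial since $\varphi(0) = 0$.

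With this in hand, the verification of the control axioms for $\tilde\omega$ is short. For $k < l < m$, superadditivity of $\varphi$ gives
\[
\tilde\omega_{k,l} + \tilde\omega_{l,m} = \varphi(\omega_{k,l}) + \varphi(\omega_{l,m}) \le \varphi\!\left(\omega_{k,l} + \omega_{l,m}\right),
\]
and then monotonicity of $\varphi$ combined with the superadditivity of $\omega$ gives $\varphi(\omega_{k,l} + \omega_{l,m}) \le \varphi(\omega_{k,m}) = \tilde\omega_{k,m}$. The diagonal condition $\tilde\omega_{k,k} = \varphi(\omega_{k,k}) = \varphi(0) = 0$ is immediate. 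I do not anticipate any obstacle here: the only nontrivial ingredient is the standard lemma that convex functions vanishing at the origin are superadditive, which is a one-line consequence of convexity.
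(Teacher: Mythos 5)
Your proof is correct and follows essentially the same approach as the paper: establish superadditivity of $\varphi$ via the inequality $\varphi(\lambda t)\le\lambda\varphi(t)$ with $\lambda = x/(x+y)$, then combine with monotonicity of $\varphi$ and superadditivity of $\omega$. You additionally handle the degenerate case $a+b=0$ and the diagonal condition $\tilde\omega_{k,k}=0$, which the paper omits but which are trivial.
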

\begin{proof}
  Since \(\varphi\) is convex and \(\varphi(0)=0\) we have that
  \[ \varphi(\lambda(x+y))\le \lambda\varphi(x+y) \]
  for any \(\lambda\in[0,1]\).
  Choosing \(\lambda=\frac{x}{x+y}\) we obtain
  \[ \varphi(x)\le \frac{x}{x+y}\varphi(x+y). \]
  Similarly, \(\varphi(y)\le\tfrac{y}{x+y}\varphi(x+y)\) so that
  \[ \varphi(x)+\varphi(y)\le \varphi(x+y), \]
  i.e. \(\varphi\) is super-additive.

  Therefore, if \(0\le k<l<m\le N\),
  \begin{align*}
    \tilde{\omega}_{k,l}+\tilde{\omega}_{l,m}&= \varphi(\omega_{k,l})+\varphi(\omega_{l,m})\\
    &\le \varphi(\omega_{k,l}+\omega_{l,m})\\
    &\le \varphi(\omega_{l,m})=\tilde{\omega}_{l,m}
  \end{align*}
  where the last inequality follows from the monotonicity of \(\varphi\).
\end{proof}

\begin{remark}
  In particular, this implies that if \(\omega\) is a control, then \(\omega^\alpha\) is
  also a control, for any \(\alpha>1\).
\end{remark}

\begin{lemma}
  Let \(\omega,\tilde{\omega}\) be two controls. If \(\alpha,\beta>0\) are such that \(\alpha+\beta\ge
  1\), then \(\hat{\omega}_{k,l}\coloneq \omega_{k,l}^\alpha\tilde{\omega}_{k,l}^\beta\) is also a control.
  \label{lem:control.product}
\end{lemma}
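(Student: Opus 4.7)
The plan is to reduce the control property of $\hat\omega$ to a single scalar inequality. Since monotonicity of $\hat\omega$ in each argument will follow from the same for $\omega$ and $\tilde\omega$, the only non-trivial point is super-additivity: for any $0\le k<l<m\le N$ we must show
\[
\omega_{k,l}^\alpha\tilde\omega_{k,l}^\beta+\omega_{l,m}^\alpha\tilde\omega_{l,m}^\beta\le \omega_{k,m}^\alpha\tilde\omega_{k,m}^\beta.
\]
The natural intermediate step is to prove the purely algebraic inequality
\[
a_1^\alpha b_1^\beta+a_2^\alpha b_2^\beta\le (a_1+a_2)^\alpha(b_1+b_2)^\beta \qquad (a_i,b_i\ge 0,\ \alpha+\beta\ge 1),
\]
and then apply it with $a_i,b_i$ equal to the increments of $\omega$ and $\tilde\omega$, after which super-additivity of $\omega$ and $\tilde\omega$ and monotonicity of $x\mapsto x^\alpha,\ x\mapsto x^\beta$ close the argument.

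For the algebraic inequality the cleanest route is Hölder with the conjugate exponents $p=(\alpha+\beta)/\alpha$ and $q=(\alpha+\beta)/\beta$, which satisfy $1/p+1/q=1$. Applied to $u_i=a_i^\alpha$, $v_i=b_i^\beta$ this yields
\[
a_1^\alpha b_1^\beta+a_2^\alpha b_2^\beta\le \bigl(a_1^{\alpha+\beta}+a_2^{\alpha+\beta}\bigr)^{\alpha/(\alpha+\beta)}\bigl(b_1^{\alpha+\beta}+b_2^{\alpha+\beta}\bigr)^{\beta/(\alpha+\beta)}.
\]
Combined with the elementary bound $x_1^r+x_2^r\le (x_1+x_2)^r$ valid for $x_i\ge 0$ and $r\ge 1$ (applied with $r=\alpha+\beta\ge 1$), this gives exactly the desired inequality. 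An alternative, equally short derivation splits into cases: when $\alpha\le 1$ one applies Hölder with $p=1/\alpha$, $q=1/(1-\alpha)$ and uses $\beta\ge 1-\alpha$ to absorb the $b$-sum; the case $\alpha\ge 1$ is handled symmetrically, or by bounding $a_i^\alpha\le (a_1+a_2)^{\alpha-1}a_i$.

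With the scalar inequality in hand, setting $a_1=\omega_{k,l}$, $a_2=\omega_{l,m}$, $b_1=\tilde\omega_{k,l}$, $b_2=\tilde\omega_{l,m}$ yields
\[
\omega_{k,l}^\alpha\tilde\omega_{k,l}^\beta+\omega_{l,m}^\alpha\tilde\omega_{l,m}^\beta\le (\omega_{k,l}+\omega_{l,m})^\alpha(\tilde\omega_{k,l}+\tilde\omega_{l,m})^\beta\le \omega_{k,m}^\alpha\tilde\omega_{k,m}^\beta,
\]
where the last step uses super-additivity of $\omega$ and $\tilde\omega$ together with the monotonicity of $x\mapsto x^\alpha$ and $x\mapsto x^\beta$ on $[0,\infty)$.

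The only real obstacle is the scalar inequality; once $\alpha+\beta\ge 1$ is properly inserted (either via the conjugate exponents above or the case split), the rest is bookkeeping. It is also worth noting that the hypothesis $\alpha+\beta\ge 1$ is sharp, as can be seen by testing with $a_1=b_1$, $a_2=b_2$ and letting the ratio of the two increments degenerate, so no stronger statement can be expected without extra structure.
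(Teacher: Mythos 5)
Your proof is correct and rests on the same key ingredient as the paper's: Hölder's inequality with the conjugate exponents $(\alpha+\beta)/\alpha$ and $(\alpha+\beta)/\beta$, together with the super-additivity of $x\mapsto x^r$ for $r=\alpha+\beta\ge 1$. The paper packages this slightly differently---it first proves that $z_{k,l}\coloneq \omega_{k,l}^{\alpha/(\alpha+\beta)}\tilde\omega_{k,l}^{\beta/(\alpha+\beta)}$ is a control via Hölder with the normalized exponents summing to one, and then invokes \Cref{lem:control.convex} with $\varphi(x)=x^{\alpha+\beta}$---but that lemma encodes exactly the power-function super-additivity you prove inline, so the two arguments are the same up to reorganization.
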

\begin{proof}
  Let \(\theta\coloneq\alpha+\beta\).
  By \Cref{lem:control.convex}, it is enough to show that
  \[ z_{k,l}\coloneq \omega_{k,l}^{\tfrac\alpha\theta}\tilde{\omega}_{k,l}^{\tfrac\beta\theta} \]
  is a control, since then \(\hat{\omega}_{k,l}=z_{k,l}^\theta\) will also be a control.
  Since \(\tfrac\alpha\theta+\tfrac\beta\theta=1\), Hölder's inequality implies that
  \begin{align*}
    z_{k,l}+z_{l,m}&\le(\omega_{k,l}+\omega_{l,m})^{\tfrac\alpha\theta}(\tilde{\omega}_{k,l}
      +\tilde{\omega}_{l,m})^{\tfrac\beta\theta}\\
    &\le \omega_{k,m}^{\tfrac\alpha\theta}\tilde{\omega}_{k,m}^{\tfrac\beta\theta}
  \end{align*}
  and the proof is finished.
\end{proof}

\subsection{\texorpdfstring{\(p\)-variation}{p-variation}}
In the following we will deal with \emph{time series}, which are finite sequences of
vectors \(\bw=(\bw_0,\bw_1,\dotsc,\bw_N)\in(\R^d)^N\).
We will use the convention of indexing time steps with lower indices and lowercase Latin letters, and spatial components with
upper indices and lowercase Greek letters, so for example \(\bw^\mu_k\in\R\) refers to the \(\mu\)-th component of the
\(k\)-th entry in the time series \(\bw\).
The main reason for making this distinction is that the ranges for both sets of variables is different: indeed, note that Greek letter indices always
belong to the set \(\{1,\dotsc,d\}\), while Latin letter indices belong to the set \(\{0,\dotsc,N\}\).

We will also need to deal with general \emph{triangular arrays}, which are collections of
vectors of the form \((\Xi_{k,l}:0\le k<l\le N)\).
For any time series we define a triangular array \((\bw_{k,l})\) by setting
\(\bw_{k,l}\coloneqq \bw_l-\bw_k\).

\begin{definition}
    Given \(p>0\), we define the \(p\)-variation with respect to a fixed choice of norm
    \(\lvert\,\cdot\,\rvert\) on \(\R^d\), by
    \[ \|\bw\|_{p;[k,l]}\coloneqq\left( \max_{s\in\mathcal S_{k,l}} \sum_{j=0}^{\#s}|\bw_{s_{j+1}}-\bw_{s_j}|^p \right)^{1/p} \]
    where the maximum is taken over the set \(\mathcal S_{k,l}\) of all increasing subsequences
    \[ s=(s_0=k,s_1,\dotsc,s_m,s_{m+1}=l) \]
    of \(\{k,k+1,\dotsc,l-1,l\}\) and we have set \(\#s=m\) for such a sequence.
For a triangular array \(\Xi\) one can also define its \(p\)-variation as
\[ \|\Xi\|_{p;[k,l]}\coloneqq\left( \sup_{s\in\mathcal S_{k,l}}\sum_{j=0}^{\#s}|\Xi_{s_j,s_{j+1}}|^p \right)^{1/p}. \]
\end{definition}
We observe that in the case where \(\Xi_{k,l}=\bw_l-\bw_k\) both definitions coincide.

Since the trivial sequence \((k,l)\in\mathcal S_{k,l}\) we obtain immediately the bound
\begin{equation}
  \lvert\Xi_{k,l}\rvert\le \lVert\Xi\rVert_{p;[k,l]}\label{eq:pvarbound}
\end{equation}
for any \(p>0\).
In the particular case where \(\Xi_{k,l}=\bw_l-\bw_k\) we also obtain
\[ \|\bw\|_\infty\coloneq\sup_{k=0,\dotsc,N}|\bw_k|\le|\bw_0|+\|\bw\|_{p;[0,N]}. \]

\begin{proposition}
  Let \(\Xi\) be a triangular array and \(p\ge 0\).
  Then \(\omega_{k,l}\coloneqq\|\Xi\|_{p;[k,l]}^p\) is a control.
\end{proposition}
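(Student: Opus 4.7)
The plan is to verify the two conditions defining a discrete control: the diagonal condition $\omega_{k,k}=0$ and super-additivity.

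The diagonal condition is essentially vacuous: when $k=l$ the set $\mathcal S_{k,k}$ contains only the degenerate partition, and the sum over it is empty, so $\omega_{k,k}=0$. (If one prefers, one may simply declare $\omega_{k,k}=0$, since the $p$-variation was defined for $k<l$ only.)

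For super-additivity, fix $k<l<m$ and let $\varepsilon>0$. The plan is to pick near-optimal partitions on $[k,l]$ and $[l,m]$ and concatenate them. Concretely, choose subsequences
\[
s=(k=s_0,s_1,\dotsc,s_{M}=l)\in\mathcal S_{k,l},\qquad t=(l=t_0,t_1,\dotsc,t_{M'}=m)\in\mathcal S_{l,m}
\]
(here since our index set is finite, we may in fact choose $s,t$ to attain the maximum), so that
\[
\omega_{k,l}=\sum_{j=0}^{M-1}\lvert\Xi_{s_j,s_{j+1}}\rvert^p,\qquad \omega_{l,m}=\sum_{j=0}^{M'-1}\lvert\Xi_{t_j,t_{j+1}}\rvert^p.
\]
Since $s_M=l=t_0$, the concatenated sequence $u=(k,s_1,\dotsc,s_{M-1},l,t_1,\dotsc,t_{M'-1},m)$ is a strictly increasing subsequence of $\{k,k+1,\dotsc,m\}$ starting at $k$ and ending at $m$, hence $u\in\mathcal S_{k,m}$. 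Summing the $p$-th powers of the corresponding increments of $\Xi$ along $u$ gives exactly $\omega_{k,l}+\omega_{l,m}$, and by definition of the maximum this is bounded by $\omega_{k,m}$. This yields the desired inequality.

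There is no real obstacle here; the only point worth checking is that the concatenation is a legitimate element of $\mathcal S_{k,m}$ (which requires $s_M=t_0=l$, built into the definition) and that $\mathcal S_{k,l}$ is finite so the maximum is attained. Note that the argument uses the $p$-th power rather than the $p$-variation itself: $\omega_{k,l}^{1/p}=\|\Xi\|_{p;[k,l]}$ is only sub-additive (after raising to the power $p$ it becomes super-additive), which is precisely the reason why the exponent $p$ is built into the definition of $\omega$. This also explains why \Cref{lem:control.convex} is stated for convex $\varphi$: the map $x\mapsto x^p$ is convex exactly when $p\ge 1$, consistent with the usual range of exponents in rough path theory.
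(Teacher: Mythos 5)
Your proof is correct and is essentially the same as the paper's: you concatenate (near-)optimal partitions of $\{k,\dotsc,l\}$ and $\{l,\dotsc,m\}$ into a partition of $\{k,\dotsc,m\}$ and compare with the maximum defining $\omega_{k,m}$. The extra remarks (finiteness of $\mathcal S_{k,l}$, the role of the $p$-th power) are fine but not load-bearing; the core argument matches the paper's.
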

\begin{proof}
  Indeed, if \(s'\in\mathcal S_{k,l}\) and \(s''\in\mathcal S_{l,m}\) then \(s=(s',s'')\in\mathcal S_{k,m}\) and so
  \[ \sum_{j=0}^{\#s'}|\Xi_{s_j,s_{j+1}}|^p+\sum_{j'=0}^{\#s''}|\Xi_{s'_{j'},s'_{j'+1}}|^p \le \|\Xi\|^p_{p;k,m} \]
  and super-additivity follows from taking the supremum over \(\mathcal S_{k,l}\) and
  \(\mathcal S_{l,m}\).
\end{proof}

\begin{remark}
  Since the set \(\mathcal S_{k,l}\) is finite, the \(p\)-variation norm of \(\Xi\) is
  finite for any \(p>0\) and triangular array \(\Xi\).
  This should be contrasted with the usual setting for rough paths, where one deals with paths in
  continuous time; in that setting, the \(p\)-variation norm can become infinite and this introduces
  a number of analytical problems which are not present in the present context.
\end{remark}
\begin{remark}
    The \(p\)-variation defines a quasi-norm for \(0<p<1\) (i.e. the triangle inequality fails), and
    a semi-norm for \(p\ge 1\) on time series, since all constant sequences have vanishing
    \(p\)-variation. For \(p\ge 1\), it becomes a norm on triangular arrays.
\end{remark}

\begin{lemma}
  Let \(0\le p<q<\infty\).
  Then \( \lVert\Xi\rVert_{q;[k,l]}\le\lVert\Xi\rVert_{p;[k,l]} \)
  \label{lem:pvar.monotone}
\end{lemma}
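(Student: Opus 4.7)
The plan is to reduce the statement to the elementary fact that on a finite index set, the $\ell^q$-norm is dominated by the $\ell^p$-norm whenever $p\le q$. Concretely, for any finite collection of non-negative reals $(a_j)$ and $0<p<q<\infty$, one has $\bigl(\sum_j a_j^q\bigr)^{1/q}\le \bigl(\sum_j a_j^p\bigr)^{1/p}$. The short proof of this inequality is to set $M\coloneqq\bigl(\sum_j a_j^p\bigr)^{1/p}$, observe that $a_j\le M$ for every $j$, and hence $a_j^q=a_j^p\,a_j^{q-p}\le a_j^p M^{q-p}$; summing and taking the $q$-th root yields the claim.

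I would then apply this pointwise to each subsequence. Fix an arbitrary $s\in\mathcal S_{k,l}$ and apply the above inequality to $a_j=|\Xi_{s_j,s_{j+1}}|$ to obtain
\[
  \left(\sum_{j=0}^{\#s}|\Xi_{s_j,s_{j+1}}|^q\right)^{1/q}\le\left(\sum_{j=0}^{\#s}|\Xi_{s_j,s_{j+1}}|^p\right)^{1/p}\le \|\Xi\|_{p;[k,l]},
\]
where the last inequality uses that the rightmost norm is, by definition, a maximum over $\mathcal S_{k,l}$. Raising to the $q$-th power and taking the maximum over $s\in\mathcal S_{k,l}$ on the left-hand side finishes the argument.

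There is no genuine obstacle here: the statement is the discrete analogue of the standard continuous-time fact and, because the index set is finite, no measurability or limiting issues arise. The only point to keep in mind is that the supremum defining the two $p$-variations is taken over the same family $\mathcal S_{k,l}$ of subdivisions, so the inequality, being valid for each subdivision separately, passes to the maximum without modification.
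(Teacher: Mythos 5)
Your proof is correct and takes essentially the same route as the paper: establish the elementary $\ell^p\hookrightarrow\ell^q$ inequality on finite index sets, apply it to each subdivision $s\in\mathcal S_{k,l}$, and pass to the maximum. The paper simply records the inequality $\sum_j |\Xi_{s_j,s_{j+1}}|^q\le\bigl(\sum_j|\Xi_{s_j,s_{j+1}}|^p\bigr)^{q/p}$ without spelling out the one-line derivation you supply.
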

\begin{proof}
  Observe that, since \(\tfrac qp>1\), the inequality
  \[
    \sum_{j=0}^{\#s}|\Xi_{s_j,s_{j+1}}|^q\le\left( \sum_{j=0}^{\#s}|\Xi_{s_j,s_{j+1}}|^{p}
    \right)^{q/p}
  \]
  holds for any \(s\in\mathcal S_{k,l}\).
\end{proof}

Given a triangular array \(\Xi\), we define another collection \((\delta \Xi_{k,l,m}:0\le k<l<m)\) by
\[ \delta \Xi_{k,l,m}\coloneqq \Xi_{k,m}-\Xi_{k,l}-\Xi_{l,m}. \]
In the special case where \(\Xi_{k,l}=\bw_l-\bw_k\) we see that \(\delta\Xi_{k,l,m}=0\).
The operator \(\delta\) satisfies the following product rule: if \(\bw\) is a time series
and \(\Xi\) is a triangular array, consider the triangular array \(\mathbf
Z_{k,l}\coloneq\bw_k\Xi_{k,l}\). Then
\begin{equation}
  \delta\mathbf Z_{k,l,m}=\bw_k\delta\Xi_{k,l,m}-\bw_{k,l}\Xi_{l,m}.
  \label{eq:deltader}
\end{equation}

Finally we collect here some standard results for further reference.

\begin{lemma}
  Let \(\Xi\) be a triangular array and \(p\ge 0\). Suppose there is a control \(w\) such that
  \[ |\Xi_{k,l}|\le C\omega_{k,l}^{1/p} \]
  for all \(0\le k<l\le N\) and some constant \(C>0\).
  Then,
  \[ \|\Xi\|_{p;[k,l]}\le C\omega_{k,l}^{1/p} \]
  for all \(0\le k<l\le N\).
  \label{lem:pvar.control}
\end{lemma}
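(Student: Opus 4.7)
The plan is to unpack the definition of the $p$-variation norm and use the super-additivity of the control $\omega$ directly. Given any increasing subsequence $s=(s_0=k,s_1,\dotsc,s_{m+1}=l)\in\mathcal S_{k,l}$, the assumed pointwise bound applied to each increment gives
\[
\sum_{j=0}^{\#s}|\Xi_{s_j,s_{j+1}}|^p\le C^p\sum_{j=0}^{\#s}\omega_{s_j,s_{j+1}}.
\]
Then I would iterate super-additivity: by definition $\omega_{s_0,s_1}+\omega_{s_1,s_2}\le\omega_{s_0,s_2}$, and inductively $\sum_{j=0}^{\#s}\omega_{s_j,s_{j+1}}\le \omega_{k,l}$.

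Combining these, every subdivision yields $\sum_{j=0}^{\#s}|\Xi_{s_j,s_{j+1}}|^p\le C^p\omega_{k,l}$. Taking the maximum over $s\in\mathcal S_{k,l}$ and then the $p$-th root gives precisely $\|\Xi\|_{p;[k,l]}\le C\omega_{k,l}^{1/p}$.

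There is essentially no obstacle here; the statement is a direct consequence of the definitions, and the only ingredient beyond unwinding notation is the telescoping estimate coming from super-additivity. One minor point to be careful about is the case $p<1$, where we do not have a norm, but the algebraic manipulation above is valid for any $p>0$ (and the statement is vacuous or trivial if $p=0$, which should perhaps be excluded). No additional lemmas from the preceding material are needed.
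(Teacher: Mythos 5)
Your proof is correct and follows essentially the same route as the paper's: raise the pointwise bound to the $p$-th power, sum over any subdivision, apply super-additivity of $\omega$ to collapse the sum, and take the maximum. The paper's version is just more compressed; you have supplied the same argument with the telescoping step spelled out.
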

\begin{proof}
  By hypothesis the inequality
  \[ |\Xi_{k,l}|^p\le C^p\omega_{k,l} \]
  holds for all \(0\le k<l\le N\).
  By superadditivity of \(w\), if \(s\in\mathcal S_{k,l}\) then also
  \[ \sum_{j=0}^{\#s}|\Xi_{s_j,s_{j+1}}|^p\le C^p\omega_{k,l}. \]
  The desired bound follows upon taking the maximum over \(s\in\mathcal S_{k,l}\).
\end{proof}
\begin{lemma}
  Assume that \(p\ge 1\) and
  \[ |\bx_{k,l}|\le C\omega_{k,l}^{1/p} \]
  for all \(0\le k<l\) such that \(\omega_{k,l}\le 1\) or if \(l=k+1\).
  Then
  \[ \|\bx\|_{p;[k,l]}\le 2C(\omega_{k,l}^{1/p}\vee \omega_{k,l}) \]
  for all \(0\le k<l\).
    \label{lem:maxbound}
\end{lemma}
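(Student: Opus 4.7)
The proof naturally splits into two regimes according to whether $\omega_{k,l}$ is small or large.

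When $\omega_{k,l} \le 1$, the monotonicity property recorded in \Cref{rem:control.monotone} implies $\omega_{a,b} \le \omega_{k,l} \le 1$ for every sub-interval $[a,b] \subseteq [k,l]$, so the hypothesis applies uniformly. \Cref{lem:pvar.control} then gives $\|\bx\|_{p;[k,l]} \le C \omega_{k,l}^{1/p}$, which already dominates the stated bound since $\omega_{k,l}^{1/p} \vee \omega_{k,l} = \omega_{k,l}^{1/p}$ in this regime.

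When $\omega_{k,l} > 1$, I would fix any $s \in \mathcal S_{k,l}$ and treat each sub-interval $[s_j, s_{j+1}]$ separately. If $\omega_{s_j, s_{j+1}} \le 1$ or $s_{j+1} = s_j + 1$, the hypothesis directly yields $|\bx_{s_j, s_{j+1}}|^p \le C^p \omega_{s_j, s_{j+1}}$. Otherwise I would construct a greedy partition $s_j = m_0 < m_1 < \cdots < m_{t_j} = s_{j+1}$ where $m_{i+1}$ is the largest index $\le s_{j+1}$ with $\omega_{m_i, m_{i+1}} \le 1$, defaulting to $m_{i+1} = m_i + 1$ when no such index strictly exceeds $m_i$. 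Each resulting piece satisfies the hypothesis, and the maximality of $m_{i+1}$ together with super-additivity of $\omega$ on the disjoint paired intervals $[m_{2i}, m_{2i+2}]$ produces a count bound $t_j \le 2\omega_{s_j, s_{j+1}} + 1$. Additivity of $\bx$ combined with the triangle and H\"older inequalities then gives
\[
  |\bx_{s_j, s_{j+1}}|^p \le C^p \Bigl( \sum_i \omega_{m_i, m_{i+1}}^{1/p} \Bigr)^p \le C^p \, t_j^{p-1} \sum_i \omega_{m_i, m_{i+1}} \le C^p \, t_j^{p-1} \omega_{s_j, s_{j+1}}.
\]

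Summing over $j$, the easy-case contributions are bounded by $C^p \omega_{k,l}$ via super-additivity of $\omega$. For the hard-case contributions, plugging in $t_j \le 2\omega_{s_j, s_{j+1}} + 1$ together with the estimate $\sum_j \omega_{s_j, s_{j+1}}^p \le \omega_{k,l}^{p-1} \sum_j \omega_{s_j, s_{j+1}} \le \omega_{k,l}^p$ yields a bound of the form $\mathrm{const}_p \cdot C^p \omega_{k,l}^p$. Combining the two contributions and using $\omega_{k,l} \ge 1$ (so that $\omega_{k,l} \le \omega_{k,l}^p$) together with the elementary inequality $1 + 2^{p-1} \le 2^p$ valid for $p \ge 1$ produces $\sum_j |\bx_{s_j, s_{j+1}}|^p \le (2C)^p \omega_{k,l}^p$, whence the stated bound $\|\bx\|_{p;[k,l]} \le 2C \omega_{k,l}$ on taking $p$-th roots.

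The main obstacle is the combinatorial bookkeeping in the $\omega_{k,l} > 1$ regime: the $p$-variation seminorm is super-additive (rather than sub-additive) under concatenation of intervals, so piecewise bounds cannot simply be summed to recover a global estimate. Instead one must refine \emph{within} each sub-interval of $s$ via the greedy partition, bound the resulting piece count by $\sim \omega_{s_j, s_{j+1}}$ through pair-wise super-additivity, and interpolate between the Young-type small-$\omega$ bound and the trivial single-step bound through H\"older's inequality, with careful constant tracking to ensure the final prefactor remains uniformly bounded (by $2C$) as $p$ ranges over $[1,\infty)$.
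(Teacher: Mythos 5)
Your approach captures the right ideas (greedy partition, H\"older, super-additivity), but it departs from the paper's proof in a structural way that ultimately leads to an arithmetic gap. The paper proves the \emph{pointwise} bound $|\bx_{k,l}| \le 2C\,\tilde\omega_{k,l}^{1/p}$ for all $0\le k<l\le N$, where $\tilde\omega_{k,l} \coloneq \omega_{k,l}^p \vee \omega_{k,l}$ is itself a control by \Cref{lem:control.convex}; it then applies \Cref{lem:pvar.control} once, letting the super-additivity of $\tilde\omega$ absorb the supremum over partitions $s$ for free. You instead fix $s\in\mathcal S_{k,l}$ and run a separate greedy subdivision inside every sub-interval $[s_j,s_{j+1}]$, which amounts to re-deriving the aggregation step of \Cref{lem:pvar.control} by hand rather than invoking it.

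The concrete gap is in the constant tracking of the hard-case sum. With your count bound $t_j\le 2\omega_{s_j,s_{j+1}}+1$ and $\omega_{s_j,s_{j+1}}>1$, the honest estimate is $t_j^{p-1}\le(3\omega_{s_j,s_{j+1}})^{p-1}$, so the hard contributions total at most $3^{p-1}C^p\omega_{k,l}^p$, not $2^{p-1}C^p\omega_{k,l}^p$. The inequality you then need is $1+3^{p-1}\le 2^p$, not $1+2^{p-1}\le 2^p$, and it fails already at $p=3$ (since $10>8$); so your argument as written does not deliver the stated constant $2C$ for all $p\ge 1$. To repair this, either sharpen the count bound to $t_j\le 2\omega_{s_j,s_{j+1}}$ (the paper asserts the analogue $M+1\le 2\omega_{k,l}$ for its single greedy partition of $[k,l]$), which makes $2^{p-1}$ genuinely appear, or — more economically — argue as the paper does: prove $|\bx_{k,l}|\le 2C\,\tilde\omega_{k,l}^{1/p}$ on a single interval using one greedy partition, and then hand the supremum over $s$ entirely to \Cref{lem:pvar.control}, avoiding nested partitions and the attendant bookkeeping altogether.
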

\begin{proof}
    We show that the inequality \(|\bx_{k,l}|\le C\tilde\omega_{k,l}^{1/p}\) holds for all \(0\le k<l\le
    N\), where \(\tilde\omega_{k,l}\coloneq\omega_{k,l}^p\vee\omega_{k,l} \) which is a control by
    \Cref{lem:control.convex}.
    The conclusion then follows from \Cref{lem:pvar.control}.

    If \(k<l\) are such that \(\omega_{k,l}\le 1\) then there is nothing to show, since in this case
    \(\tilde\omega_{k,l}^{1/p}=\omega_{k,l}^{1/p}\).
    Suppose now that \(k<l\) are such that \(\omega_{k,l}>1\).
    Inductively define \(j_0=k<j_1<\dotsb<j_M<j_{M+1}=l\) by setting
    \[ j_{u+1}\coloneq\max\{j>j_u:\omega_{j_u,j}\le 1\}\wedge(j_u+1). \]
    By super-additivity of \(\omega\) we immediately get that \(M+1\le 2\omega_{k,l}\).
    Also, by definition \(|\bx_{j_u,j_{u+1}}|\le C\omega_{j_u,j_{u+1}}^{1/p}\) for
    \(u=0,1,\dotsc,r\).
    Thus, by the triangle inequality we obtain that
    \begin{align*}
			|\bx_{k,l}|&\le C\sum_{u=0}^M\omega_{j_u,j_{u+1}}^{1/p}\\
				&\le C(M+1)\\
				&\le 2C\omega_{k,l}\\
				&= 2C\tilde\omega_{k,l}^{1/p}.\qedhere
    \end{align*}
\end{proof}

Finally, we show the following result, known as the rough Grownall Lemma.
It is a slight variation of \cite[Lemma 2.12]{DGH+2019}, adapted to our particular setting.
\begin{theorem}
	\label{prop:gronwall}
    Let \(\bz\) be a time series and suppose there exist controls \(\omega,\tilde\omega\) such that
    \[ |\bz_{k,l}|\le C\left( \max_{j=0,\dotsc,l}|\bz_j| \right)\omega_{k,l}^{1/\kappa}+\tilde\omega_{k,l}^{1/\rho}
    \]
    whenever \(\omega_{k,l}\le L\) or \(l=k+1\), for some constants \(C>0\) and \(\kappa,\rho\ge1\).
    Then,
    \[
        \max_{j=0,\dotsc,N}|\bz_j|\le 2\exp\left( \frac{\omega_{0,N}}{\alpha L} \right)\left\{
				|\bz_0|+\max_{j=0,\dotsc,N}\left(\tilde{\omega}^{1/\rho}_{0,j}\left( 1+2\frac{\omega_{0,j}}{\alpha L} \right)^{1-1/\rho}\exp\left(- \frac{\omega_{0,j}}{\alpha L} \right) \right) \right\}
    \]
    where \(\alpha\coloneq\min(1,\frac1{L(2Ce^2)^\kappa})\).
\end{theorem}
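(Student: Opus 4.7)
The plan is a greedy-partition-plus-iteration argument, a discrete analogue of the continuous-time rough Grönwall lemma in \cite{DGH+2019}. Let $M_j \coloneqq \max_{i\le j}|\bz_i|$ and set $L' \coloneqq \alpha L$; by the definition of $\alpha$ we have $C(L')^{1/\kappa} \le (2e^2)^{-1}$.

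\emph{Local estimate.} For any $k<l$ satisfying $\omega_{k,l}\le L$ or $l = k+1$, the hypothesis together with $|\bz_j|\le|\bz_k|+|\bz_{k,j}|$ and monotonicity of the controls (\Cref{rem:control.monotone}) gives, for every $j\in[k,l]$,
\[
|\bz_j| \le |\bz_k| + CM_l\,\omega_{k,l}^{1/\kappa} + \tilde\omega_{k,l}^{1/\rho}.
\]
Taking the maximum over $j\in\{0,\dots,l\}$ (with the indices $j\le k$ trivially dominated by $M_k$) produces $M_l \le M_k + C\omega_{k,l}^{1/\kappa}M_l + \tilde\omega_{k,l}^{1/\rho}$. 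If moreover $\omega_{k,l}\le L'$, absorbing the self-referential term yields $M_l \le A\bigl(M_k + \tilde\omega_{k,l}^{1/\rho}\bigr)$ with $A \coloneqq (1-(2e^2)^{-1})^{-1}$.

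\emph{Greedy partition and iteration.} Following the construction in the proof of \Cref{lem:maxbound}, define inductively $0=k_0<k_1<\dots<k_r=N$ by
\[
k_{u+1} \coloneqq \max\{j>k_u : \omega_{k_u,j}\le L'\} \vee (k_u+1).
\]
Super-additivity of $\omega$ forces every pair of consecutive sub-intervals to carry $\omega$-mass $>L'$ (either by maximality of $k_{u+1}$, or because $k_{u+1}=k_u+1$ was forced by $\omega_{k_u,k_u+1}>L'$), which gives $r\le 1+2\omega_{0,N}/L'$. Since on each sub-interval the hypothesis of the local estimate applies, iterating yields
\[
M_{k_u} \le A^u|\bz_0| + \sum_{v=0}^{u-1}A^{u-v}\tilde\omega_{k_v,k_{v+1}}^{1/\rho}.
\]
Now $\log A \le (2e^2-1)^{-1}$ combined with $u\le 1+2\omega_{0,k_u}/(\alpha L)$ provides $A^u\le\text{const}\cdot e^{\omega_{0,k_u}/(\alpha L)}$, while Hölder's inequality and super-additivity of $\tilde\omega$ give
\[
\sum_{v=0}^{u-1}\tilde\omega_{k_v,k_{v+1}}^{1/\rho} \le u^{1-1/\rho}\tilde\omega_{0,k_u}^{1/\rho} \le (1+2\omega_{0,k_u}/(\alpha L))^{1-1/\rho}\tilde\omega_{0,k_u}^{1/\rho}.
\]
An additional local step between $k_u$ and an arbitrary $j\in[k_u,k_{u+1})$ extends the bound from partition points to every index, and regrouping the inhomogeneous part as $e^{\omega_{0,N}/(\alpha L)}\cdot e^{-\omega_{0,j}/(\alpha L)}\cdot(\ldots)$ under a $\max_j$ delivers the announced estimate.

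The main obstacle is the careful bookkeeping of multiplicative constants. The precise value $\alpha=\min(1,(L(2Ce^2)^\kappa)^{-1})$ is engineered so that (i) the absorbing constant $A$ stays so close to $1$ that the geometric growth $A^u$ is dominated by $e^{\omega_{0,k_u}/(\alpha L)}$ even though $u$ may grow linearly in $\omega_{0,k_u}/(\alpha L)$, and (ii) the greedy pairing argument yields exactly $r\le 1+2\omega_{0,N}/(\alpha L)$, which is what feeds through Hölder to produce the exponent $(1+2\omega_{0,j}/(\alpha L))^{1-1/\rho}$ in the conclusion. Translating partition-point estimates back to arbitrary indices and consolidating the resulting one-parameter family into a single maximum accounts for the $\max_j$ structure on the right-hand side.
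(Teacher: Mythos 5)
Your overall architecture (greedy partition, the count $r\le 1+2\omega_{0,N}/(\alpha L)$, H\"older for the inhomogeneous sum, and the absorption constant $A$ dictated by the choice of $\alpha$) matches the paper's, and the local estimate $M_l\le A(M_k+\tilde\omega_{k,l}^{1/\rho})$ is a clean packaging of the hypothesis. The route diverges at the iteration step: you unroll the recursion to get $M_{k_u}\le A^u|\bz_0|+\sum_{v<u}A^{u-v}\tilde\omega_{k_v,k_{v+1}}^{1/\rho}$ and then estimate $A^u$ against an exponential, whereas the paper never iterates the maximum directly. Instead it defines the exponentially discounted quantity $H_j\coloneqq G_je^{-\omega_{0,j}/(\alpha L)}$ with $G_j=\max_{i\le j}|\bz_i|$, telescopes the \emph{increment} $\bz_{0,j}$ through the partition, writes each $G_{j_{r+1}}=H_{j_{r+1}}e^{\omega_{0,j_{r+1}}/(\alpha L)}$, and closes a fixed-point inequality $H^*_N\le|\bz_0|+\max_j(\ldots)+C(\alpha L)^{1/\kappa}e^2H^*_N$.

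The gap in your argument is precisely the last, hand-waved step: ``regrouping the inhomogeneous part as $e^{\omega_{0,N}/(\alpha L)}\cdot e^{-\omega_{0,j}/(\alpha L)}\cdot(\ldots)$'' is not a valid rewriting of what your iteration produces. After pulling $A^u$ out of the sum and invoking $u\le 1+2\omega_{0,k_u}/(\alpha L)$ you arrive at a bound of the shape
\[
M_N\ \lesssim\ e^{\omega_{0,N}/(\alpha L)}\Bigl(|\bz_0|+\bigl(1+2\tfrac{\omega_{0,N}}{\alpha L}\bigr)^{1-1/\rho}\tilde\omega_{0,N}^{1/\rho}\Bigr),
\]
i.e.\ without any $e^{-\omega_{0,j}/(\alpha L)}$ damping inside the maximum. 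If you try to create it by factoring $A^{u-v}$ according to the intermediate index, the natural bookkeeping (via $u-v\le 1+2\omega_{k_v,N}/(\alpha L)$ and super-additivity) yields $e^{-(\omega_{0,N}-\omega_{0,k_v})/(\alpha L)}$, which is a different quantity from the $e^{-\omega_{0,j}/(\alpha L)}$ in the statement, and in general neither dominates the other. Thus the iteration, as written, proves a \emph{strictly weaker} inequality. This is not a matter of sloppy constants: the damping factor is what makes the bracket in the theorem bounded uniformly in $N$ when $\tilde\omega_{0,N}$ grows (the maximand $\tilde\omega_{0,j}^{1/\rho}(1+2\omega_{0,j}/(\alpha L))^{1-1/\rho}e^{-\omega_{0,j}/(\alpha L)}$ can saturate at a finite $j^*$), whereas your $(1+2\omega_{0,N}/(\alpha L))^{1-1/\rho}\tilde\omega_{0,N}^{1/\rho}$ does not. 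To recover the stated estimate you need to introduce the discounted running maximum $H_j$ \emph{before} summing over the partition, not post-hoc; that device is the real content of the paper's proof and is missing from yours.

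One smaller point: your local estimate only absorbs when $\omega_{k,l}\le L'=\alpha L$, but the greedy partition can force a singleton step $k_{u+1}=k_u+1$ precisely when $\omega_{k_u,k_u+1}>\alpha L$; on such a step the coefficient $C\omega_{k_u,k_u+1}^{1/\kappa}$ need not be below $1$ and the self-referential term cannot be absorbed, so the uniform constant $A$ does not apply there. The paper sidesteps this by never absorbing at the level of $M_l$: it only uses the raw hypothesis $|\bz_{j_r,j_{r+1}}|\le CG_{j_{r+1}}\omega_{j_r,j_{r+1}}^{1/\kappa}+\tilde\omega_{j_r,j_{r+1}}^{1/\rho}$ and absorbs globally through $H^*_N$.
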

\begin{proof}
    Define the sequences
    \[ G_k\coloneq\max_{j=0,\dotsc,k}|\bz_j|,\enspace H_k\coloneq G_k\exp\left(
        -\frac{\omega_{0,k}}{\alpha L} \right),\enspace H^*_k\coloneq\max_{j=0,\dotsc,k}H_j.
    \]
		Subdivide the interval \(\{0,\dotsc,N\}\) into \(j_0=0<j_1<\dotsb<j_K<j_{K+1}=N\) where \(j_u\) is the largest integer in \(\{j_{u-1}+1,\dotsc,N\}\) such that \(\omega_{j_{u-1},j_u}\le\alpha L\) or \(j_u=j_{u-1}+1\) if such an integer does not exist.
		We note that by subadditivity we necessarily have, for each \(u=1,\dotsc,K\), that \[u\le1+2\frac{\omega_{0,j_u}}{\alpha L}.\]
		Indeed, by definition we have that for each \(r\), \(\omega_{j_{r-1},j_r+1}>\alpha L\), hence if \(j\in\{j_{u-1}+1,\dotsc, j_u\}\) we have
		\[
			0\le\omega_{j_{u-1}+1,j}\le2\omega_{0,j}-\sum_{r=0}^{u-2}\omega_{j_r,j_{r+1}+1}\le 2\omega_{0,j}-\alpha L(u-1),
		\]
		that is,
		\[
			u\le1+2\frac{\omega_{0,j}}{\alpha L}.
		\]

    Now, for \(j_{u-1}<j\le j_u\) we have
    \begin{align*}
        |\bz_{0,j}|&\le \sum_{r=0}^{u-2}|\bz_{j_r,j_{r+1}}|+|\bz_{j_{u-1},j}|\\
        &\le
				\sum_{r=0}^{u-2}\left( CG_{t_{r+1}}\omega_{j_r,j_{r+1}}^{1/\kappa}+\tilde\omega_{j_r,j_{r+1}}^{1/\rho} \right)+CG_j\omega_{j_{u-1},j}^{1/\kappa}+\tilde{\omega}_{j_{u-1},j}^{1/\rho}\\
				&\le C(\alpha L)^{1/\kappa}\sum_{r=0}^{u-1}G_{j_{r+1}}+u^{1-1/\rho}\tilde{\omega}_{0,j}^{1/\rho}.
    \end{align*}

    We bound the first term on the right-hand side by
    \begin{align*}
        \sum_{r=0}^{u-1}G_{j_{r+1}}&= \sum_{r=0}^{u-1}H_{j_{r+1}}\exp\left(\frac{\omega_{0,j_{r+1}}}{\alpha
        L} \right)\\
        &\le H^*_N\sum_{r=1}^u\mathrm e^r\\
        &\le H^*_N\mathrm e^{u+1}.
    \end{align*}
    
    Combining this with the previous bound we obtain
		\[ G_j\le|\bz_0|+C(\alpha L)^{1/\kappa}\mathrm e^{u+1}H^*_N+u^{1-1/\rho}\tilde{\omega}^{1/\rho}_{0,j} \]
    and so
		\[ H_j\le\left( |\bz_0|+\tilde{\omega}_{0,j}^{1/\rho}\left(1+2\frac{\omega_{0,j}}{\alpha L}\right)^{1-1/\rho} \right)\exp\left( -\frac{\omega_{0,j}}{\alpha L}
    \right)+C(\alpha L)^{1/\kappa}\mathrm e^2H_N^*. \]
    This implies that
		\[ H_N^*\le|\bz_0|+\max_{j=0,\dotsc,N}\left\{ \tilde{\omega}_{0,j}^{1/\rho}\left( 1+2\frac{\omega_{0,j}}{\alpha L} \right)^{1-1/\rho}\exp\left(
        -\frac{\omega_{0,j}}{\alpha L} \right) \right\}+C(\alpha L)^{1/\kappa}\mathrm e^2H_N^*
    \]
    and so, by our choice of \(\alpha\) we obtain
    \begin{align*}
        \max_{j=0,\dotsc,N}|\bz_j|=G_N&\le H_N^*\exp\left( \frac{\omega_{0,N}}{\alpha L} \right)\\
				&\le 2\exp\left( \frac{\omega_{0,N}}{\alpha L} \right)\left\{ |\bz_0|+ \max_{j=0,\dotsc,N}\left( \tilde{\omega}_{0,j}^{1/\rho}\left( 1+2\frac{\omega_{0,j}}{\alpha L} \right)^{1-1/\rho}\exp\left(
        -\frac{\omega_{0,j}}{\alpha L} \right) \right)\right\}
    \end{align*}
    and we are done.
\end{proof}

\subsection{The Sewing Lemma}
At the core of the theory of rough integration lies the Sewing Lemma \cite{FL2006,Gub04}.
Therefore, it is tightly connected with the solution theory of differential equations driven by rough
signals.
Since our main aim is to perform a precise analysis of the behavior of discrete equations driven by
irregular time-series, it is no doubt that its discrete analogue will play a prominent rôle here as
well.

We begin by showing some preliminary results.
\begin{lemma}
  Suppose \(s\in\mathcal S_{k,l}\) of length \(\#s=m\).
For any given control \(\omega\), there exists an integer \(j^*\) with \(1\le j^*\le m\) such that
\[ \omega_{s_{j^*-1},s_{j^*+1}}\le\frac2m\omega_{k,l}. \]
  \label{lemma:coarse}
\end{lemma}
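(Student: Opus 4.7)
The plan is a pigeonhole argument on the sum
\[
 S \coloneqq \sum_{j=1}^{m} \omega_{s_{j-1},\, s_{j+1}}.
\]
If I can show $S \le 2\omega_{k,l}$, then since there are $m$ terms, at least one of them must be bounded above by the average $\frac{2}{m}\omega_{k,l}$, yielding the desired index $j^*$.

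To bound $S$, I would split the sum according to the parity of $j$. For odd $j = 1, 3, 5, \ldots$, the intervals $[s_{j-1}, s_{j+1}] = [s_0, s_2], [s_2, s_4], [s_4, s_6], \ldots$ are consecutive and non-overlapping (they meet only at endpoints), and their union is contained in $[s_0, s_{m+1}] = [k, l]$. Hence iterated application of superadditivity of $\omega$ gives
\[
 \sum_{\substack{j=1\\ j \text{ odd}}}^{m} \omega_{s_{j-1},\, s_{j+1}} \le \omega_{k,l}.
\]
The same argument applied to even $j = 2, 4, \ldots$ produces intervals $[s_1, s_3], [s_3, s_5], \ldots$ whose union is contained in $[s_1, s_{m+1}] \subseteq [k, l]$, and by superadditivity again the corresponding partial sum is bounded by $\omega_{s_1, s_{m+1}} \le \omega_{k,l}$. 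Adding the two parity contributions gives $S \le 2\omega_{k,l}$.

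The main (and essentially only) subtlety is bookkeeping at the boundary: one has to check that the last interval in each parity chain actually ends at an index $\le s_{m+1}$, which simply requires a small case distinction depending on whether $m$ is even or odd (adjusting which of the two chains contains the final segment). Once this is verified, the pigeonhole principle applied to the $m$ terms of the sum $S \le 2\omega_{k,l}$ immediately yields some $j^* \in \{1, \ldots, m\}$ with
\[
 \omega_{s_{j^*-1},\, s_{j^*+1}} \le \tfrac{1}{m} S \le \tfrac{2}{m}\omega_{k,l},
\]
completing the proof.
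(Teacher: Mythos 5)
Your proof is correct and follows essentially the same approach as the paper's: both hinge on the bound $\sum_{j=1}^m \omega_{s_{j-1},s_{j+1}} \le 2\omega_{k,l}$ from super-additivity, then invoke pigeonhole (the paper phrases it as a proof by contradiction). You merely spell out the parity decomposition that the paper leaves implicit in its appeal to super-additivity.
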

\begin{proof}
  Suppose, on the contrary, that for any \(1\le j\le m\) we have that
  \[ \omega_{s_{j-1},s_{j+1}}>\frac2m\omega_{k,l}. \]
  Then this would imply that
  \[ 2\omega_{k,l}<\sum_{j=1}^m\omega_{s_{j-1},s_{j+1}}\le 2\omega_{k,l} \]
  by super-additivity, which is a contradiction.
\end{proof}

\begin{proposition}[Discrete sewing]
  Let \((\Xi_{k,l}:0\le k\le l\le N)\) be a triangular array, and suppose that there exist two controls \(\omega\)
  and \(\tilde \omega\) such that
  \[ \lvert\delta\Xi_{k,l,m}\rvert\le \omega_{k,l}^\alpha\tilde{\omega}_{l,m}^\beta \]
	for some \(\alpha,\beta>0\) with \(\alpha+\beta>1\), and for all \(0\le k<l\le N\).
  Then
  \[ \left\lvert \sum_{j=k}^{l-1}\Xi_{j,j+1}-\Xi_{k,l} \right\rvert\leq
  2^{(\alpha+\beta)}\zeta_N(\alpha+\beta)\omega_{k,l}^\alpha\tilde{\omega}_{k,l}^\beta \]
	for all \(0\le k<l\le N\), where \(\zeta_N\) denotes the partial sum of Riemann's zeta function
	\[
		\zeta_N(s)\coloneq\sum_{n=1}^Nn^{-s}.
	\]
  \label{proposition:sew}
\end{proposition}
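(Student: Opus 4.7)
The plan is to start from the finest possible partition of the interval, namely \(s^{(0)} = (k, k+1, \ldots, l)\), whose associated Riemann-type sum \(S(s^{(0)}) = \sum_{j=k}^{l-1}\Xi_{j,j+1}\) is exactly the sum appearing on the left-hand side of the desired estimate. I then coarsen this partition one intermediate point at a time until only the endpoints \(\{k,l\}\) remain, whose associated sum is \(\Xi_{k,l}\). Summing the errors incurred at each coarsening step via the triangle inequality will yield the desired bound, provided the per-step cost decays fast enough in the current partition length.

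The crucial preliminary reduction is to collapse the two controls into a single control raised to a power exceeding one. Set
\[
  \hat{\omega}_{k,l} \coloneqq \omega_{k,l}^{\alpha/(\alpha+\beta)}\,\tilde{\omega}_{k,l}^{\beta/(\alpha+\beta)},
\]
which is a control by \Cref{lem:control.product} since the exponents sum to \(1\). Using the monotonicity recorded in \Cref{rem:control.monotone}, the hypothesis becomes
\[
  |\delta\Xi_{k,l,m}| \leq \omega_{k,l}^{\alpha}\tilde{\omega}_{l,m}^{\beta} \leq \omega_{k,m}^{\alpha}\tilde{\omega}_{k,m}^{\beta} = \hat{\omega}_{k,m}^{\alpha+\beta},
\]
valid for all \(0 \leq k < l < m \leq N\). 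This is the cleaner form I work with throughout.

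For any partition \(s = (s_0 = k, s_1, \ldots, s_n, s_{n+1} = l) \in \mathcal S_{k,l}\) with \(n\) intermediate points, removing an intermediate point \(s_{j^*}\) changes the sum \(S(s) = \sum_{j=0}^{n}\Xi_{s_j,s_{j+1}}\) by exactly \(-\delta\Xi_{s_{j^*-1}, s_{j^*}, s_{j^*+1}}\), which by the reduced bound is controlled by \(\hat{\omega}_{s_{j^*-1}, s_{j^*+1}}^{\alpha+\beta}\). Applying \Cref{lemma:coarse} to the single control \(\hat{\omega}\), I locate an index \(j^*\) with \(\hat{\omega}_{s_{j^*-1}, s_{j^*+1}} \leq \tfrac{2}{n}\hat{\omega}_{k,l}\), so the cost of one coarsening step is at most
\[
  \left(\tfrac{2}{n}\right)^{\alpha+\beta}\hat{\omega}_{k,l}^{\alpha+\beta} = \left(\tfrac{2}{n}\right)^{\alpha+\beta}\omega_{k,l}^{\alpha}\tilde{\omega}_{k,l}^{\beta}.
\]

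Finally, starting from \(s^{(0)}\) with \(n_0 = l-k-1\) intermediate points and performing \(l-k-1\) successive coarsenings until only \(\{k,l\}\) remains, the triangle inequality accumulates these costs to
\[
  \left|\sum_{j=k}^{l-1}\Xi_{j,j+1} - \Xi_{k,l}\right| \leq \sum_{n=1}^{l-k-1}\left(\tfrac{2}{n}\right)^{\alpha+\beta}\omega_{k,l}^{\alpha}\tilde{\omega}_{k,l}^{\beta} \leq 2^{\alpha+\beta}\zeta_N(\alpha+\beta)\,\omega_{k,l}^{\alpha}\tilde{\omega}_{k,l}^{\beta},
\]
where we have used \(l-k-1 \leq N\). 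The main subtlety, and the reason the hypothesis \(\alpha+\beta>1\) is indispensable, lies precisely in this reduction: had one merely bounded \(|\delta\Xi|\) linearly in a product control, the coarsening procedure would yield a harmonic sum in \(n\); it is the power \(\alpha+\beta\) surviving through the monotonicity step that turns it into a \(\zeta\)-convergent series.
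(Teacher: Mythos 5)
Your proof is correct and follows essentially the same route as the paper: collapse the two controls into the single control $\hat{\omega} = \omega^{\alpha/(\alpha+\beta)}\tilde{\omega}^{\beta/(\alpha+\beta)}$ via \Cref{rem:control.monotone} and \Cref{lem:control.product}, then iteratively coarsen the full partition one point at a time using \Cref{lemma:coarse} and accumulate the per-step errors by the triangle inequality into the truncated zeta sum. The reduction to a single control, the choice of coarsening index, and the telescoping summation are all the same as in the paper's argument.
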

\begin{proof}
  By \Cref{rem:control.monotone} we deduce that \(|\delta\Xi_{k,l,m}|\le
  \omega_{k,m}^{\alpha}\tilde{\omega}_{k,m}^{\beta}\), and \Cref{lem:control.product} implies that
  \(\hat{\omega}\coloneqq \omega^{\tfrac\alpha\theta}\tilde{\omega}^{\tfrac\beta\theta}\) is a control.

  Now we apply a Young-style argument to estimate the above difference.
  First we observe that if \(l-k=1\) then the bound is trivial since the left-hand side vanishes.
  Therefore we assume that \(l-k\ge2\).
  By \Cref{lemma:coarse} we can find an index \(k<j^*<l\) such that \[
  \hat{\omega}_{j^*-1,j^*+1}\leq\frac2{(l-k-1)}\hat{\omega}_{k,l}. \]
  Hence, if we denote by \(s\coloneqq(k,k+1,\dotsc,j^*-1,j^*+1,\dotsc,l)\) we have
  \[ \left\lvert\sum_{j=k}^{l-1}\Xi_{j,j+1}-\sum_{s}\Xi_{s_j,s_{j+1}} \right\vert =
  |\delta\Xi_{j^*-1,j^*,j^*+1}|\le\left( \frac2{l-k-1} \right)^\theta\hat{\omega}_{k,l}^\theta. \]
  Then we can apply \Cref{lemma:coarse} again to the sequence \(s\) to obtain a
  ``coarser'' sequence \(s'\), containing one less point, and such that
  \[ \left\lvert\sum_s\Xi_{s_j,s_{j+1}}-\sum_{s'}\Xi_{s'_j,s'_{j+1}}\right\rvert\le\left(
  \frac2{l-k-2} \right)^\theta \hat{\omega}^\theta_{k,l}. \]
  Continuing in this way we obtain a sequence of coarsenings of the full sequence until we
  get to \(s^*=(k,l)\), and by using the triangular inequality we then deduce the estimate
  \[ \left\lvert \sum_{j=k}^{l-1}\Xi_{j,j+1}-\Xi_{k,l}
  \right\rvert\le2^\theta\sum_{r=1}^{l-k-1}\frac1{r^\theta}\hat{\omega}_{k,l}^\theta \]
  from where the conclusion follows.
\end{proof}

We will also need the following generalization of the Sewing Lemma, whose proof is straightforward.
\begin{proposition}[Generalized discrete sewing]
  Suppose that \(\Xi\) is a triangular array as before. Suppose that there are controls
  \(\omega_{r}\) and \(\tilde \omega_{r}\), and exponents \(\alpha_r,\beta_r>0\) such that
  \(\alpha_r+\beta_r>1\) for all \(r=1,\dotsc,n\).
  If
  \[ \lvert\delta\Xi_{k,l,m}\rvert\le\sum_{r=1}^n\omega_{r;k,l}^{\alpha_r}\tilde \omega_{r;l,m}^{\beta_r} \]
  then
  \[ \left\lvert \sum_{j=k}^{l-1}\Xi_{j,j+1}-\Xi_{k,l} \right\rvert\le
  2^{\hat\theta}\zeta_N(\hat{\theta})\sum_{r=1}^n\omega_{r;k,l}^{\alpha_r}\tilde \omega_{r;k,l}^{\beta_r} \]
  where \(\hat{\theta}\coloneq\displaystyle\min_{r=1,\dotsc,n}\{\alpha_r+\beta_r\}\).
  \label{proposition:gensew}
\end{proposition}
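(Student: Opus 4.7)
The plan is to reduce the multi-term hypothesis to the single-control setting of \Cref{proposition:sew} by assembling the $n$ controls into one, then running its coarsening argument. For each $r$, set $\theta_r\coloneq\alpha_r+\beta_r>1$ and apply \Cref{lem:control.product} to form the control $\hat\omega_r\coloneq\omega_r^{\alpha_r/\theta_r}\tilde\omega_r^{\beta_r/\theta_r}$, which satisfies $\hat\omega_r^{\theta_r}=\omega_r^{\alpha_r}\tilde\omega_r^{\beta_r}$. Monotonicity (\Cref{rem:control.monotone}) upgrades the hypothesis to
\[
  |\delta\Xi_{k,l,m}|\le\sum_{r=1}^n\hat\omega_{r;k,m}^{\theta_r}.
\]

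Next, since $\theta_r/\hat\theta\ge 1$ for every $r$, \Cref{lem:control.convex} makes each $\hat\omega_r^{\theta_r/\hat\theta}$ a control, and so is their sum $\mu\coloneq\sum_r\hat\omega_r^{\theta_r/\hat\theta}$. The elementary inequality $\sum_r a_r^{\hat\theta}\le(\sum_r a_r)^{\hat\theta}$, valid for $\hat\theta\ge 1$ and $a_r\ge 0$, applied with $a_r=\hat\omega_{r;k,m}^{\theta_r/\hat\theta}$ gives the single-control bound $|\delta\Xi_{k,l,m}|\le\mu_{k,m}^{\hat\theta}$. From here the coarsening argument of \Cref{proposition:sew} transcribes \emph{mutatis mutandis}: at level $m$, \Cref{lemma:coarse} applied to $\mu$ selects an index $j^*$ with $\mu_{s_{j^*-1},s_{j^*+1}}\le(2/m)\mu_{k,l}$, so the per-step error satisfies $|\delta\Xi_{s_{j^*-1},s_{j^*},s_{j^*+1}}|\le(2/m)^{\hat\theta}\mu_{k,l}^{\hat\theta}$. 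Summing over the coarsening levels $m=1,\dotsc,l-k-1$ yields
\[
  \left|\sum_{j=k}^{l-1}\Xi_{j,j+1}-\Xi_{k,l}\right|\le 2^{\hat\theta}\zeta_N(\hat\theta)\,\mu_{k,l}^{\hat\theta}.
\]

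The main (and only) subtlety is the final identification of $\mu_{k,l}^{\hat\theta}=(\sum_r\hat\omega_{r;k,l}^{\theta_r/\hat\theta})^{\hat\theta}$ with the sum $\sum_r\hat\omega_{r;k,l}^{\theta_r}$ that appears in the stated right-hand side. Convexity gives the easy direction $\sum_r\hat\omega_{r;k,l}^{\theta_r}\le\mu_{k,l}^{\hat\theta}$, while the reverse (what is needed here) follows from Jensen's inequality at the cost of a factor $n^{\hat\theta-1}$. Recovering the clean constant $2^{\hat\theta}\zeta_N(\hat\theta)$ announced in the proposition would require a term-aware refinement of the coarsening step, in which $j^*$ is chosen via whichever $\hat\omega_r$ currently dominates instead of via the combined control $\mu$; the combined-control reduction above nevertheless captures the essence of the argument and makes transparent why the proof is only a straightforward variation of \Cref{proposition:sew}.
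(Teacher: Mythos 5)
The paper itself offers no proof for this proposition, only the remark that it is ``straightforward,'' so there is no argument of the authors' to compare you against directly. Your reduction to a single control $\mu\coloneq\sum_r\hat\omega_r^{\theta_r/\hat\theta}$ is the natural adaptation of the proof of \Cref{proposition:sew}, and your intermediate steps are sound: each $\hat\omega_r^{\theta_r/\hat\theta}$ is a control by \Cref{lem:control.convex}, $\mu$ is a control as a sum of controls, and the superadditivity $\sum_r a_r^{\hat\theta}\le(\sum_r a_r)^{\hat\theta}$ for $\hat\theta\ge 1$ correctly delivers $|\delta\Xi_{k,l,m}|\le\mu_{k,m}^{\hat\theta}$, after which the coarsening argument gives $2^{\hat\theta}\zeta_N(\hat\theta)\,\mu_{k,l}^{\hat\theta}$.

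The difficulty you flag at the end is, however, a genuine gap and not merely a cosmetic one. The quantity you control, $\mu_{k,l}^{\hat\theta}=\bigl(\sum_r\hat\omega_{r;k,l}^{\theta_r/\hat\theta}\bigr)^{\hat\theta}$, dominates the target $\sum_r\hat\omega_{r;k,l}^{\theta_r}$ only with a Jensen-type loss of $n^{\hat\theta-1}$, and this loss is not an artifact of your bookkeeping: whenever the $n$ controls are of comparable size, $\mu_{k,l}^{\hat\theta}$ is genuinely larger than $\sum_r\hat\omega_{r;k,l}^{\theta_r}$ by exactly that factor. The ``term-aware refinement'' you gesture at also does not obviously rescue the stated constant, because the pigeonhole in \Cref{lemma:coarse} selects an index for \emph{one} control at a time, and a single $j^*$ that works \emph{simultaneously} for all $r$ in the form $\hat\omega_{r;s_{j^*-1},s_{j^*+1}}\le\tfrac2m\hat\omega_{r;k,l}$ need not exist; a union-bound version produces $(2n)^{\hat\theta}$ instead, which is worse than $n^{\hat\theta-1}2^{\hat\theta}$. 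Alternatively, pigeonholing on the control $\sum_r\hat\omega_{r;k,l}^{\theta_r-1}\hat\omega_r$ gives a clean per-step bound $\tfrac2m\sum_r\hat\omega_{r;k,l}^{\theta_r}$, but then summing over coarsening levels yields a harmonic sum rather than $\zeta_N(\hat\theta)$. In short: your proof establishes the proposition with constant $n^{\hat\theta-1}\,2^{\hat\theta}\zeta_N(\hat\theta)$ rather than $2^{\hat\theta}\zeta_N(\hat\theta)$, and this extra dependence on the number of terms appears hard to remove by any ``combine the controls'' strategy. For the purposes of this paper the distinction is harmless, since the number of summands is fixed in every subsequent application and the final constants $c_{p,N}$ absorb such factors anyway, but you should either state the $n$-dependent constant honestly or find a genuinely sharper selection rule for $j^*$; as written the proposal does not prove the claim with the advertised constant.
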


\section{Lifting time series}
\label{sse:lift}
Inspired by the theory of rough paths, we introduce an augmentation or lift of a given time series \(\bw\).
Recall that the convention of using lowercase Latin letters as sub-indices to index time, and lowercase Greek letters to index spatial components is
in place.
\begin{definition}
	Given a time series \(\bw\), we call its \emph{lift} the triangular array of \(d\)-by-\(d\) matrices \(\bbW\) defined by
	\[
		\bbW_{k,l}^{\mu\nu}\coloneq\sum_{j=k}^{l-1}(\bw^\mu_j-\bw^\mu_k)(\bw^\nu_{j+1}-\bw^\nu_{j+1}).
	\]
	We write \(\bW\coloneq(\bw,\bbW)\).
\end{definition}

The main purpose of this lift is to provide ``second order information'' about the time series.
It is, first of all, a discrete analogue of an iterated integral as in the rough path setting, but it may be interpreted as a generalized quadratic
covariation of the components of \(\bw\).
The lift \(\bbW\) is part of a much larger structure, known as the \emph{iterated-sums signature} of \(\bw\) \cite{DET2020}.

We now record a basic property of \(\bbW\) for later use:
\begin{theorem}
	\label{thm:chen}
	The time series lift \(\bbW\) of a time series \(\bw\) satisfies \emph{Chen's identity}: for all indices \(0\le k<l<m\le N\) and \(\mu,\nu\in\{1,\dotsc,d\}\) we have
			\[
				\delta\bbW^{\mu\nu}_{k,l,m}=\bw^\mu_{k,l}\bw^\nu_{l,m}.
			\]
\end{theorem}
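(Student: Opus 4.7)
The plan is to verify Chen's identity by direct computation, splitting the defining sum of $\bbW^{\mu\nu}_{k,m}$ at the intermediate index $l$. First I would write
\[
\bbW^{\mu\nu}_{k,m} = \sum_{j=k}^{l-1}(\bw^\mu_j - \bw^\mu_k)(\bw^\nu_{j+1} - \bw^\nu_j) + \sum_{j=l}^{m-1}(\bw^\mu_j - \bw^\mu_k)(\bw^\nu_{j+1} - \bw^\nu_j).
\]
The first block is exactly $\bbW^{\mu\nu}_{k,l}$ by definition, so it remains to analyze the tail sum from $l$ to $m-1$ and extract $\bbW^{\mu\nu}_{l,m}$ plus the cross term $\bw^\mu_{k,l}\bw^\nu_{l,m}$.

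The key step is the telescopic decomposition of the increment on the $\mu$-factor: write
\[
\bw^\mu_j - \bw^\mu_k = (\bw^\mu_j - \bw^\mu_l) + (\bw^\mu_l - \bw^\mu_k)
\]
inside the tail sum. Distributing over $(\bw^\nu_{j+1} - \bw^\nu_j)$ splits the tail into two pieces. The first piece reproduces $\bbW^{\mu\nu}_{l,m}$ directly from the definition of the lift on $[l,m]$. The second piece factors the constant $\bw^\mu_l - \bw^\mu_k = \bw^\mu_{k,l}$ out of the sum, leaving a telescoping sum
\[
\sum_{j=l}^{m-1}(\bw^\nu_{j+1} - \bw^\nu_j) = \bw^\nu_m - \bw^\nu_l = \bw^\nu_{l,m}.
\]

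Assembling these observations yields $\bbW^{\mu\nu}_{k,m} = \bbW^{\mu\nu}_{k,l} + \bbW^{\mu\nu}_{l,m} + \bw^\mu_{k,l}\bw^\nu_{l,m}$, which rearranges to the claim $\delta\bbW^{\mu\nu}_{k,l,m} = \bw^\mu_{k,l}\bw^\nu_{l,m}$. There is no genuine obstacle here; the only point to keep track of is the asymmetric role of $\mu$ and $\nu$ in the definition (the $\mu$-factor is based at the left endpoint $k$ while the $\nu$-factor involves a genuine one-step increment), which is precisely what forces the base point of the $\mu$-factor to be rewritten when the tail is reindexed from base $k$ to base $l$.
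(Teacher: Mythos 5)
Your computation is correct and is the standard direct verification: split the defining sum at $l$, re-base the $\mu$-factor from $k$ to $l$, and observe the residual sum telescopes to $\bw^\nu_{l,m}$. The paper records this identity without proof (deferring implicitly to the iterated-sums signature framework of \cite{DET2020}), and your argument is precisely the one that fills that gap; you also correctly read the evident typo in the paper's display of $\bbW^{\mu\nu}_{k,l}$, where the second factor should be the one-step increment $\bw^\nu_{j+1}-\bw^\nu_j$.
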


Given \(p\in[2,3)\), a pair \(\bW=(\bw,\bbW)\) consisting of a time series and its lift, and indices \(0\le k\le l\le N\), we define a semi-norm
\begin{equation}
\label{eq:hom.norm}
	\t|\bW|_{p;[k,l]}\coloneq\|\bw\|_{p;[k,l]}+\|\bbW\|^{1/2}_{p/2;[k,l]},
\end{equation}
and a pseudometric
\begin{equation}
\label{eq:pseudometric}
	\rho_p(\bW,\tilde\bW)\coloneq \|\bw-\tilde\bw\|_{p;[0,N]}+\|\bbW-\tilde\bbW\|_{p/2;[0,N]}.
\end{equation}
We note that both can be turned into a proper norm (resp. metric) if we add the absolute value of the initial value.

\section{Controlled difference equations}
\label{s:cres}
In this section we consider equations of the form
\begin{equation}
  \bx_{k+1}=\bx_k+\sum_{\mu=1}^df_\mu(\bx_k)(\bw^\mu_{k+1}-\bw^\mu_k),\quad\bx_0=\xi\in\R^m
  \label{eq:cre}
\end{equation}
for some vector fields \(f_1,\dotsc,f_d\) on \(\R^m\), and where \(k\) ranges between \(0\) and
some fixed time horizon \(n\in\N\).
Our main aim is to obtain some control over the size of the end-point value \(\bx_n\) of the
solution.

In view of the previous sections, and in particular of the bound in \cref{eq:pvarbound},
we will try to obtain good estimates for the \(p\)-variation norm \(\|\bx\|_{p;[0,n]}\).
Of course, such estimates will require some assumptions on the vector fields.
It turns out that we will not only be able to control the ``large scale'' behavior of
\(\bx\), but we will also obtain a cascade of estimates of some remainder terms, reminiscent
of a Taylor expansion.

The techniques needed to obtain those bounds will depend crucially on \(p\in[1,\infty)\).
At first, we distinguish two basic regimes: \(p\in[1,2)\) and \(p\in[2,\infty)\).
By analogy with the rough paths literature, we call the former the \emph{young regime}, and
the latter the \emph{rough regime}~--~even though there is strictly no notion of roughness
in our setting.
The rough regime can be further subdivided into the cases where \(p\in[n,n+1)\), which we
call the \emph{level n rough regime}.
The terminology will make itself clear later down the road.

A central tool for constructing solutions to ODEs driven by rough paths are the so-called
\emph{controlled paths}, introduced by Gubinelli \cite{Gub04}.
See also \cite{HK2015}.
In a nutshell, the notion of ``controlledness'' contains all the necessary analytical
estimates needed for the definition of a rough integral which then is used to give sense
to solutions of Rough Differential Equations.
In the present setting no such definition is needed since there are no divergences
appearing from considering \cref{eq:cre}.
Nonetheless, we can still derive similar bounds.
Note however that in our case the estimates are \emph{proven} rather than \emph{assumed}.

Given a vector field \(f\colon\R^m\to\R^m\) of class \(\cC^n_\b\), i.e. it and all its derivatives
up to order \(n\) are bounded, we define
\[ \|f\|_{\cC^n_\b}\coloneq\max_{k=1,\dotsc,n}\|D^kf\|_{\infty}. \]
If \(f=(f_1,\dotsc,f_d)\) is a collection of vector fields on \(\R^n\) of class \(\cC^n_\b\) (or,
equivalently, a map in \(\cC^n_\b(\R^n,\R^{dn})\)), we define
\[ \|f\|_{\cC^n_\b}\coloneq\max_{\mu=1,\dotsc,d}\|f_\mu\|_{\cC^n_\b}. \]

\begin{lemma}\label{lem:f.bound}
	Suppose \(f\in\cC^2_\b\) and let \(\bx,\tilde\bx\) be two time series. Then
	\[
		\|f(\bx)-f(\tilde\bx)\|_{p;[k,l]}\le 2^{(p-1)/p}\|f\|_{\cC^2_\b}\left(
		\|\bx-\tilde\bx\|^p_{p;[k,l]}+\|\tilde\bx\|^p_{p;[k,l]}\|\bx-\tilde\bx\|^p_{\infty;[0,l]} \right)^{1/p}.
	\]
	Furthermore, if \(f\in\cC^3_\b\) and we let
	\[
		T_{k,l}\coloneq f(\bx_l)-f(\bx_k)-Df(\bx_k)\delta\bx_{k,l}
	\]
	and similarly for \(\tilde\bx\), then
	\[
		\|T-\tilde T\|_{p/2;[k,l]}\le2^{(p-2)/p}\|f\|_{\cC^3_\b}\left[
		\|\bx-\tilde\bx\|^p_{p;[k,l]}(\|\bx\|^p_{p;[k,l]}+\|\tilde\bx\|^p_{p;[k,l]})^{1/2}+\|\tilde\bx\|_{p;[k,l]}^p\|\bx-\tilde\bx\|^{p/2}_{\infty;[0,l]} \right]^{2/p}
	\]
\end{lemma}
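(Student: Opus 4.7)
The plan is to derive both estimates from pointwise Taylor-type expansions of the increments, summed over an arbitrary partition. For the first bound, I write $f(\bx_l) - f(\bx_k) = A_{k,l}(\bx)\,\bx_{k,l}$ with $A_{k,l}(\bx) \coloneq \int_0^1 Df(\bx_k + t\bx_{k,l})\,dt$, and analogously for $\tilde\bx$. Inserting $\pm A_{k,l}(\bx)\tilde\bx_{k,l}$, the increment of $f(\bx) - f(\tilde\bx)$ splits as $A_{k,l}(\bx)(\bx_{k,l}-\tilde\bx_{k,l}) + (A_{k,l}(\bx)-A_{k,l}(\tilde\bx))\tilde\bx_{k,l}$. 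The first factor is bounded pointwise by $\|f\|_{\cC^2_\b}$; a mean-value estimate on $D^2 f$ gives $|A_{k,l}(\bx) - A_{k,l}(\tilde\bx)| \le \|f\|_{\cC^2_\b}\|\bx-\tilde\bx\|_{\infty;[0,l]}$. Applying $(a+b)^p \le 2^{p-1}(a^p + b^p)$, summing the $p$-th powers over any $s \in \mathcal S_{k,l}$ and taking the $p$-th root produces the first inequality.

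For the second bound, I start from the second-order Taylor expansion with integral remainder $T_{k,l} = B_{k,l}(\bx)(\bx_{k,l}, \bx_{k,l})$ where $B_{k,l}(\bx) \coloneq \int_0^1 (1-t)\,D^2 f(\bx_k + t\bx_{k,l})\,dt$, and analogously for $\tilde T$. The key decomposition inserts $\pm B_{k,l}(\bx)(\tilde\bx_{k,l}, \tilde\bx_{k,l})$ and uses the rank-two tensor identity $a\otimes a - b\otimes b = (a-b)\otimes a + b\otimes(a-b)$ to obtain
\[
  T_{k,l} - \tilde T_{k,l} = (B_{k,l}(\bx) - B_{k,l}(\tilde\bx))(\tilde\bx_{k,l}, \tilde\bx_{k,l}) + B_{k,l}(\bx)\bigl[(\bx_{k,l} - \tilde\bx_{k,l})\otimes \bx_{k,l} + \tilde\bx_{k,l}\otimes(\bx_{k,l} - \tilde\bx_{k,l})\bigr].
\]
The bounds $|B_{k,l}(\bx)| \le \|f\|_{\cC^3_\b}$ and, by a mean-value estimate on $D^3 f$, $|B_{k,l}(\bx) - B_{k,l}(\tilde\bx)| \le \|f\|_{\cC^3_\b}\|\bx-\tilde\bx\|_{\infty;[0,l]}$ yield the pointwise estimate $|T_{k,l} - \tilde T_{k,l}| \le C\|f\|_{\cC^3_\b}\bigl[\|\bx-\tilde\bx\|_{\infty;[0,l]}|\tilde\bx_{k,l}|^2 + |\bx_{k,l} - \tilde\bx_{k,l}|(|\bx_{k,l}| + |\tilde\bx_{k,l}|)\bigr]$.

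Raising this to the $p/2$-th power and summing over $s \in \mathcal S_{k,l}$, the supremum-type summand immediately yields $\|\bx-\tilde\bx\|_{\infty;[0,l]}^{p/2}\|\tilde\bx\|_{p;[k,l]}^{p}$. The mixed-product summand is handled by Cauchy-Schwarz (H\"older with conjugate exponents $2,2$):
\[
  \sum_j |\bx_{s_j,s_{j+1}} - \tilde\bx_{s_j,s_{j+1}}|^{p/2}(|\bx_{s_j,s_{j+1}}| + |\tilde\bx_{s_j,s_{j+1}}|)^{p/2} \le \|\bx-\tilde\bx\|_{p;[k,l]}^{p/2}\Bigl(\sum_j(|\bx_{s_j,s_{j+1}}| + |\tilde\bx_{s_j,s_{j+1}}|)^p\Bigr)^{1/2},
\]
followed by $(a+b)^p \le 2^{p-1}(a^p + b^p)$ to decompose the second factor as $\|\bx\|_{p;[k,l]}^p + \|\tilde\bx\|_{p;[k,l]}^p$. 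Taking the supremum over $s$ and the $(2/p)$-th root delivers the claimed bound.

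The main technical obstacle is choosing the right tensorial decomposition of $(\bx_{k,l})^{\otimes 2} - (\tilde\bx_{k,l})^{\otimes 2}$: the asymmetric split above is what produces a product structure amenable to Cauchy-Schwarz with exactly the exponent budget needed so that a $p/2$-variation seminorm on the left is controlled by a mixture of the $p$-variation and the supremum norms on the right. Once this split is in place, everything else reduces to routine bookkeeping of constants and of the scaling $p/2$ vs. $p$ inherent to working with the augmented second-level object.
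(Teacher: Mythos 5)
Your proof follows essentially the same route as the paper: both expand $f(\bx_l)-f(\bx_k)$ and the second-order remainder $T_{k,l}$ via integral Taylor forms, insert cross terms so that one piece picks up $|\bx_{k,l}-\tilde\bx_{k,l}|$ and the other picks up $\|\bx-\tilde\bx\|_{\infty;[0,l]}$ through a mean-value estimate on the next derivative of $f$, and then pass from the resulting pointwise bound to the $p$- or $p/2$-variation estimate. The only cosmetic differences are that you sum explicitly over partitions and invoke Cauchy--Schwarz where the paper notes the pointwise bound is of the form $(\text{control})^{1/p}$ (resp.\ $(\text{control})^{2/p}$) and invokes \Cref{lem:pvar.control,lem:control.product}, and that where you use the tensor split $a\otimes a - b\otimes b = (a-b)\otimes a + b\otimes(a-b)$ the paper uses the equivalent symmetric-form identity $D^2f(\bx)(\mathbf a,\mathbf a)-D^2f(\bx)(\mathbf b,\mathbf b)=D^2f(\bx)(\mathbf a-\mathbf b,\mathbf a+\mathbf b)$.
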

\begin{proof}
	Suppose first that \(f\in\cC^1_\b\). By the Fundamental Theorem of Calculus we may write
	\[
		f(\bx_l)-f(\bx_k)=\int_0^1Df(\bx_k+\tau\delta\bx_{k,l})\delta\bx_{k,l}\,\mathrm d\tau.
	\]
	Therefore, by adding and subtracting cross terms, we see that
	\begin{align*}
		|f(\bx_l)-f(\bx_k)-(f(\tilde\bx_l)-f(\tilde\bx_k))|&\le \begin{multlined}[t]\int_0^1|Df(\bx_k+\tau\delta\bx_{k,l})(\delta\bx_{k,l}-\delta\tilde\bx_{k,l})|\,\mathrm d\tau
		\\+\int_0^1|(Df(\bx_k+\tau\delta\bx_{k,l})-Df(\tilde\bx_k+\tau\delta\tilde\bx_{k,l}))\delta\tilde\bx_{k,l}|\,\mathrm d\tau\end{multlined}
	\end{align*}
	The right-hand side is bounded by
	\[
		\|f\|_{\cC^2_\b}(\|\bx-\tilde\bx\|_{p;[k,l]}+\|\tilde\bx\|_{p;[k,l]}\|\bx-\tilde\bx\|_{\infty;[0,l]})\le
		2^{1-1/p}\|f\|_{\cC^2_\b}\left( \|\bx-\tilde\bx\|_{p;[k,l]}^p+\|\tilde\bx\|_{p;[k,l]}^p\|\bx-\tilde\bx\|_{\infty;[0,l]}^p \right)^{1/p},
	\]
	and the result follows from \Cref{lem:pvar.control}.

	Now, assume that \(f\in\cC^2_\b\). By iterated application of the Fundamental Theorem of Calculus we may now write
	\[
		T_{k,l}=\int_0^1\int_0^\tau D^2f(\bx_k+\nu\delta\bx_{k,l})(\delta\bx_{k,l},\delta\bx_{k,l})\,\mathrm d\nu\mathrm d\tau.
	\]
	Inserting appropriate cross terms we obtain
	\begin{align*}
		|T_{k,l}-\tilde T_{k,l}|\le\begin{multlined}[t]\int_0^1\int_0^\tau \left\lvert
			D^2f(\bx_{k}+\nu\delta\bx_{k,l})(\delta\bx_{k,l},\delta\bx_{k,l})-D^2f(\bx_{k}+\nu\delta\bx_{k,l})(\delta\tilde\bx_{k,l},\delta\tilde\bx_{k,l})
		\right\rvert\,\mathrm d\nu\mathrm d\tau\\
		+\int_0^1\int_0^\tau\left\lvert
		\left[
		D^2f(\bx_{k}+\nu\delta\bx_{k,l})-D^2f(\tilde\bx_{k}+\nu\delta\tilde\bx_{k,l})\right](\delta\tilde\bx_{k,l},\delta\tilde\bx_{k,l})\right\rvert\,\mathrm
		d\nu\mathrm d\tau.
	\end{multlined}
	\end{align*}
	We now note that by symmetry of \(D^2f(\bx)\), it holds that for any \(\mathbf a,\mathbf b\in\R^n\) we have the identity
	\[
		D^2f(\bx)(\mathbf a,\mathbf a)-D^2f(\bx)(\mathbf b,\mathbf b)=D^2f(\bx)(\mathbf a-\mathbf b,\mathbf a+\mathbf b).
	\]
	Hence, the first term may be bounded by
	\[
		\|f\|_{\cC^3_\b}2^{1-1/p}\left\{\|\bx-\tilde\bx\|^{p/2}_{p;[k,l]}(\|\bx\|^p_{p;[k,l]}+\|\tilde\bx\|^p_{p;[k,l]})^{1/2}\right\}^{2/p}.
	\]
	The second term can be bounded, as before, by
	\[
		\|f\|_{\cC^3_\b}\left( \|\tilde\bx\|_{p;[k,l]}^p\|\bx-\tilde\bx\|^{p/2}_{\infty;[0,l]}\right)^{2/p}.
	\]
	Putting both terms together and proceeding as before we obtain the bound
	\[
		\|T-\tilde T\|_{p/2;[k,l]}\le 2^{1-2/p}\|f\|_{\cC^3_\b}\left[
		\|\bx-\tilde\bx\|^{p/2}_{p;[k,l]}(\|\bx\|^p_{p;[k,l]}+\|\tilde\bx\|^p_{p;[k,l]})^{1/2}+\|\tilde\bx\|_{p;[k,l]}^p\|\bx-\tilde\bx\|^{p/2}_{\infty;[0,l]}
	\right]^{2/p}.\qedhere
	\]
\end{proof}

\subsection{The Young regime}
In this regime, we can easily obtain good bounds with minimal assumptions on the \(f_i\).
These bounds have already been shown by Davie \cite{Dav08}, but it will be an enlightening
exercise to go through the proof in full details, since it will lay the foundations for
our approach in the rough regime.
Also, our methods are slightly different and already in this case they highlight the
importance of the rôle played by the Sewing Lemma (\Cref{proposition:sew,proposition:gensew}) and the rough Grönwall lemma (\Cref{prop:gronwall}).

Before beginning we define the remainder
\begin{equation}
    R_{k,l}\coloneq \bx_{k,l}-\sum_{\mu=1}^df_\mu(\bx_k)\bw^\mu_{k,l}
    \label{eq:young.remainder}
\end{equation}
so that
\[ \bx_{k,l}=\sum_{\mu=1}^df_\mu(\bx_k)\bw^\mu_{k,l}+R_{k,l}. \]

\begin{theorem}
Let \(1\le p<2\), and suppose that \(f=(f_1,\dotsc,f_d)\) is a collection of vector fields  in
\(\R^n\), of class \(\cC^1_{\mathrm b}\).
  The bound
  \begin{equation}
\label{eq:resnet.bound.1}
\|\bx\|_{p;[k,l]}\le 2\left(2^pC_{p,N}^{p-1}\|f\|_{\cC^1_\b}^p\|\bw\|_{p;[k,l]}^p\vee 2\|f\|_{\cC^1_\b}\|\bw\|_{p;[k,l]}\right)
  \end{equation}
  holds, with
	\[C_{p,N}\coloneq 2^{2/p}\zeta_N(2/p). \]
  \label{thm:young.regime}
\end{theorem}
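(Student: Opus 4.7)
The plan is to introduce the germ $\Xi_{k,l}\coloneqq \sum_{\mu=1}^d f_\mu(\bx_k)\bw^\mu_{k,l}$ and the remainder $R_{k,l}=\bx_{k,l}-\Xi_{k,l}$ from~\eqref{eq:young.remainder}. Since~\eqref{eq:cre} reads $\bx_{j,j+1}=\Xi_{j,j+1}$, telescoping gives $R_{k,l}=\sum_{j=k}^{l-1}\Xi_{j,j+1}-\Xi_{k,l}$, that is, $R$ is precisely the sewing defect associated to the germ $\Xi$. The product rule~\eqref{eq:deltader} then yields
\[
\delta\Xi_{k,l,m}=-\sum_{\mu=1}^d\bigl(f_\mu(\bx_l)-f_\mu(\bx_k)\bigr)\bw^\mu_{l,m},
\]
and bounding the $f$-factor by the mean value theorem together with $|\bx_{k,l}|\le\|\bx\|_{p;[k,l]}$ and $|\bw^\mu_{l,m}|\le\|\bw\|_{p;[l,m]}$ produces $|\delta\Xi_{k,l,m}|\le\|f\|_{\cC^1_\b}\|\bx\|_{p;[k,l]}\|\bw\|_{p;[l,m]}$.

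The assumption $p<2$ is exactly what makes the exponents $\alpha=\beta=1/p$ sum to $2/p>1$, so the discrete Sewing Lemma \Cref{proposition:sew} applies with controls $\omega_{k,l}=\|\bx\|_{p;[k,l]}^p$ and $\tilde\omega_{l,m}=\|\bw\|_{p;[l,m]}^p$ and yields
\[
|R_{k,l}|\le C_{p,N}\|f\|_{\cC^1_\b}\|\bx\|_{p;[k,l]}\|\bw\|_{p;[k,l]}.
\]
Raising to the $p$th power gives a control on the right hand side via \Cref{lem:control.product}, so \Cref{lem:pvar.control} upgrades this pointwise estimate to $\|R\|_{p;[k,l]}\le C_{p,N}\|f\|_{\cC^1_\b}\|\bx\|_{p;[k,l]}\|\bw\|_{p;[k,l]}$, and analogous reasoning gives $\|\Xi\|_{p;[k,l]}\le\|f\|_{\cC^1_\b}\|\bw\|_{p;[k,l]}$. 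Subadditivity of the $p$-variation semi-norm then delivers the fundamental bootstrap inequality
\[
\|\bx\|_{p;[k,l]}\le\|f\|_{\cC^1_\b}\|\bw\|_{p;[k,l]}+C_{p,N}\|f\|_{\cC^1_\b}\|\bw\|_{p;[k,l]}\|\bx\|_{p;[k,l]}.
\]

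The main obstacle is closing this bootstrap, since $\|\bx\|_{p;[k,l]}$ appears on both sides. On any subinterval $[k,l]$ with $C_{p,N}\|f\|_{\cC^1_\b}\|\bw\|_{p;[k,l]}\le\tfrac12$ the last term can be absorbed, yielding the local estimate $\|\bx\|_{p;[k,l]}\le 2\|f\|_{\cC^1_\b}\|\bw\|_{p;[k,l]}$; the singleton case $l=k+1$ obeys the same bound directly from~\eqref{eq:cre}. To extend to arbitrary $[k,l]$ I would apply \Cref{lem:maxbound} to the increments $\bx_{k,l}$ with the rescaled control
\[
\omega_{k,l}\coloneqq(2C_{p,N})^p\|f\|_{\cC^1_\b}^p\|\bw\|_{p;[k,l]}^p,
\]
chosen so that $\omega_{k,l}\le 1$ is equivalent to the smallness condition above, and with constant $1/C_{p,N}$ (using $|\bx_{k,l}|\le\|\bx\|_{p;[k,l]}\le 2\|f\|_{\cC^1_\b}\|\bw\|_{p;[k,l]}=C_{p,N}^{-1}\omega_{k,l}^{1/p}$). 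Unwinding the resulting $\omega_{k,l}^{1/p}\vee\omega_{k,l}$ dichotomy and simplifying constants via $(2C_{p,N})^p/C_{p,N}=2^{p}C_{p,N}^{p-1}$ reproduces the stated estimate~\eqref{eq:resnet.bound.1}.
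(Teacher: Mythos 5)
Your proof is correct and follows essentially the same route as the paper: same germ $\Xi_{k,l}=\sum_\mu f_\mu(\bx_k)\bw^\mu_{k,l}$, same application of the discrete Sewing Lemma with exponents $\alpha=\beta=1/p$, same absorption for small intervals (your smallness condition $C_{p,N}\|f\|_{\cC^1_\b}\|\bw\|_{p;[k,l]}\le\tfrac12$ coincides with the paper's $\bar\omega_{k,l}^{1/p}\le 1$), and same final appeal to \Cref{lem:maxbound} with the rescaled control and constant $C_{p,N}^{-1}$. The only cosmetic difference is in the absorption step: you use subadditivity of the $p$-variation semi-norm directly (triangle inequality $\|\bx\|_{p}\le\|\Xi\|_{p}+\|R\|_{p}$, valid since $p\ge1$) to get the bootstrap inequality in $\|\cdot\|_{p;[k,l]}$ at once, whereas the paper first bounds $|\bx_{k,l}|$ pointwise, raises to the $p$th power via $(a+b)^p\le 2^{p-1}(a^p+b^p)$, and sums over partitions. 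Both yield the same local estimate $\|\bx\|_{p;[k,l]}\le 2\|f\|_{\cC^1_\b}\|\bw\|_{p;[k,l]}$ under the same smallness condition, so the arguments are equivalent.
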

\begin{proof}
  Consider the triangular array
  \(\Xi_{k,l}\coloneq \sum_\mu f_\mu(\bx_k)\bw^\mu_{k,l}\).
  By \cref{eq:deltader} we immediately see that
	\[\delta\Xi_{k,l,m}=-\sum_{\mu=1}^d(f_\mu(\bx_l)-f_\mu(\bx_k))\bw^\mu_{l,m}, \]
  so that the usual Lipschitz bound implies
  \begin{equation*}
    \lvert\delta\Xi_{k,l,m}\rvert\le\|f\|_{\cC^1_\b}\lVert\bx\rVert_{p;[k,l]}\|\bw\|_{p;[k,l]},
  \end{equation*}
  and the hypothesis of \Cref{proposition:sew} is satisfied since \(2/p>1\).
  Thus, we obtain
  \[
    \left\lvert\sum_{j=k}^{l-1}\Xi_{j,j+1}-\Xi_{k,l}\right\rvert\le C_{p,N}
    \|f\|_{\cC^1_\b}\lVert\bx\rVert_{p;[k,l]}\|\bw\|_{p;[k,l]}.
  \]
	with \(C_{p,N}\coloneq 2^{2/p}\zeta_N(2/p)\).
  Now, we observe that by \cref{eq:cre},
  \[ \sum_{j=k}^{l-1}\Xi_{j,j+1}=\bx_{k,l} \]
  thus obtaining
  \begin{equation}
    \lvert R_{k,l}\rvert\le C_{p,N}\lVert f\rVert_{\cC^1_\b}
    \lVert\bx\rVert_{p;[k,l]}\|\bw\|_{p;[k,l]}.
    \label{eq:L1.R.bound}
  \end{equation}
  By \Cref{lem:pvar.control}, the same bound holds if we replace \(\lvert R_{k,l}\rvert\) on
  the left-hand side by \(\lVert R\rVert_{p/2;[k,l]}\).

  Using the relation between the remainder \(R\) and the increments of \(\bx\) we get
  \[
    \lvert\bx_{k,l}\rvert\le C_{p,N}\lVert f\rVert_{\cC^1_\b}\lVert\bx\rVert_{p;[k,l]}
    \|\bw\|_{p;[k,l]}+\lVert f\rVert_{\cC^{1}_\b}\|\bw\|_{p;[k,l]}
  \]
  for all \(0\le l<k\le N\).
  We deduce that
  \[ \lVert\bx\rVert^p_{p;[k,l]}\le
      2^{p-1}C^p_p\|f\|^p_{\cC^1_\b}\lVert\bx\rVert^p_{p;[k,l]}\|\bw\|_{p;[k,l]}^p
      +2^{p-1}\|f\|^p_{\cC^1_\b}\|\bw\|_{p;[k,l]}^p.
  \]

  If we now consider a pair \(k<l\) such that \(\bar \omega_{k,l}^{1/p}\coloneq 2C_{p,N}\|f\|_{\cC^1_\b}\|\bw\|_{p;[k,l]}\le 1\), we obtain
  \[
      \lVert\bx\rVert^p_{p;[k,l]}\le 2^p\|f\|_{\cC^1}^p\|\bw\|_{p;[k,l]}^p=C_{p,N}^{-p}\bar \omega_{k,l}
  \]
  for all such \((k,l)\). In particular
  \[
      |\bx_{k,l}|\le C_{p,N}^{-1}\bar \omega_{k,l}^{1/p}.
  \]
  By \cref{eq:cre} the same inequality also holds when \(l=k+1\).
  From \Cref{lem:maxbound} we then get
  \begin{align*}
      \|\bx\|_{p;[k,l]}&\le 3C_{p,N}^{-1}\left( \bar \omega_{k,l}\vee\bar \omega_{k,l}^{1/p}\right)\\
      &= 3C_{p}^{-1}\left(2^p\|f\|^p_{\cC^1_\b}C_{p,N}^p\|\bw\|_{p;[k,l]}^p\vee 2\|f\|_{\cC^1_\b}C_{p,N}\|\bw\|_{p;[k,l]}\right)
  \end{align*}
  from where the result follows.
\end{proof}
\begin{remark}
	The hypothesis on the vector fields \(f\), namely \(f\in\cC^1_\b\), can be relaxed to \(f\in\mathrm{Lip}^{\gamma-1}\) for some \(\gamma\in(p,2]\),
	meaning that \(f\) need not be differentiable but we merely need the existence of positive constant \(L\) such that
	\[
		|f(\bx)-f(\tilde\bx)|\le L|\bx-\tilde\bx|^{\gamma-1}
	\]
	for all \(\bx,\tilde\bx\in\R^n\).
\end{remark}

Finally we show that
\begin{theorem}
	\label{thm:main.young}
    Let \(1\le p<2\) and suppose \(\bx,\tilde{\bx}\) are two solutions to \cref{eq:cre} with initial
		conditions \(\xi,\tilde{\xi}\) and driven by \(\bw,\tilde{\bw}\) respectively.
		If furthermore \(f_1,\dotsc,f_d\in\cC^2_\b\) are such that \(\max_{\mu=1,\dotsc,d}\|f_\mu\|_{\cC^2_\b}\le L\), then
    \[
			\sup_{k=0,\dotsc,N}|\bx_k-\tilde{\bx}_k|\le 2c_{p,N}^{1/p}e^{c_{p,N} L^p(\|\bw\|^p_{p;[0,N]}+\|\tilde\bw\|_{p;[0,N]}^p)}(|\xi-\tilde{\xi}|+L\|\bw-\tilde\bw\|_{p;[0,N]})
    \]
		holds, where 
		\[
			c_{p,N}\coloneq (4e^2)^p(4^{p-1}C_{p,N}^p+1)
		\]
		and \(C_{p,N}\) is as in \Cref{thm:young.regime}.
\end{theorem}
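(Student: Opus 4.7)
Set $\bz_k := \bx_k - \tilde\bx_k$ and $\Delta_k := \bw_k - \tilde\bw_k$, following the template of the proof of \Cref{thm:young.regime}. Define the triangular array
\[
\Xi_{k,l} := \sum_{\mu=1}^d \bigl(f_\mu(\bx_k) - f_\mu(\tilde\bx_k)\bigr)\bw^\mu_{k,l} + \sum_{\mu=1}^d f_\mu(\tilde\bx_k)\Delta^\mu_{k,l},
\]
which, by construction, satisfies $\Xi_{k,k+1} = \bz_{k+1} - \bz_k$ via \eqref{eq:cre}, so that $\sum_{j=k}^{l-1}\Xi_{j,j+1} = \bz_{k,l}$. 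The goal is to bring the estimate into the exact form demanded by the rough Grönwall lemma (\Cref{prop:gronwall}), which will then immediately yield the claimed exponential bound.

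\textbf{Direct and sewing estimates.} The Lipschitz property of each $f_\mu$ (with constant at most $L = \|f\|_{\cC^2_\b}$) together with the trivial inequality $|\bw^\mu_{k,l}| \le \|\bw\|_{p;[k,l]}$ gives
\[
|\Xi_{k,l}| \le C_1 L\,|\bz_k|\,\|\bw\|_{p;[k,l]} + C_1 L\,\|\Delta\|_{p;[k,l]}.
\]
For the sewing estimate, apply the product rule \eqref{eq:deltader}:
\[
\delta\Xi_{k,l,m} = -\sum_\mu\bigl(f_\mu(\bx) - f_\mu(\tilde\bx)\bigr)_{k,l}\bw^\mu_{l,m} - \sum_\mu f_\mu(\tilde\bx)_{k,l}\,\Delta^\mu_{l,m}.
\]
Bound the first factor of each term using the first part of \Cref{lem:f.bound} together with $\|f(\tilde\bx)\|_{p;[k,l]} \le L\|\tilde\bx\|_{p;[k,l]}$. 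The resulting summands are products of a $p$-variation norm on $[k,l]$ and one on $[l,m]$ with total exponent $2/p > 1$, so the generalized discrete sewing lemma (\Cref{proposition:gensew}) is applicable and produces
\[
|\bz_{k,l} - \Xi_{k,l}| \le C_{p,N}L\bigl(\|\bz\|_{p;[k,l]} + \|\tilde\bx\|_{p;[k,l]}\|\bz\|_\infty\bigr)\|\bw\|_{p;[k,l]} + C_{p,N}L\,\|\tilde\bx\|_{p;[k,l]}\|\Delta\|_{p;[k,l]}.
\]

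\textbf{Eliminating the $\|\bz\|_p$ factor.} The apparent circularity caused by $\|\bz\|_{p;[k,l]}$ on the right-hand side is the crux of the argument. It is resolved by invoking the triangle inequality $\|\bz\|_{p;[k,l]} \le \|\bx\|_{p;[k,l]} + \|\tilde\bx\|_{p;[k,l]}$ together with the a priori bound of \Cref{thm:young.regime} applied separately to $\bx$ and $\tilde\bx$; this replaces $\|\bz\|_{p;[k,l]}$ by a quantity controlled by $\|\bw\|_{p;[k,l]} + \|\tilde\bw\|_{p;[k,l]}$, linearly on small sub-intervals via the $\max$-splitting \Cref{lem:maxbound}. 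Likewise $\|\tilde\bx\|_{p;[k,l]}$ is replaced by its a priori bound. Summing the direct and sewing estimates and recalling $\|\bz\|_\infty \le \max_{j\le l}|\bz_j|$, one arrives at
\[
|\bz_{k,l}| \le C_2 L\,\bigl(\max_{j\le l}|\bz_j|\bigr)\,\omega_{k,l}^{1/p} + \tilde\omega_{k,l}^{1/p},
\]
valid on every sub-interval $[k,l]$ on which $\omega_{k,l}$ is below a fixed threshold (and trivially when $l = k+1$), where $\omega_{k,l}$ is proportional to $L^p(\|\bw\|^p_{p;[k,l]} + \|\tilde\bw\|^p_{p;[k,l]})$ and $\tilde\omega_{k,l}$ to $L^p\|\Delta\|^p_{p;[k,l]}$.

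\textbf{Conclusion via rough Grönwall.} This is exactly the hypothesis of \Cref{prop:gronwall} with $\kappa = \rho = p$. Plugging in gives
\[
\max_j |\bz_j| \le 2\exp\!\bigl(\omega_{0,N}/(\alpha L')\bigr)\bigl(|\xi - \tilde\xi| + \max_j \tilde\omega_{0,j}^{1/p}\bigr),
\]
which rearranges into the stated inequality once the constants are tracked, producing the announced $c_{p,N} = (4\mathrm e^2)^p(4^{p-1}C_{p,N}^p + 1)$ from the combination of the sewing constant $C_{p,N}$ and the Grönwall normalization $\alpha L'$. The hard step is the third paragraph: without the a priori bound of \Cref{thm:young.regime} feeding back the $\|\bz\|_{p;[k,l]}$ factor produced by sewing, the estimate cannot be placed in the sup-norm form required by \Cref{prop:gronwall}; aligning the numerical constants to match the stated $c_{p,N}$ is a secondary bookkeeping nuisance.
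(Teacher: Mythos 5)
There is a genuine gap in the step you call ``Eliminating the $\|\bz\|_p$ factor''. Your germ $\Xi_{k,l}$ is fine (it is just a rearrangement of the one used in the paper), and your sewing estimate correctly produces a term $\|\bz\|_{p;[k,l]}\|\bw\|_{p;[k,l]}$ on the right-hand side. But the proposed resolution of the circularity --- invoking the triangle inequality $\|\bz\|_{p;[k,l]} \le \|\bx\|_{p;[k,l]} + \|\tilde\bx\|_{p;[k,l]}$ and then the a priori bound --- destroys exactly the structure you need. That replacement turns $\|\bz\|_{p;[k,l]}\|\bw\|_{p;[k,l]}$ into a quantity of order $\|\bw\|_{p;[k,l]}(\|\bw\|_{p;[k,l]} + \|\tilde\bw\|_{p;[k,l]})$, which does \emph{not} vanish when $\bw = \tilde\bw$ and $\xi = \tilde\xi$, even though $\bz \equiv 0$ in that case. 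Such a term is neither proportional to $\max_{j}|\bz_j|$ nor to any power of $\|\Delta\|_p$; hence it cannot be packaged as part of the $\tilde\omega_{k,l}^{1/\rho}$ in \Cref{prop:gronwall}. Your claimed final local inequality
\[
|\bz_{k,l}| \le C_2 L\,\bigl(\max_{j\le l}|\bz_j|\bigr)\,\omega_{k,l}^{1/p} + \tilde\omega_{k,l}^{1/p}
\]
with $\tilde\omega \propto \|\Delta\|_p^p$ does not follow from your manipulations.

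The correct mechanism, used by the paper, is a bootstrap absorption rather than a triangle inequality on $\bz$. After raising the combined estimate to the power $p$, the offending term has the form $(\text{const})\,L^p C_{p,N}^p\,\|\tilde\bw\|_{p;[k,l]}^p\,\|\bz\|_{p;[k,l]}^p$. On sub-intervals where $8^{p-1}L^pC_{p,N}^p\,\omega_{k,l}\le\tfrac12$, the coefficient of $\|\bz\|_{p;[k,l]}^p$ is at most $\tfrac12$, so this term is absorbed into the left-hand side, leaving a bound in terms of $\|\bz\|_{\infty;[0,l]}^p\,\omega_{k,l}$ and $\varepsilon_{k,l}$. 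Only \emph{after} this absorption does one invoke \Cref{thm:young.regime} to eliminate $\|\bx\|_{p;[k,l]}$ and $\|\tilde\bx\|_{p;[k,l]}$ (which appear with favorable prefactors), arriving at the rough-Grönwall hypothesis with $\kappa=\rho=p$. Without the absorption step your argument cannot close.
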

\begin{proof}
	In order to make the notation more compact we also define the controls
	\[
		\varepsilon_{k,l}\coloneq\|\bw-\tilde\bw\|_{p;[k,l]}^p,\quad\omega_{k,l}\coloneq\|\bw\|^p_{p;[k,l]}+\|\tilde\bw\|^p_{p;[k,l]}.
	\]

	Now, we define \(\bz_k\coloneq\bx_k-\tilde\bx_k\) and notice that
    \[
        |\bz_{k,l}|\le|R_{k,l}-\tilde
        R_{k,l}|+\sum_{\mu=1}^d|f_\mu(\bx_k)\bw_{k,l}-f_\mu(\tilde\bx_k)\tilde\bw_{k,l}|.
    \]

    For the second term we have the bound
		\begin{linenomath}
			\begin{align*}
				\sum_{\mu=1}^d|f_\mu(\bx_k)\bw^\mu_{k,l}-f_\mu(\tilde\bx_k)\tilde\bw^\mu_{k,l}|&\le\|f\|_{\infty}\varepsilon_{k,l}^{1/p}+\|Df\|_{\infty}|\bz_k|\|\tilde\bw\|_{p;[k,l]}\\
				&\le L\left(\varepsilon_{k,l}^{1/p}+|\bz_k|\|\tilde\bw\|_{p;[k,l]}\right).
			\end{align*}
		\end{linenomath}

    To bound the first term, we use the Sewing Lemma with the germ \(
    \Xi_{k,l}\coloneq\sum_\mu f_\mu(\bx_k)\bw^\mu_{k,l}-\sum_\mu f_\mu(\tilde\bx_k)\tilde\bw^\mu_{k,l}\).
    First we compute
    \[
        \delta\Xi_{k,l,m}=-\sum_{\mu=1}^d(f_\mu(\bx_l)-f_\mu(\bx_k))\bw^\mu_{l,m}-\sum_{\mu=1}^d(f_\mu(\tilde\bx_l)-f_\mu(\tilde\bx_k))\tilde\bw^\mu_{l,m}
    \]
    so that
		\begin{align*}
			|\delta\Xi_{k,l,m}|&\le \|f\|_{\cC^1_\b}\|\bx\|_{p;[k,l]}\varepsilon_{l,m}^{1/p}+\sum_{i=1}^d\|f_\mu(\bx)-f_\mu(\tilde\bx)\|_{p;[k,l]}\|\tilde\bw^\mu\|_{p;[l,m]}\\
			&\le L\|\bx\|_{p;[k,l]}\varepsilon_{l,m}^{1/p}+\max_{\mu=1,\dotsc,d}\|f_\mu(\bx)-f_\mu(\tilde\bx)\|_{p;[k,l]}\|\tilde\bw\|_{p;[l,m]}
		\end{align*}

		Hence by \Cref{lem:f.bound}
		\begin{linenomath}
			\begin{align*}
				\left|\sum_{j=k}^{l-1}\Xi_{j,j+1}-\Xi_{k,l}\right|&= |R_{k,l}-\tilde R_{k,l}|\\
				&\le
				2^{1-1/p}LC_{p,N}\left( \|\bx\|_{p;[k,l]}\varepsilon_{k,l}^{1/p}+\Bigl( \|\bz\|_{\infty;[k,l]}^p\|\tilde\bx\|_{p;[k,l]}^p+\|\bz\|^p_{p;[k,l]} \Bigr)^{1/p}\|\tilde\bw\|_{p;[k,l]} \right).
			\end{align*}
		\end{linenomath}

    Now, on the one hand, we see that
		\begin{linenomath}
			\begin{align*}
				|\bz_{k,l}|&\le |R_{k,l}-\tilde R_{k,l}|+\sum_{\mu=1}^d|f_\mu(\bx_k)\bw^\mu_{k,l}-f_\mu(\tilde\bx_k)\tilde\bw^\mu_{k,l}|\\
				&\le |R_{k,l}-\tilde R_{k,l}| + L\left(\|\bz\|_{\infty;[k,l]}\|\tilde\bw\|_{p;[k,l]} + \varepsilon_{k,l}\right),
			\end{align*}
		\end{linenomath}
    so the bound
		\[ \|\bz\|_{p;[k,l]}\le 8^{1-1/p}LC_{p,N}\left\{ \|\bx\|_{p;[k,l]}^p\varepsilon_{k,l}+\left( \|\bz\|^p_{\infty;[k,l]}\|\tilde\bx\|^p_{p;[k,l]}+\|\bz\|^p_{p;[k,l]} \right)\|\tilde\bw\|_{p;[k,l]}^p + \|\bz\|_{\infty;[0,l]}^p\|\tilde\bw\|_{p;[k,l]}^{p}+\varepsilon_{k,l} \right\}^{1/p}
		\]
    holds.
		Therefore, for any pair of indices \(k<l\) such that \(8^{p-1}L^pC_{p,N}^p\omega_{k,l}\le\frac12\), we have that
		\[
			\|\bz\|_{p;[k,l]}^p\le 2^{3p-2}L^pC_{p,N}^p\left(1+\|\bx\|_{p;[k,l]}^p\right)\varepsilon_{k,l}+\|\bz\|_{\infty;[k,l]}^p(2^{3p-2}L^pC_{p,N}^p\|\tilde\bw\|_{p;[k,l]}^p+\|\tilde\bx\|^p_{p;[k,l]}).
		\]
		By the a priori estimate in \Cref{thm:young.regime}, we see that
		\[
			\|\bx\|_{p;[k,l]}\le 2L\|\bw\|_{p;[k,l]},\quad \|\tilde\bx\|_{p;[k,l]}\le 2L\|\tilde\bw\|_{p;[k,l]}
		\]
		so that
		\[
			\|\bz\|_{p;[k,l]}^p\le A_p\varepsilon_{k,l}+A_p\|\bz\|_{\infty;[0,l]}^p\omega_{k,l}.
		\]
		where \(A_p\coloneq 2^pL^p(4^{p-1}C_{p,N}^p+1)\).

    On the other hand, when \(l=k+1\) we have
		\begin{linenomath}
			\begin{align*}
				|\bz_{k+1}-\bz_k|&\le\sum_{\mu=1}^d\Bigl| f_\mu(\bx_k)\bw^\mu_{k,k+1}-f_\mu(\tilde\bx_k)\tilde\bw^\mu_{k,k+1}
				\Bigr|\\
				&\le A_p^{1/p}\varepsilon_{k,l}^{1/p}+A_p^{1/p}\|\bz\|_{\infty;[0,k+1]}\|\tilde\bw\|_{p;[k,l]}
			\end{align*}
		\end{linenomath}

		Finally, by using \Cref{prop:gronwall} we obtain
		\begin{linenomath}
			\begin{equation*}
				|\bx_N-\tilde\bx_N|\le 2c_{p,N}e^{c_{p,N}\|\tilde\bw\|_{p;[0,N]}^p}\left(|\xi-\tilde\xi|+\|\bw-\tilde\bw\|_{p;[0,N]} \right)
			\end{equation*}
		\end{linenomath}
		where we have used that \(x\mapsto (1+cx)^\alpha e^{-x}\) is decreasing over \([0,\infty)\) as long as \(c\alpha\le 1\), and
		\[
			c_{p,N}\coloneq 2^pe^{2p}A_p.\qedhere
		\]
\end{proof}
\begin{remark}
	As before, the hypothesis on the vector fields can be relaxed to requiring that \(f\in\mathrm{Lip}^\gamma\) for some \(\gamma\in(p,2]\).
	In this case this means that \(f\in\cC^1_\b\) and there is a constant \(L>0\) such that
	\[
		\|Df(\bx)-Df(\tilde\bx)\|\le L|\bx-\tilde\bx|^{\gamma-1}
	\]
	for all \(\bx,\tilde\bx\in\R^n\).
\end{remark}

\subsection{The case of \texorpdfstring{\(2\le p<3\)}{2≤p<3}}
We show analogues of the results in the previous section for the case where now we take \(p\in[2,3)\).

We keep the previous notations, i.e., we consider \cref{eq:cre} and but redefine \(R\) in \cref{eq:young.remainder} as 
\begin{equation}
	R_{k,l}\coloneq	\bx_{k,l}-\sum_{\mu=1}^df_\mu(\bx_k)\bw_{k,l}^\mu-\sum_{\mu,\nu=1}^dDf_\nu(\bx_k)f_\mu(\bx_k)\bbW_{k,l}^{\mu\nu},
	\label{eq:rough.remainder}
\end{equation}
and we furthermore consider
\begin{align}
	I_{k,l}&\coloneq \bx_{k,l}-\sum_{\mu=1}^df_\mu(\bx_k)\bw_{k,l}^\mu\label{eq:I}\\
	J^\mu_{k,l}&\coloneq f_\mu(\bx_l)-f_\mu(\bx_k)-\sum_{\nu=1}^dDf_\mu(\bx_k)f_\nu(\bx_k)\bw_{k,l}^{\nu}\label{eq:J}\\
	&= f_\mu(\bx_l)-f_\mu(\bx_k)-Df_\mu(\bx_k)\delta\bx_{k,l}+Df_\mu(\bx_k)I_{k,l}.\label{eq:jdef.2}
\end{align}
where in \cref{eq:rough.remainder,eq:J}, \(\bbW\) denotes the iterated-sums lift of \(\bw\).
\begin{definition}
	For \(\mu,\nu\in\{1,\dotsc,d\}\) we define the vector field \(F_{\mu\nu}\colon\R^n\to\R^n\)
	\[
		F_{\mu\nu}(\bx)\coloneqq Df_\nu(\bx)f_\mu(\bx).
	\]
\end{definition}
Observe that by successive application of the chain rule one can show that if
\[
	\|F_{\mu\nu}\|_{C^k_\b}\le (2^{k+1}-1)\|f\|^2_{C^{k+1}_\b}
\]
for all \(k\ge 0\) as long as the norm on the right-hand side is finite.

\begin{lemma}\label{lem:J.bound}
	Let \(p\in[2,3)\) and \(f\in\cC^2_\b\). The bound
	\[
		\max_{\mu=1,\dotsc,d}\|J^\mu\|_{p/2;[k,l]}\le 2^{1-2/p}\|f\|_{\cC^2_\b}\left( \|I\|^{p/2}_{p/2;[k,l]}+\frac12\|\bx\|_{p;[k,l]}^{p} \right)^{2/p}.
	\]
	holds.
\end{lemma}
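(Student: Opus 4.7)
My plan is to start from the second representation~\eqref{eq:jdef.2}, which splits $J^\mu_{k,l}$ cleanly into a pure Taylor remainder plus a $Df \cdot I$ term:
\[
J^\mu_{k,l} = T^\mu_{k,l} + Df_\mu(\bx_k)\, I_{k,l}, \qquad T^\mu_{k,l} \coloneqq f_\mu(\bx_l) - f_\mu(\bx_k) - Df_\mu(\bx_k)\delta\bx_{k,l}.
\]
For the Taylor remainder $T^\mu_{k,l}$, the integral form
\[
T^\mu_{k,l} = \int_0^1 (1-\tau)\, D^2 f_\mu(\bx_k + \tau\delta\bx_{k,l})(\delta\bx_{k,l}, \delta\bx_{k,l})\,\mathrm d\tau
\]
together with $\|D^2 f_\mu\|_\infty \le \|f\|_{\cC^2_\b}$ gives the pointwise bound $|T^\mu_{k,l}| \le \tfrac12 \|f\|_{\cC^2_\b} |\delta\bx_{k,l}|^2$. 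Combining this with the trivial estimate $|Df_\mu(\bx_k) I_{k,l}| \le \|f\|_{\cC^2_\b}|I_{k,l}|$ produces the single pointwise inequality
\[
|J^\mu_{k,l}| \le \|f\|_{\cC^2_\b}\left(\tfrac12 |\delta\bx_{k,l}|^2 + |I_{k,l}|\right).
\]

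Next I would promote this pointwise bound to a $p/2$-variation estimate. Since $p/2 \ge 1$, convexity of $t \mapsto t^{p/2}$ yields $(a+b)^{p/2} \le 2^{p/2-1}(a^{p/2} + b^{p/2})$. Applying this to the bound above and summing over any subsequence $s \in \mathcal S_{k,l}$ gives
\[
\sum_{j} |J^\mu_{s_j, s_{j+1}}|^{p/2} \le 2^{p/2-1} \|f\|_{\cC^2_\b}^{p/2} \left( 2^{-p/2} \sum_j |\delta\bx_{s_j, s_{j+1}}|^p + \sum_j |I_{s_j, s_{j+1}}|^{p/2} \right).
\]
Taking the supremum over $s$ bounds the right-hand side by $2^{p/2-1}\|f\|_{\cC^2_\b}^{p/2}\bigl(2^{-p/2}\|\bx\|_{p;[k,l]}^p + \|I\|_{p/2;[k,l]}^{p/2}\bigr)$. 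Extracting the $2/p$-th root converts the leading constant $2^{p/2-1}$ into $2^{1-2/p}$. Since the right-hand side is independent of $\mu$, passing to the maximum over $\mu$ is free.

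The only mild subtlety — and this is where I expect to have to be careful with constants — is that the argument produces $2^{-p/2}\|\bx\|_{p;[k,l]}^p$ in the bracket, whereas the statement has the slightly larger $\tfrac12 \|\bx\|_{p;[k,l]}^p$. This gap is harmless: $2^{-p/2} \le \tfrac12$ for all $p \ge 2$ (with equality at $p = 2$), so monotonicity of $t \mapsto t^{2/p}$ upgrades our estimate to the stated one. Apart from this constant-chasing, there is no serious obstacle; the proof is essentially a one-shot pointwise Taylor bound lifted to the $p/2$-variation norm by convexity.
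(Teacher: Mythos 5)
Your proof is correct and follows essentially the same route as the paper: a pointwise second-order Taylor bound \(|J^\mu_{k,l}|\le\|f\|_{\cC^2_\b}\bigl(|I_{k,l}|+\tfrac12|\delta\bx_{k,l}|^2\bigr)\), then lifted to the \(p/2\)-variation seminorm via convexity of \(t\mapsto t^{p/2}\). The paper packages the lift as an appeal to \Cref{lem:pvar.control} with the control \(\|I\|_{p/2;[k,l]}^{p/2}+\tfrac12\|\bx\|_{p;[k,l]}^{p}\), while you expand the supremum over partitions directly -- which is exactly how that lemma is proved, so the two arguments coincide (the paper's Lagrange-form remainder vs.\ your integral form is purely cosmetic).
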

\begin{proof}
	Performing a first-order Taylor expansion on \(f_i\) we see that
	\[
		J^\mu_{k,l}=Df_\mu(\bx_k)\left( \bx_{k,l}-\sum_{\nu=1}^df_\nu(\bx_k)\bw^\nu_{k,l} \right)+\frac12 D^2f_\mu(\bx_k+\theta\bx_{k,l})(\bx_{k,l},\bx_{k,l})
	\]
	for some \(\theta\in(0,1)\).
	Thus
	\begin{align*}
		|J^\mu_{k,l}|&\le\|f\|_{\cC^2_\b}\left( |I_{k,l}|+\frac{1}{2}|\bx_{k,l}|^2 \right)\\
		&\le 2^{1-p/2}\|f\|_{\cC^2_\b}\left( \|I\|_{p/2;[k,l]}^{p/2}+\frac12\|\bx\|^p_{p;[k,l]} \right)^{2/p}.
	\end{align*}
	The proof is concluded by applying \Cref{lem:pvar.control}.
\end{proof}

\begin{theorem}
	\label{thm:rough.apriori}
	Let \(p\in[2,3)\), and suppose that \(\bx\) solves \cref{eq:cre} with \(f\in C^2_\b\).
	The bounds
	\begin{align*}
		\|\bx\|_{p;[k,l]}&\le K_p(\t|\bW|_{p;[k,l]}^p\vee\t|\bW|_{p;[k,l]})\\
		\|I\|_{p/2;[k,l]}&\le K'_p(\t|\bW|_{p;[k,l]}^{2p}\vee\t|\bW|_{p;[k,l]}^2)
	\end{align*}
	hold, with
	\[
		K_p\coloneq 9\times 2^{6(1-1/p)}\left( 1\vee 6^{1-1/p}8^{(1-1/p)(1-2/p)}C_{p,N}^{1-1/p} \right),\quad K'_p\coloneq 3\times 2^{1-2/p}\left( 1+K_p^2 \right).
	\]
\end{theorem}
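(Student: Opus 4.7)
The plan is to repeat the Young-regime scheme of \Cref{thm:young.regime}, but with a second-order germ that incorporates the iterated-sums lift. Set
\[
\Xi_{k,l} \coloneq \sum_{\mu=1}^d f_\mu(\bx_k)\bw^\mu_{k,l} + \sum_{\mu,\nu=1}^d F_{\mu\nu}(\bx_k)\bbW^{\mu\nu}_{k,l},
\]
so that, by the definitions \eqref{eq:rough.remainder} and \eqref{eq:I}, one has $\bx_{k,l}-\Xi_{k,l} = R_{k,l}$ and $I_{k,l} = R_{k,l} + \sum_{\mu,\nu} F_{\mu\nu}(\bx_k)\bbW^{\mu\nu}_{k,l}$. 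Since $\bbW^{\mu\nu}_{j,j+1}=0$ and $\bx_{j,j+1} = \sum_\mu f_\mu(\bx_j)\bw^\mu_{j,j+1}$ by \eqref{eq:cre}, the one-step germ satisfies $\Xi_{j,j+1}=\bx_{j,j+1}$, so the telescoping identity $\sum_{j=k}^{l-1}\Xi_{j,j+1} - \Xi_{k,l} = -R_{k,l}$ will let the Sewing Lemma produce an estimate on $R$ directly.

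I would next compute $\delta\Xi_{k,l,m}$ using the product rule \eqref{eq:deltader} and Chen's identity (\Cref{thm:chen}). Substituting $f_\mu(\bx_l)-f_\mu(\bx_k) = \sum_\nu F_{\nu\mu}(\bx_k)\bw^\nu_{k,l} + J^\mu_{k,l}$ from \eqref{eq:J}, the ``quadratic in $\bw$'' cross term cancels exactly the Chen contribution $F_{\mu\nu}(\bx_k)\bw^\mu_{k,l}\bw^\nu_{l,m}$, leaving the clean expression
\[
\delta\Xi_{k,l,m} = -\sum_{\mu=1}^d J^\mu_{k,l}\,\bw^\mu_{l,m} - \sum_{\mu,\nu=1}^d \bigl(F_{\mu\nu}(\bx_l)-F_{\mu\nu}(\bx_k)\bigr)\bbW^{\mu\nu}_{l,m}.
\]
\Cref{lem:J.bound} combined with the sub-additivity $(a+b)^{2/p}\le a^{2/p}+b^{2/p}$ (valid since $2/p\le 1$) gives $\|J^\mu\|_{p/2;[k,l]}\lesssim_p \|f\|_{\cC^2_\b}\bigl(\|I\|_{p/2;[k,l]}+\|\bx\|_{p;[k,l]}^2\bigr)$, while the mean-value bound yields $|F_{\mu\nu}(\bx_l)-F_{\mu\nu}(\bx_k)|\le 3\|f\|_{\cC^2_\b}^2\|\bx\|_{p;[k,l]}$. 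Both summands then factor as products of controls whose Hölder exponents satisfy $\tfrac2p+\tfrac1p=\tfrac3p>1$, so the Generalized Sewing Lemma \ref{proposition:gensew} applies and yields an estimate of the form
\[
|R_{k,l}| \lesssim_{p,N} \|f\|_{\cC^2_\b}\bigl(\|I\|_{p/2;[k,l]}+\|\bx\|_{p;[k,l]}^2\bigr)\|\bw\|_{p;[k,l]} + \|f\|_{\cC^2_\b}^2\,\|\bx\|_{p;[k,l]}\,\|\bbW\|_{p/2;[k,l]}^{1/2},
\]
which via \Cref{lem:pvar.control} propagates to the corresponding $p/2$-variation bound.

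The final step is to close the system. From the identities above one obtains $|\bx_{k,l}|\le \|f\|_{\cC^1_\b}\|\bw\|_{p;[k,l]}+|I_{k,l}|$ and $|I_{k,l}|\le |R_{k,l}|+\|f\|_{\cC^2_\b}^2\|\bbW\|_{p/2;[k,l]}$, which, together with the Sewing estimate, produce a coupled pair of inequalities for $\|\bx\|_{p;[k,l]}$ and $\|I\|_{p/2;[k,l]}$ expressed entirely in terms of $\t|\bW|_{p;[k,l]}$. On every dyadic-like interval with $\t|\bW|_{p;[k,l]}$ below an explicit threshold (chosen so the self-referential terms involving $\|I\|$ and $\|\bx\|^2$ can be absorbed by the left-hand side) one solves this system to obtain $\|\bx\|_{p;[k,l]}\lesssim \t|\bW|_{p;[k,l]}$ and $\|I\|_{p/2;[k,l]}\lesssim \t|\bW|_{p;[k,l]}^{2}$. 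A final application of \Cref{lem:maxbound} extends these small-interval bounds to arbitrary $[k,l]$, yielding the $\vee$-structure stated in the theorem, and a careful chase of the multiplicative constants fixes $K_p$ and $K_p'$ in the explicit form given.

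The main obstacle is precisely this self-consistent bookkeeping: the Sewing output is not linear in $\|\bx\|_{p}$ but contains the quadratic term $\|\bx\|_p^2\|\bw\|_p$, so one cannot read off the a priori bound directly. The trick is to identify the correct smallness regime $\t|\bW|_{p;[k,l]}\le \delta_p$ in which the quadratic term is dominated, and then to bootstrap via \Cref{lem:maxbound} — exactly the mechanism that produces the $\vee$ in the statement.
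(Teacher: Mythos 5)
Your proposal follows the paper's own proof almost line by line: the same germ
\[
\Xi_{k,l}=\sum_{\mu} f_\mu(\bx_k)\bw^\mu_{k,l}+\sum_{\mu,\nu}F_{\mu\nu}(\bx_k)\bbW^{\mu\nu}_{k,l},
\]
the same computation of $\delta\Xi$ via Chen's identity and \eqref{eq:deltader}, the same invocation of \Cref{lem:J.bound} together with the Lipschitz bound $\|F_{\mu\nu}\|_{\cC^1_\b}\le 3\|f\|_{\cC^2_\b}^2$, the same application of the generalized Sewing Lemma with exponents $(2/p,1/p)$, and the same closing strategy through \Cref{lem:maxbound} on intervals where $\t|\bW|_{p;[k,l]}$ is small. (Incidentally, your expression for $\delta\Xi$ with $\bw^\mu_{l,m}$ in the $J$-term is correct; the paper's display has a harmless typo $\bw^\mu_{k,l}$ there.)

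The one point where your sketch is materially imprecise is the closing step, and it is precisely the step you yourself flag as the ``main obstacle.'' After absorbing $\|I\|_{p/2;[k,l]}$ (which does appear linearly on both sides, so smallness of $\t|\bW|_{p;[k,l]}$ lets you move it across), you are left with an inequality of the shape $\|\bx\|_{p;[k,l]}\le c_1\|\bx\|_{p;[k,l]}^2 + c\,\t|\bW|_{p;[k,l]}$, where the coefficient $c_1$ is an absolute constant \emph{not} multiplied by $\|\bw\|$ or $\t|\bW|$. Saying the quadratic ``can be absorbed by the left-hand side'' therefore does not work: no amount of interval-shrinking makes $c_1$ small. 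What actually closes the argument is the dichotomy for the two roots of $y\mapsto y^2-y+c\t|\bW|$: once $\t|\bW|_{p;[k,l]}$ is small enough that the discriminant is positive, the inequality forces either $\|\bx\|_{p;[k,l]}\gtrsim 1$ or $\|\bx\|_{p;[k,l]}\lesssim\t|\bW|_{p;[k,l]}$, and one must rule out the first branch by a starting-point/connectedness argument — $\|\bx\|_{p;[k,k+1]}=|\bx_{k,k+1}|\le\|\bw\|_{p;[k,l]}$ is tiny, and the $p$-variation cannot jump across the spectral gap between the two branches as $l$ increases. This dichotomy, not absorption, is the single genuinely new idea in the rough regime compared with \Cref{thm:young.regime}, and it is worth spelling out when you write the argument in full.
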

\begin{proof}
	As before, by scaling we may assume that \(\|f\|_{C^2_\b}\le 1\).
	Consider the triangular array
	\[
		\Xi_{k,l}\coloneq\sum_{\mu=1}^df_\mu(\bx_k)\bw_{k,l}^\mu+\sum_{\mu,\nu=1}^d F_{\mu\nu}(\bx_k)\bbW_{k,l}^{\mu\nu}.
	\]
	We immediately see that
	\begin{align*}
		\delta\Xi_{k,l,m}&= -\sum_{\mu=1}^d\Bigl(f_\mu(\bx_l)-f_\mu(\bx_k)\Bigr)\bw_{l,m}^\mu+\sum_{\mu,\nu=1}^d\Bigl\{
		F_{\mu\nu}(\bx_k)\bw_{k,l}^\mu\bw_{l,m}^\nu-\bigl(F_{\mu\nu}(\bx_l)-F_{\mu\nu}(\bx_k)\bigr)\bbW_{l,m}^{\mu\nu} \Bigr\}\\
		&= -\sum_{\mu=1}^dJ_{k,l}^\mu\bw_{k,l}^\mu-\sum_{\mu,\nu=1}^d\bigl(F_{\mu\nu}(\bx_l)-F_{\mu\nu}(\bx_k)\bigr)\bbW_{l,m}^{\mu\nu}.
	\end{align*}
	
	Since \(f\in C^2_\b\), the function \(F_{\mu\nu}\) is in \(C^1_\b\) for all \(\mu,\nu\in\{1,\dotsc,d\}\) and \(\|F_{\mu\nu}\|_{C^1_\b}\le 3\).
	Therefore, we have the \(p\)-variation estimate
	\[
		\|F_{\mu\nu}(\bx)\|_{p;[k,l]}\le 3\|\bx\|_{p;[k,l]}.
	\]
	Hence, we see that
	\[
		|\delta\Xi_{k,l,m}|\le \sum_{\mu=1}^d\|J^\mu\|_{p/2;[k,l]}\|\bw^\mu\|_{p;[l,m]}+3\|\bx\|_{p;[k,l]}\sum_{\mu,\nu=1}^d\|\bbW^{\mu\nu}\|_{p/2;[l,m]}.
	\]
	By \Cref{proposition:gensew} we see that
	\begin{equation}
		\label{eq:R.bound}
		|R_{k,l}|\le 3C_{p,N}\left( \sum_{\mu=1}^d\|J^\mu\|_{p/2;[k,l]}\|\bw^\mu\|_{p;[k,l]}+\|\bx\|_{p;[k,l]}\|\bbW\|_{p/2;[k,l]} \right).
	\end{equation}
	
	Now we note that
	\[
		|I_{k,l}|\le|R_{k,l}|+\sum_{\mu,\nu=1}^d\left| F_{\mu\nu}(\bx_k)\bbW_{k,l}^{\mu\nu} \right|\le|R_{k,l}|+\|\bbW\|_{p/2;[k,l]}
	\]
	so that, by \cref{eq:R.bound} and \Cref{lem:J.bound}, we obtain
	\begin{align*}
		|I_{k,l}|&\le 3\times2^{1-2/p}C_{p,N}\left\{(\|I\|_{p/2;[k,l]}^{p/2}+\|\bx\|_{p;[k,l]}^p)^{2/p}\|\bw\|_{p;[k,l]}+\|\bx\|_{p;[k,l]}\|\bbW\|_{p/2;[k,l]}
		\right\}+\|\bbW\|_{p/2;[k,l]}\\
		&\le 3\times
		4^{1-2/p}C_{p,N}\left\{(\|I\|_{p/2;[k,l]}^{p/2}+\|\bx\|_{p;[k,l]}^p)\|\bw\|^{p/2}_{p;[k,l]}+\|\bx\|^{p/2}_{p;[k,l]}\|\bbW\|^{p/2}_{p/2;[k,l]}
		\right\}^{2/p}+\|\bbW\|_{p/2;[k,l]}.
	\end{align*}
	Taking \(\tfrac{p}{2}\)-variation we obtain
	\[
		\|I\|_{p/2;[k,l]}\le 3\times
		8^{1-2/p}C_{p,N}\left\{(\|I\|_{p/2;[k,l]}^{p/2}+\|\bx\|_{p;[k,l]}^p)\|\bw\|^{p/2}_{p;[k,l]}+\|\bx\|^{p/2}_{p;[k,l]}\|\bbW\|^{p/2}_{p/2;[k,l]}
		\right\}^{2/p}+2^{1-2/p}\|\bbW\|_{p/2;[k,l]}.
	\]
	
	If \(0\le k<l\le N\) are such that \(3\times 8^{1-2/p}C_{p,N}\t|\bW|_{p;[k,l]}\le\frac12\) then
	\begin{equation}
		\label{eq:I.bound}
		\|I\|_{p/2;[k,l]}\le 3\times 2^{1-2/p}(\|\bx\|_{p;[k,l]}^2+\|\bbW\|_{p/2;[k,l]}).
	\end{equation}

	Finally, noting that
	\[
		|\bx_{k,l}|\le|I_{k,l}|+\sum_{\mu=1}^d|f_\mu(\bx_k)\bw^\mu_{k,l}|\le\|I\|_{p/2;[k,l]}+\|\bw\|_{p;[k,l]}.
	\]
	we obtain, by taking \(p\)-variation, that
	\begin{align*}
		\|\bx\|_{p;[k,l]}&\le 2^{1-1/p}(\|I\|_{p/2;[k,l]}+\|\bw\|_{p;[k,l]})\\
		&\le 3\times 2^{2-3/p}\|\bx\|_{p;[k,l]}^2+3\times 2^{2-3/p}\|\bbW\|_{p/2;[k,l]}+2^{1-1/p}\|\bw\|_{p;[k,l]}.
	\end{align*}
	Let \(c_1\coloneq 3\times 2^{2-3/p}\), \(c_2\coloneq 2^{1-1/p}\).
	Multiplying both sides by \(c_1\) and using our hypothesis on the interval \([k,l]\) we obtain that
	\[
		c_1\|\bx\|_{p;[k,l]}\le(c_1\|\bx\|_{p;[k,l]})^2+c_1(c_1+c_2)\t|\bW|_{p;[k,l]}.
	\]
	Set \(c\coloneq c_1(c_1+c_2)\).
	Reducing further the size of the interval if necessary, we may assume that \(c_1(c_1+c_2)\t|\bW|_{p;[k,l]}\le\frac14\), so that we must necessarily have that one of the following inequalities hold:
	\[
		4\|\bx\|_{p;[k,l]}\ge\frac{1+\sqrt{1-4c\t|\bW|_{p;[k,l]}}}{2}\ge\frac12,\quad 4\|\bx\|_{p;[k,l]}\le\frac{1-\sqrt{1-4c\t|\bW|_{p;[k,l]}}}{2}\le 2c\t|\bW|_{p;[k,l]}.
	\]
	In fact, the second inequality holds if \(\|\bx\|_{p;[k,l]}\le\frac18\).
	Applying \Cref{lem:maxbound}, we obtain
	\[
		\|\bx\|_{p;[k,l]}\le K_p\left( \t|\bW|_{p;[k,l]}\vee\t|\bW|_{p;[k,l]}^p \right)
	\]
	with
	\[
		K_p\coloneq 9\times 2^{6(1-1/p)}\left( 1\vee 6^{1-1/p}8^{(1-1/p)(1-2/p)}C_{p,N}^{1-1/p} \right).
	\]
	This shows the first estimate.

	Now replace this bound in \cref{eq:I.bound} and use the fact that \(\|\bbW\|_{p/2;[k,l]}\le\t|\bW|_{p;[k,l]}^2\) to obtain
	\[
		\|I\|_{p/2;[k,l]}\le 3\times2^{1-2/p}\left(1+ K_p^2\right)\t|\bW|_{p;[k,l]}^2
	\]
\end{proof}

Finally, we prove our main result, namely the stability bound for the evolution of features through the network.
But first, we extend \Cref{lem:J.bound} to bound the difference of the remainders \(J\) and \(\tilde J\) for solutions of difference equations driven
by different noises.
\begin{lemma}
	Let \(\bx\) and \(\tilde\bx\) be solutions to \cref{eq:cre} driven by \(\bw\) and \(\tilde\bw\), respectively.
	Then, for all \(0\le k<l\le N\) we have
	\[
		\max_{\mu=1,\dotsc,d}\|J^\mu-\tilde J^\mu\|_{p/2;[k,l]}\le\begin{multlined}[t]2^{2-4/p}\|f\|_{\cC^3_\b}\Bigl\{\|I-\tilde I\|_{p/2;[k,l]}+\|\bx-\tilde\bx\|_{\infty;[0,l]}\left( \|\tilde
		I\|_{p/2;[k,l]}+\|\tilde\bx\|_{p;[k,l]}^2 \right)\\
+\|\bx-\tilde\bx\|_{p;[k,l]}(\|\bx\|_{p;[k,l]}+\|\tilde\bx\|_{p;[k,l]})\Bigr\}\end{multlined}
	\]
	\label{lem:J.bound.diff}
\end{lemma}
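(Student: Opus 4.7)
The plan is to exploit identity~\eqref{eq:jdef.2}, which writes $J^\mu_{k,l} = T^\mu_{k,l} + Df_\mu(\bx_k) I_{k,l}$ with $T^\mu_{k,l} \coloneq f_\mu(\bx_l) - f_\mu(\bx_k) - Df_\mu(\bx_k) \delta\bx_{k,l}$ the second-order Taylor remainder already controlled in \Cref{lem:f.bound}. Subtracting the analogous decomposition for $\tilde J^\mu$ and inserting the cross term $Df_\mu(\bx_k)\tilde I_{k,l}$ yields
\[
J^\mu_{k,l} - \tilde J^\mu_{k,l} = (T^\mu_{k,l} - \tilde T^\mu_{k,l}) + Df_\mu(\bx_k)(I_{k,l} - \tilde I_{k,l}) + (Df_\mu(\bx_k) - Df_\mu(\tilde\bx_k))\tilde I_{k,l}.
\]
I would then estimate the $\frac{p}{2}$-variation of each of the three summands separately and combine.

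For the first summand, apply the second estimate in \Cref{lem:f.bound} to $f_\mu$, which produces a bound of the form $\|f\|_{\cC^3_\b}[\,\cdot\,]^{2/p}$. Since $p \geq 2$ gives $2/p \leq 1$, the sub-additivity $(a+b)^{2/p} \leq a^{2/p} + b^{2/p}$ converts this into a sum of a term proportional to $\|\bx - \tilde\bx\|_{p;[k,l]}(\|\bx\|_{p;[k,l]}+\|\tilde\bx\|_{p;[k,l]})$ and a term proportional to $\|\tilde\bx\|_{p;[k,l]}^2\|\bx-\tilde\bx\|_{\infty;[0,l]}$, matching the last two contributions on the right-hand side of the target inequality. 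For the second summand, a direct sub-multiplicativity argument on the $\frac{p}{2}$-variation of a product between a path-indexed multiplier and a triangular array gives
\[
\|Df_\mu(\bx)(I - \tilde I)\|_{p/2;[k,l]} \leq \|Df_\mu\|_\infty\|I - \tilde I\|_{p/2;[k,l]} \leq \|f\|_{\cC^3_\b}\|I - \tilde I\|_{p/2;[k,l]}.
\]
The third summand is treated analogously, using the mean-value bound $\|Df_\mu(\bx) - Df_\mu(\tilde\bx)\|_\infty \leq \|D^2 f_\mu\|_\infty\|\bx-\tilde\bx\|_\infty \leq \|f\|_{\cC^3_\b}\|\bx-\tilde\bx\|_{\infty;[0,l]}$, to obtain
\[
\|(Df_\mu(\bx) - Df_\mu(\tilde\bx))\tilde I\|_{p/2;[k,l]} \leq \|f\|_{\cC^3_\b}\|\bx - \tilde\bx\|_{\infty;[0,l]}\|\tilde I\|_{p/2;[k,l]}.
\]

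Summing the three contributions and gathering the various factors of $2$ into the single constant $2^{2-4/p}$ yields the stated inequality. The principal bookkeeping obstacle is reconciling the $(A+B)^{2/p}$ shape coming from \Cref{lem:f.bound} with the linear sum form of the remaining two contributions; this is handled purely by the sub-additivity inequality mentioned above, together with the elementary estimate $(\|\bx\|^p_{p;[k,l]}+\|\tilde\bx\|^p_{p;[k,l]})^{1/p} \leq \|\bx\|_{p;[k,l]}+\|\tilde\bx\|_{p;[k,l]}$. No conceptually new argument is required beyond this careful accounting of how products interact with the $\frac{p}{2}$-variation seminorm.
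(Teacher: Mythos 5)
Your proposal is correct and follows essentially the same route as the paper: both start from the identity~\eqref{eq:jdef.2} writing $J^\mu_{k,l}=T^\mu_{k,l}+Df_\mu(\bx_k)I_{k,l}$, insert the cross term $Df_\mu(\bx_k)\tilde I_{k,l}$, bound the Taylor-remainder difference $T^\mu-\tilde T^\mu$ via \Cref{lem:f.bound}, and handle the remaining two products through the pointwise bounds $\|Df_\mu\|_\infty\le\|f\|_{\cC^3_\b}$ and $\|Df_\mu(\bx_k)-Df_\mu(\tilde\bx_k)\|\le\|f\|_{\cC^3_\b}\|\bx-\tilde\bx\|_{\infty;[0,l]}$, finishing with sub-additivity of $t\mapsto t^{2/p}$. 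The only cosmetic difference is that the paper keeps the two cross-term contributions together as a single scalar bound on $B_{k,l}$ and takes $p/2$-variation once, while you split them and bound each separately; the accounting is the same either way and your stated constant $2^{2-4/p}$ is adequate.
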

\begin{proof}
	Using \cref{eq:jdef.2} we see that
	\[
		J^\mu_{k,l}-\tilde J^\mu_{k,l}=T^\mu_{k,l}-\tilde T^\mu_{k,l}+B_{k,l}
	\]
	where,
	\[ 
		\begin{aligned}
			B_{k,l}&\coloneq Df_\mu(\bx_k)I_{k,l}-Df_\mu(\tilde\bx_k)\tilde I_{k,l},\\
			T^\mu_{k,l}&\coloneq f_\mu(\bx_l)-f_\mu(\bx_l)-Df_\mu(\bx_k)\delta\bx_{k,l},
		\end{aligned}
	\]
	and \(\tilde T^i\) is defined similarly.

	Adding and subtracting cross terms we obtain the following bound for the \(B\) term:
	\[
		|B_{k,l}|\le\|f\|_{\cC^3_\b}(|I_{k,l}-\tilde I_{k,l}|+|\bx_k-\tilde\bx_k||\tilde I_{k,l}|),
	\]
	so that
	\[
		\|B\|_{p/2;[k,l]}\le 2^{1-2/p}\|f\|_{\cC^3_\b}\left( \|I-\tilde I\|^{p/2}_{p/2;[k,l]}+\|\bx-\tilde\bx\|_{\infty;[0,l]}^{p/2}\|\tilde
		I\|_{p/2;[k,l]}^{p/2} \right)^{2/p}.
	\]
	From \Cref{lem:f.bound} we obtain that
	\[
		\|T-\tilde T\|_{p/2;[k,l]}\le2^{1-2/p}\|f\|_{\cC^3_\b}\left[
		\|\bx-\tilde\bx\|_{p;[k,l]}(\|\bx\|_{p;[k,l]}+\|\tilde\bx\|_{p;[k,l]})+\|\tilde\bx\|_{p;[k,l]}^2\|\bx-\tilde\bx\|_{\infty;[0,l]} \right]
	\]
	and the proof is finished.
\end{proof}
\begin{theorem}
    Let \(2\le p<3\) and suppose \(\bx,\tilde{\bx}\) are two solutions to \cref{eq:cre} with initial
		conditions \(\xi,\tilde{\xi}\) and driven by \(\bw,\tilde{\bw}\) respectively.
    If furthermore \(f_1,\dotsc,f_d\in\cC^3_\b\), then
    \[
			\sup_{k=0,\dotsc,N}|\bx_k-\tilde{\bx}_k|\le
			2c_{p,N}'e^{c_{p,N}\|f\|_{\cC^3_\b}^p\left(\t|\bW|^p_{p;[0,N]}+\t|\tilde\bW|_{p;[0,N]}^p\right)}(|\xi-\tilde{\xi}|+\|f\|_{\cC^3_\b}\rho_p(\bW,\tilde\bW))
    \]
		holds, where 
		\[
			c_{p,N}\coloneq 2^pe^{2p}(L_p+K_p^2+K_p')^p,\quad c_{p,N}'=2^{1-2/p}c_{p,N}^{1/p}
		\]
		with
		\[
			L_p\coloneq 4^{3/2-2/p}\times 7^{2-3/p}\times C_{p,N},
		\]
		the constant \(C_{p,N}\) appears in \Cref{proposition:sew} and \(K_p,K_p'\) are as in \Cref{thm:rough.apriori}.
	\label{thm:main.resnet}
\end{theorem}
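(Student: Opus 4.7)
The plan is to mimic closely the proof of \Cref{thm:main.young}, but at the second level of the rough signature. We set $\bz_k \coloneq \bx_k-\tilde\bx_k$ and introduce, in analogy with \eqref{eq:rough.remainder}, the combined germ
\[
\Xi_{k,l} \coloneq \sum_{\mu=1}^d\bigl(f_\mu(\bx_k)\bw^\mu_{k,l}-f_\mu(\tilde\bx_k)\tilde\bw^\mu_{k,l}\bigr) + \sum_{\mu,\nu=1}^d\bigl(F_{\mu\nu}(\bx_k)\bbW^{\mu\nu}_{k,l}-F_{\mu\nu}(\tilde\bx_k)\tilde\bbW^{\mu\nu}_{k,l}\bigr),
\]
so that $\sum_{j=k}^{l-1}\Xi_{j,j+1}-\Xi_{k,l}=R_{k,l}-\tilde R_{k,l}$ by telescoping and the definition of $R$. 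The aim is to get an estimate on $R-\tilde R$, then transfer it to $I-\tilde I$ via $I_{k,l}-\tilde I_{k,l} = (R_{k,l}-\tilde R_{k,l}) + \sum_{\mu,\nu}(F_{\mu\nu}(\bx_k)\bbW^{\mu\nu}_{k,l}-F_{\mu\nu}(\tilde\bx_k)\tilde\bbW^{\mu\nu}_{k,l})$, and finally to $\bz_{k,l}$ via $\bz_{k,l}=(I_{k,l}-\tilde I_{k,l})+\sum_\mu(f_\mu(\bx_k)\bw^\mu_{k,l}-f_\mu(\tilde\bx_k)\tilde\bw^\mu_{k,l})$.

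The first serious step is to compute $\delta\Xi_{k,l,m}$. Using Chen's identity (\Cref{thm:chen}) to cancel the terms $F_{\mu\nu}(\bx_k)\bw^\mu_{k,l}\bw^\nu_{l,m}$ against the Chen correction of $\bbW$, and rewriting $f_\mu(\bx_l)-f_\mu(\bx_k)-\sum_\nu F_{\nu\mu}(\bx_k)\bw^\nu_{k,l}=J^\mu_{k,l}$, one finds
\[
\delta\Xi_{k,l,m} = -\sum_\mu\bigl(J^\mu_{k,l}\bw^\mu_{l,m}-\tilde J^\mu_{k,l}\tilde\bw^\mu_{l,m}\bigr) - \sum_{\mu,\nu}\bigl((F_{\mu\nu}(\bx_l)-F_{\mu\nu}(\bx_k))\bbW^{\mu\nu}_{l,m}-(F_{\mu\nu}(\tilde\bx_l)-F_{\mu\nu}(\tilde\bx_k))\tilde\bbW^{\mu\nu}_{l,m}\bigr).
\]
Splitting each bracket into a ``$J$-difference'' part paired with $\tilde\bw$ and a ``noise-difference'' part paired with $J$ (and analogously for the $F_{\mu\nu}$ terms, using that $F_{\mu\nu}\in \cC^2_\b$ with norm controlled by $\|f\|_{\cC^3_\b}^2$) yields a bound of the form $|\delta\Xi_{k,l,m}|\le\sum_r \omega_{r;k,l}^{\alpha_r}\tilde\omega_{r;l,m}^{\beta_r}$ with each $\alpha_r+\beta_r>1$, so \Cref{proposition:gensew} applies. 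The controls $\omega_r,\tilde\omega_r$ will mix $\|\bx\|_p,\|\tilde\bx\|_p$, $\t|\bW|_p,\t|\tilde\bW|_p$, $\rho_p(\bW,\tilde\bW)$, $\|J-\tilde J\|_{p/2}$, and $\|\bz\|_\infty$.

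Next, I substitute the bound of \Cref{lem:J.bound.diff} for $\|J^\mu-\tilde J^\mu\|_{p/2;[k,l]}$ and the a priori estimates from \Cref{thm:rough.apriori} for $\|\bx\|_p,\|\tilde\bx\|_p,\|I\|_{p/2},\|\tilde I\|_{p/2}$. The resulting estimate expresses $\|R-\tilde R\|_{p/2;[k,l]}$, and hence $\|I-\tilde I\|_{p/2;[k,l]}$, as a linear combination of $\|\bz\|_{\infty;[0,l]}$ (multiplied by a power of $\t|\bW|_p+\t|\tilde\bW|_p$) plus a noise-difference term involving $\rho_p(\bW,\tilde\bW)$ and $|\xi-\tilde\xi|$. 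A localization trick like in the proof of \Cref{thm:rough.apriori} — restricting attention to intervals where $\t|\bW|_p+\t|\tilde\bW|_p$ is smaller than an absolute constant so that the nonlinear self-contribution to $\|I-\tilde I\|_{p/2}$ can be absorbed — converts this into a usable linear estimate. Adding the elementary bound for $\sum_\mu(f_\mu(\bx_k)\bw^\mu_{k,l}-f_\mu(\tilde\bx_k)\tilde\bw^\mu_{k,l})$ then yields
\[
|\bz_{k,l}|\le C\|\bz\|_{\infty;[0,l]}\,\omega_{k,l}^{1/p}+\tilde\omega_{k,l}^{1/p},
\]
where $\omega_{k,l}\coloneq\|f\|_{\cC^3_\b}^p(\t|\bW|_{p;[k,l]}^p+\t|\tilde\bW|_{p;[k,l]}^p)$ and $\tilde\omega_{k,l}$ captures $|\xi-\tilde\xi|^p$ together with $\|f\|_{\cC^3_\b}^p\rho_p(\bW,\tilde\bW)^p$, valid on short intervals and when $l=k+1$.

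Finally, I feed this into the rough Grönwall \Cref{prop:gronwall} with $\kappa=\rho=p$; the factor $(1+2\omega_{0,j}/(\alpha L))^{1-1/p}e^{-\omega_{0,j}/(\alpha L)}$ is uniformly bounded, so after constant consolidation one obtains exactly the advertised bound with $c_{p,N}$ of the stated form. The most delicate step is the bookkeeping of constants in the $\delta\Xi$ bound: each term must be rearranged so the ``diagonal'' contributions controllable by $\|\bz\|_{\infty}$ are separated cleanly from the forcing contributions in $\rho_p(\bW,\tilde\bW)$ and $|\xi-\tilde\xi|$, and so that powers of $\t|\bW|_p$ never exceed what the Grönwall framework can absorb; this is where the localization and the a priori bounds from \Cref{thm:rough.apriori} are used crucially.
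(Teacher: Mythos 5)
Your plan reproduces the paper's proof essentially line by line: you use the same combined germ $\Xi_{k,l}$, compute $\delta\Xi$ via Chen's identity and the Leibniz rule \eqref{eq:deltader} to recover $J^\mu_{k,l}\bw^\mu_{l,m}$, $\tilde J^\mu_{k,l}\tilde\bw^\mu_{l,m}$, and the $F_{\mu\nu}$-increment terms, apply \Cref{proposition:gensew}, feed in \Cref{lem:J.bound}, \Cref{lem:J.bound.diff}, and the a~priori estimates of \Cref{thm:rough.apriori}, localize on intervals where $\t|\bW|_p+\t|\tilde\bW|_p$ is small to absorb the self-referential contributions, and close with \Cref{prop:gronwall}. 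That is exactly the three-step structure the paper uses. The only small inaccuracy is the remark that $\tilde\omega$ should ``capture $|\xi-\tilde\xi|^p$'': in \Cref{prop:gronwall} the initial value $|\bz_0|=|\xi-\tilde\xi|$ enters through its own additive term in the conclusion, not through $\tilde\omega$; only $\rho_p(\bW,\tilde\bW)^p$ (and the various $\varepsilon$, $E$ controls it bounds) should feed into $\tilde\omega$. This does not change the outcome, since the Grönwall bound already carries $|\bz_0|$ alongside $\max_j\tilde\omega^{1/\rho}_{0,j}(\ldots)$, but it is worth being precise in the write-up.
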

\begin{proof}
	We divide the proof in several steps.
	Below we denote
	\begin{align*}
		\Delta\bx_k&\coloneq\bx_k-\tilde\bx_k\\
		\Delta J^\mu_{k,l}&\coloneq J^\mu_{k,l}-\tilde J^\mu_{k,l}\\
		\Delta I_{k,l}&\coloneq I_{k,l}-\tilde I_{k,l}
	\end{align*}
	and we consider the controls
	\begin{align*}
		\varepsilon_{k,l}&\coloneq \|\bw-\tilde\bw\|^p_{p;[k,l]}\\
		\omega_{k,l}&\coloneq \|\bx\|_{p;[k,l]}^p+\|\tilde\bx\|_{p;[k,l]}^p\\
		E_{k,l}&\coloneq\|\bbW-\tilde\bbW\|^{p/2}_{p/2;[k,l]}.
	\end{align*}
	We also assume, without loss of generality, that \(\|f\|_{\cC^3_\b}\le 1\).
	\begin{enumerate}[label=Step \arabic*., font=\bfseries]
		\item We estimate the difference of the remainders \(R\) and \(\tilde R\) as defined in \cref{eq:rough.remainder} via the Sewing Lemma.
			To this end, consider the germ
			\[
				\Xi_{k,l}\coloneq \sum_{\mu=1}^df_\mu(\bx_k)\bw^\mu_{k,l}+\sum_{\mu,\nu=1}^d
				F_{\mu\nu}(\bx_k)\bbW^{\mu\nu}_{k,l}-\sum_{\mu=1}^df_\mu(\tilde\bx_k)\tilde\bw^\mu_{k,l}-\sum_{\mu,\nu=1}^d
				F_{\mu\nu}(\bx_k)\tilde\bbW^{\mu\nu}_{k,l}.
			\]
			A standard calculation, using Chen's identity \Cref{thm:chen} yields
			\begin{align*}
				\delta\Xi_{k,l,m}&= \begin{multlined}[t]-\sum_{\mu=1}^d(f_\mu(\bx_l)-f_\mu(\bx_k))\bw^\mu_{l,m}+\sum_{\mu,\nu=1}^d\left(
					F_{\mu\nu}(\bx_k)\bw^\mu_{k,l}\bw^\nu_{l,m}- (F_{\mu\nu}(\bx_l)-F_{\mu\nu}(\bx_k))\bbW_{l,m}^{\mu\nu} \right)\\
					+\sum_{\mu=1}^d( f_\mu(\tilde\bx_l)-f_\mu(\tilde\bx_k) )\tilde\bw_{l,m}^\mu-\sum_{\mu,\nu=1}^d\left(
					F_{\mu\nu}(\tilde\bx_k)\tilde\bw^\mu_{k,l}\tilde\bw^\nu_{l,m}-(F_{\mu\nu}(\tilde\bx_l)-F_{\mu\nu}(\tilde\bx_k))\tilde\bbW^{\mu\nu}_{l,m} \right)
				\end{multlined}\\
				&= -\sum_{\mu=1}^dJ^\mu_{k,l}\bw^\mu_{l,m}-\sum_{\mu,\nu=1}^d(F_{\mu\nu}(\bx_l)-F_{\mu\nu}(\bx_k))\bbW^{\mu\nu}_{l,m}+\sum_{\mu=1}^d\tilde
				J^\mu_{k,l}\tilde\bw^\mu_{l,m}+\sum_{\mu,\nu=1}^d(F_{\mu\nu}(\tilde\bx_l)-F_{\mu\nu}(\tilde\bx_k))\tilde\bbW^{\mu\nu}_{l,m}.
			\end{align*}
			Therefore
			\begin{equation*}
				|\delta\Xi_{k,l,m}|\le \begin{multlined}[t]
					\sum_{\mu=1}^d\|\Delta J^\mu\|_{p/2;[k,l]}\|\bw^\mu\|_{p;[l,m]}+\sum_{i=1}^d\|\tilde J^\mu\|_{p;[k,l]}\|\Delta\bw^\mu\|_{p;[l,m]}\\
					+\sum_{\mu,\nu=1}^d\|F_{\mu\nu}(\bx)-F_{\mu\nu}(\tilde\bx)\|_{p;[k,l]}\|\bbW^{\mu\nu}\|_{p/2;[l,m]}+\sum_{\mu,\nu=1}^d\|F_{\mu\nu}(\tilde\bx)\|_{p;[k,l]}\|\bbW^{\mu\nu}-\tilde\bbW^{\mu\nu}\|_{p/2;[l,m]}.
				\end{multlined}
			\end{equation*}
			Hence, by the Sewing Lemma we obtain that
			\begin{equation}
				\label{eq:R.diff.bound}
				|R_{k,l}-\tilde R_{k,l}|\le C_{p,N}\Bigl\{\|\Delta J\|_{p/2;[k,l]}\|\bw\|_{p;[k,l]}+\|\tilde J\|_{p/2;[k,l]}\varepsilon_{k,l}^{1/p}+\|\Delta F\|_{p;[k,l]}\|\bbW\|_{p/2;[k,l]}+\|F(\tilde\bx)\|_{p;[k,l]}E_{k,l}^{2/p}\Bigr\}.
			\end{equation}

		\item We now use the relation \(I_{k,l}=R_{k,l}+\sum_{\mu,\nu=1}^dF_{\mu\nu}(\bx_k)\bbW^{\mu\nu}_{k,l}\) to obtain
			\begin{align*}
				|I_{k,l}-\tilde I_{k,l}|&\le|R_{k,l}-\tilde
				R_{k,l}|+\sum_{\mu,\nu}^d|F_{\mu\nu}(\bx_k)-F_{\mu\nu}(\tilde\bx_k)||\bbW^{\mu\nu}_{k,l}|+\sum_{\mu,\nu=1}^d|F_{\mu\nu}(\tilde\bx_k)||\bbW^{\mu\nu}_{k,l}-\tilde\bbW^{\mu\nu}_{k,l}|\\
				&\le |R_{k,l}-\tilde R_{k,l}|+\|\Delta\bx\|_{\infty;[0,l]}\|\bbW\|_{p/2;[k,l]}+E_{k,l}^{2/p}.
			\end{align*}
			Using \Cref{lem:J.bound,lem:J.bound.diff,eq:R.diff.bound} we see that the right-hand side is bounded by
			\begin{equation*}
				\begin{multlined}
					C_{p,N}\Bigl\{\bigl(\|\Delta I\|_{p/2;[k,l]}+\|\Delta\bx\|_{\infty;[0,l]}(\|\tilde
						I\|_{p/2;[k,l]}+\|\tilde\bx\|^2_{p;[k,l]})+\|\Delta\bx\|_{p;[k,l]}\omega_{k,l}^{1/p}\bigr)\|\bw\|_{p;[k,l]}\\
					+(\|\tilde
					I\|_{p/2;[k,l]}+\|\tilde\bx\|^2_{p;[k,l]})\varepsilon^{1/p}_{k,l}+(\|\Delta\bx\|_{p;[k,l]}+\|\Delta\bx\|_{\infty;[0,l]}\|\tilde\bx\|_{p;[k,l]})\|\bbW\|_{p/2;[k,l]}\\
				+\|\tilde\bx\|_{p;[k,l]}E^{2/p}_{k,l}\Bigr\}+\|\Delta\bx\|_{\infty;[0,l]}\|\bbW\|_{p/2;[k,l]}+E^{2/p}_{k,l}.
				\end{multlined}
			\end{equation*}
			Defining the control
			\[
				Q_{k,l}\coloneq \begin{multlined}[t] 24^{p/2-1}C^{p/2}_p\Bigl\{\bigl(\|\Delta I\|^{p/2}_{p/2;[k,l]}+\|\Delta\bx\|^{p/2}_{\infty;[0,l]}(\|\tilde
					I\|^{p/2}_{p/2;[k,l]}+\|\tilde\bx\|^p_{p;[k,l]})+\|\Delta\bx\|^{p/2}_{p;[k,l]}\omega_{k,l}^{1/2}\bigr)\|\bw\|^{p/2}_{p;[k,l]}\\
					+(\|\tilde
					I\|^{p/2}_{p/2;[k,l]}+\|\tilde\bx\|^p_{p;[k,l]})\varepsilon^{1/2}_{k,l}+(\|\Delta\bx\|^p_{p;[k,l]}+\|\Delta\bx\|^p_{\infty;[0,l]}\|\tilde\bx\|^p_{p;[k,l]})^{1/2}\|\bbW\|^{p/2}_{p/2;[k,l]}\\
				+\|\tilde\bx\|^{p/2}_{p;[k,l]}E_{k,l}\Bigr\}
				\end{multlined}
			\]
			we obtain the bound
			\[
				|\Delta I_{k,l}|\le 3^{1-2/p}\left( Q_{k,l}+\|\Delta\bx\|_{\infty;[0,l]}^{p/2}\|\bbW\|_{p/2;[k,l]}^{p/2}+E_{k,l} \right)^{2/p}.
			\]
			By \Cref{lem:pvar.control} the same bound holds for \(\|\Delta I\|_{p/2;[k,l]}\).
			Now, taking \(k<l\) close enough such that
			\[
				\t|\bW|_{p;[k,l]}\le\frac{1}{2\times 72^{1-2/p}\times C_{p,N}}
			\]
			we see that
			\[
				\|\Delta I\|_{p/2;[k,l]}\le 2\times 72^{1-2/p}C_{p,N}\left\{ \|\Delta\bx\|_{\infty;[0,l]}\tilde
					U^{2/p}_{k,l}+\|\Delta\bx\|_{p;[k,l]}\omega_{k,l}^{1/p}+\tilde
				U_{k,l}^{2/p}\varepsilon_{k,l}^{1/p}+V^{1/p}_{k,l}\|\bbW\|_{p/2;[k,l]}^{1/2}+\|\tilde\bx\|_{p;[k,l]}E^{2/p}_{k,l}\right\}
			\]
			where now
			\begin{align*}
				\tilde U_{k,l}&\coloneq\|\tilde I\|_{p/2;[k,l]}^{p/2}+\|\tilde\bx\|_{p;[k,l]}^p\\
				V_{k,l}&\coloneq \|\Delta\bx\|_{p;[k,l]}^p+\|\Delta\bx\|^p_{\infty;[0,l]}\|\tilde\bx\|_{p;[k,l]}^p
			\end{align*}
			are new controls as well.

	\item We now use the fact that 
		\[
			\bx_{k,l}=I_{k,l}+\sum_{\mu=1}^df_\mu(\bx_k)\bw^\mu_{k,l}
		\]
		to obtain that
		\begin{align*}
			|\Delta\bx_{k,l}|&\le|\Delta I_{k,l}|+|\Delta\bx_k|\sum_{\mu=1}^d|\tilde\bw^\mu_{k,l}|+\sum_{\mu=1}^d|\Delta\bw^\mu_{k,l}|\\
			&\le \|\Delta I\|_{p/2;[k,l]}+\|\Delta\bx\|_{\infty;[0,l]}\|\bw\|_{p;[k,l]}+\varepsilon_{k,l}^{1/p}.
		\end{align*}
		From the previous bound on \(\|\Delta I\|_{p/2;[k,l]}\) we get that
		\[
			|\Delta\bx_{k,l}|\le\begin{multlined}[t]2\times 72^{1-2/p}C_{p,N}\Bigl\{\|\Delta\bx\|_{\infty;[0,l]}\tilde
					U^{2/p}_{k,l}+\|\Delta\bx\|_{p;[k,l]}\omega_{k,l}^{1/p}+\tilde U_{k,l}^{2/p}\varepsilon_{k,l}^{1/p}\\
				+V^{1/p}_{k,l}\|\bbW\|_{p/2;[k,l]}^{1/2}+\|\tilde\bx\|_{p;[k,l]}E^{2/p}_{k,l}+\|\Delta\bx\|_{\infty;[0,l]}\|\bw\|_{p;[k,l]}+\varepsilon_{k,l}^{1/p}\Bigr\}
			\end{multlined}
		\]
		Taking \(p\)-variation we see that
		\[
			\|\Delta\bx\|_{p;[k,l]}\le L_p\begin{multlined}[t]\Bigl\{\|\Delta\bx\|_{\infty;[0,l]}\tilde
					U^{2/p}_{k,l}+\|\Delta\bx\|_{p;[k,l]}\omega_{k,l}^{1/p}+\tilde U_{k,l}^{2/p}\varepsilon_{k,l}^{1/p}+V^{1/p}_{k,l}\|\bbW\|_{p/2;[k,l]}^{1/2}\\
				+\|\tilde\bx\|_{p;[k,l]}E^{2/p}_{k,l}+\|\Delta\bx\|_{\infty;[0,l]}\|\bw\|_{p;[k,l]}+\varepsilon_{k,l}^{1/p}\Bigr\}
			\end{multlined}
		\]
		with \(L_p\coloneq 2\times 7^{1-1/p}\times 24^{1-2/p}\times C_{p,N}\).
		Using again the fact that \(k<l\) are chosen so that \(\t|\bW|_{p;[k,l]}\le L_p^{-1}<1\) , and the a priori estimate in \Cref{thm:rough.apriori} we obtain that
		\[
			\|\Delta\bx\|_{p;[k,l]}\le L_p\|\Delta\bx\|_{\infty;[0,l]}(\tilde U_{k,l}^{2/p}+\t|\bW|_{p;[k,l]}+\t|\tilde\bW|_{p;[k,l]})+L'_p(\varepsilon_{k,l}^2+E_{k,l})^{2/p}.
		\]
		Now, we notice that
		\[
			\tilde U_{k,l}=\|\bx\|_{p;[k,l]}^p+\|\tilde I\|_{p/2;[k,l]}^{p/2}\le (K_p^p+(K_p')^{p/2})\t|\tilde\bW|_{p;[k,l]}^p
		\]
		so that
		\[
			\tilde U_{k,l}^{2/p}\le L_p^{-1}(K_p^2+K_p')\t|\tilde\bW|_{p;[k,l]}.
		\]

		Trivial estimates show that the same bound holds when \(l=k+1\), so by the rough Grönwall lemma and an argument similar to the Young case we obtain
		\[
			\|\Delta x\|_{\infty;[0,N]}\le c'_p e^{c_{p,N}(\t|\bW|_{p;[0,N]}^p+\t|\tilde\bW|_{p;[0,N]}^p)}\left( |\bx_0-\tilde\bx_0|+\rho_p(\bW,\tilde\bW) \right).
		\]
		where
		\[
			c_{p,N}\coloneq 2^pe^{2p}(L_p+K_p^2+K_p')^p,\quad c_{p,N}'=2^{1-2/p}c_{p,N}.\qedhere
		\]
	\end{enumerate}
\end{proof}

\appendix
\section{Comparison of architectures}
\label{sec:comp-arch}
In this section we show that the residual architectures mentioned in the introduction, namely
\begin{align}
	\by_{k+1}&= \by_k+\sigma(\by_k,\theta_k),\label{eq:arch.1}\\
	\bx_{k+1}&= \bx_k+\sum_{\mu=1}^df_\mu(\bx)\bw^\mu_{k,k+1}\label{eq:arch.2}
\end{align}
are related to each other, in the sense that any evolution represented by \cref{eq:arch.1} may be obtained as a projection of the evolution governed
by \eqref{eq:arch.2}.
Note that we allow the dimensions of the noises, as well as the vector fields, to be different in each architecture.
Consider the maps
\[\mathfrak Y\colon\R^m\times C(\R^m\times\mathcal M_{m\times m},\R^m)\times\mathcal M_{m\times m}^N\to\R^m,\quad\mathfrak X\colon\R^n\times C(\R^n,\R^n)^d\times(\R^d)^N\to\R^n\]
defined by
\[
	\mathfrak Y(\by,\sigma,\theta)\coloneq \by_N,\quad\mathfrak X(\bx,f,\bw)=\bx_N
\]
where \(\by_k\) and \(\bx_k\) solve \cref{eq:arch.1,eq:arch.2} respectively, with \(\by_0=\by\), \(\bx_0=\bx\).
The following result draws on ideas by Kidger, Morrill, Foster and Lyons \cite{KMF+2020}.

\begin{proposition}
	Fix \(d=m^2+1\) and \(n=m+d\), and set \(\pi\colon\R^n\to\R^m\) be the projection onto the first \(m\) coordinates. Then the inclusion
	\[
		\mathfrak Y(\R^m\times C(\R^m\times\mathcal M_{m\times m},\R^m)\times\mathcal M_{m\times m}^N)\subset\pi\circ\mathfrak X(\R^n\times C(\R^n,\R^n)^d\times(\R^d)^N)
	\]
	holds.
\end{proposition}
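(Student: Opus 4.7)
The idea is a state-space augmentation that recasts the explicit dependence of $\sigma(\by_k,\theta_k)$ on the layer index (through $\theta_k$) as a time-\emph{homogeneous} controlled difference equation, at the price of enlarging the state to include the weight matrix itself. Given $(\by_0,\sigma,\theta)$ in the domain of $\mathfrak Y$, the plan is to produce $(\bx_0,f,\bw)$ in the domain of $\mathfrak X$ satisfying $\pi\circ\mathfrak X(\bx_0,f,\bw)=\mathfrak Y(\by_0,\sigma,\theta)$ by a fully explicit construction.

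Decompose $\bx\in\R^n$ as $(\bx^{(1)},\bx^{(2)},\bx^{(3)})\in\R^m\times\R^{m^2}\times\R$, identifying $\R^{m^2}\cong\mathcal M_{m\times m}$ via the standard vectorisation $\mathrm{vec}$, with inverse $\mathrm{mat}$. I would take the driving path
\[
	\bw_k\coloneq(\mathrm{vec}(\theta_k),k)\in\R^{m^2+1}=\R^d,
\]
the initial condition $\bx_0\coloneq(\by_0,\mathrm{vec}(\theta_0),0)$, and the vector fields
\[
	f_\mu(\bx)\coloneq(0,e_\mu,0)\quad(\mu=1,\dotsc,m^2),\qquad f_d(\bx)\coloneq\bigl(\sigma(\bx^{(1)},\mathrm{mat}(\bx^{(2)})),\,0,\,1\bigr),
\]
where $e_\mu$ denotes the $\mu$-th standard basis vector of $\R^{m^2}$.

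A routine induction on $k$ then shows that the solution of \eqref{eq:arch.2} with these data satisfies $\bx_k=(\by_k,\mathrm{vec}(\theta_k),k)$. Indeed, each constant field $f_\mu$ ($\mu\le m^2$) transports the scalar increment $(\theta_{k+1}-\theta_k)^\mu$ into the $\mu$-th slot of the $\bx^{(2)}$-block and leaves $\bx^{(1)},\bx^{(3)}$ unchanged, while $f_d(\bx_k)$ weighted by the unit increment $\bw^d_{k+1}-\bw^d_k=1$ contributes $\sigma(\by_k,\theta_k)$ to $\bx^{(1)}$ and $1$ to $\bx^{(3)}$, producing exactly the desired update. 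Projecting onto the first $m$ coordinates yields $\pi\bx_N=\by_N$. There is no genuine obstacle; the only points worth verifying are (i) that $f=(f_1,\dotsc,f_d)\in C(\R^n,\R^n)^d$, which is immediate since the $f_\mu$ with $\mu\le m^2$ are constant and $f_d$ inherits continuity from $\sigma$ through linear identifications, and (ii) that the dimension budget matches exactly: the $m^2$ channels of $\bw$ carry the entries of $\theta$, the $(m^2+1)$-th channel produces the constant unit increment that triggers $\sigma$, and the extra state coordinate $\bx^{(3)}$ absorbs that increment without feeding back into the $\by$ dynamics.
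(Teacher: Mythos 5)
Your construction is essentially the same state-space augmentation used in the paper: carry $(\by_k,\theta_k,k)$ in the enlarged state, drive with the flattened weights plus a time channel, and let $f_d$ apply $\sigma$ and increment the clock. If anything, your version is a little more careful than the printed proof, which writes the driving path as a cumulative sum of the $\theta_j$ (a slip --- that would not give $\tilde\pi(\bx_k)=\theta_k$) and leaves the initial value of the $\theta$-block implicit, whereas you correctly set $\bw_k=(\mathrm{vec}(\theta_k),k)$ and $\bx_0=(\by_0,\mathrm{vec}(\theta_0),0)$.
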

The content of this is result is that we may emulate the non-linear evolution of \cref{eq:arch.1} by a linear control system of greater dimension as
in \cref{eq:arch.2}.
Since \(\pi\) is a Lipschitz map, this has no repercussion for our estimates.
Therefore, it suffices to study systems linear in the control.
\begin{proof}
	Let \(\bw\) be the \((m^2+1)\)-dimensional noise obtained by flattening of the \(\theta\) matrices and adding a time component, i.e., for
	\(\mu\in\{1,\dotsc,m^2\}\) set
	\[
		\bw^\mu_k=\sum_{j=0}^{k}\theta_j(\lfloor\mu/m\rfloor,\mu\mod m)
	\]
	and \(\bw^{d}_k=k\).
	Let \(\tilde\pi\colon\R^{n}\to\R^d\) denote the projection onto the last \(d\) coordinates and let \(e_1,\dotsc,e_{n}\) denote the standard basis of
	\(\R^{n}\).	
	Define the vector fields \(f_i\colon\R^{n}\to\R^{n}\) by
	\begin{align*}
		f_\mu(\bx)&\coloneq e_{m+\mu},&\mu&=1,\dotsc,d-1\\
		f_{d}(\bx)&\coloneq e_{m+d}+\sum_{\nu=1}^m\sigma_\nu(\pi(\bx),\tilde\pi(\bx))e_\nu.
	\end{align*}
	Therefore, the corresponding solution to \cref{eq:arch.2} satisfies
	\begin{align*}
		\bx_{k+1}&= \bx_k+\sum_{\mu=1}^df_\mu(\bx_k)(\bw_{k+1}^\mu-\bw_k^\mu)\\
		&= \bx_k+\sum_{\nu=1}^m\sigma_\nu(\pi(\bx_k),\tilde\pi(\bx_k))e_\nu+\sum_{\mu=1}^d(\bw_{k+1}^\mu-\bw_k^\mu)e_{m+\mu}
	\end{align*}
	In particular
	\[
		\tilde\pi(\bx_{k+1})=\tilde\pi(\bx_k)+\theta_{k+1}-\theta_k,
	\]
	that is, \(\tilde\pi(\bx_k)=\theta_k\).
	Therefore,
	\[
		\pi(\bx_{k+1})=\pi(\bx_k)+\sigma(\pi(\bx_k),\theta_k)
	\]
	so that, if we pick an initial condition \(\bx\in\R^n\) such that \(\pi(\bx)=\by\in\R^m\) we immediately see that
	\(\bx_N=\by_N\).
\end{proof}
\bibliographystyle{arxiv2}
\bibliography{deeplearn}
\end{document}